\documentclass{article}

\usepackage[preprint]{neurips_2026}

\usepackage[utf8]{inputenc} % allow utf-8 input
\usepackage[T1]{fontenc}    % use 8-bit T1 fonts
\usepackage{hyperref}       % hyperlinks
\usepackage{tabularx}
\usepackage{wrapfig} % in preamble
\usepackage{url}            % simple URL typesetting
\usepackage{booktabs}       % professional-quality tables
\usepackage{amsfonts}       % blackboard math symbols
\usepackage{nicefrac}       % compact symbols for 1/2, etc.
\usepackage{microtype}      % microtypography
\usepackage{xcolor}  
\usepackage{microtype}
\usepackage{graphicx}
\usepackage{subcaption}
\usepackage{booktabs} % for professional tables
\usepackage{algorithm}
\usepackage{algpseudocode}
\usepackage{booktabs}
\usepackage{multirow}
\usepackage{amssymb}
\usepackage{pifont}
\newcommand{\xmark}{\ding{55}}
\usepackage{amsmath}
\usepackage{amssymb}
\usepackage{mathtools}
\usepackage{amsthm}
\usepackage[utf8]{inputenc} % allow utf-8 input
\usepackage[T1]{fontenc}    % use 8-bit T1 fonts
\usepackage{hyperref}       % hyperlinks
\usepackage{url}            % simple URL typesetting
\usepackage{booktabs}       % professional-quality tables
\usepackage{amsfonts}       % blackboard math symbols
\usepackage{nicefrac}       % compact symbols for 1/2, etc.
\usepackage{microtype}      % microtypography
\usepackage{xcolor}         % colors
\usepackage{amsmath}
\usepackage{subcaption}
\usepackage{graphicx}
\usepackage{amsmath, amssymb, amsfonts, amsthm}
\usepackage{algpseudocode}
\usepackage{setspace}
\usepackage{multirow}
\usepackage{algorithm}
\usepackage{amsthm}

\usepackage{booktabs}
\usepackage{multirow}
\usepackage{amssymb}
\usepackage{pifont}

\newtheorem{theorem}{Theorem}[section]
\newtheorem{lemma}[theorem]{Lemma}
\newtheorem{proposition}[theorem]{Proposition}
\newtheorem{corollary}[theorem]{Corollary}

\newtheorem*{theorem_1_restated}{Theorem~\ref{thm: theorem_1} (Formal)}
\newtheorem*{theorem_2_restated}{Theorem~\ref{thm: theorem_2} (Formal)}
\newtheorem*{theorem_3_restated}{Theorem~\ref{thm: theorem_3} (Formal)}
\newtheorem*{theorem_4_restated}{Proposition~\ref{thm: theorem_4} (Formal)}
\newtheorem*{theorem_5_restated}{Theorem~\ref{thm: theorem_5} (Formal)}
\newtheorem*{theorem_6_restated}{Corollary~\ref{thm: corollary_1} (Formal)}

\theoremstyle{definition}

\theoremstyle{remark}

\definecolor{ForestGreen}{cmyk}{0.864, 0.0, 0.429, 0.396}
\definecolor{Green}{cmyk}{1.0, 0.0, 1.0, 0.5}
\definecolor{Brown}{rgb}{0.59,0.29,0.0}
\definecolor{Blue}{rgb}{0.0,0.0,1.0}
\definecolor{Orange}{rgb}{1.0,0.5,0.0}
\definecolor{Red}{rgb}{1.0,0.0,0.0}

\newif\ifcomments
\commentstrue    % set to \commentsfalse to hide all notes

\newcommand\shortsection[1]{\vspace{3pt}{\noindent\textbf{#1.}}}

\usepackage{amsmath,amsfonts,bm}

\def\eqref#1{equation~\ref{#1}}
\def\1{\bm{1}}

\DeclareMathAlphabet{\mathsfit}{\encodingdefault}{\sfdefault}{m}{sl}
\SetMathAlphabet{\mathsfit}{bold}{\encodingdefault}{\sfdefault}{bx}{n}

\usepackage[textsize=tiny]{todonotes}

\usepackage{hyperref}
\title{SplineFlow: Flow Matching for Dynamical Systems with B-Spline Interpolants}

\author{
  Santanu Subhash Rathod \\
  CISPA Helmholtz Center for\\
  Information Security\\
  Saarbrücken, Germany \\
  \texttt{santanu.rathod@cispa.de}
  \And
  Pietro Li\`{o} \\
  Department of Computer Science\\
  and Technology\\
  University of Cambridge, UK \\
  \texttt{pl219@cam.ac.uk}
  \And
  Xiao Zhang \\
  CISPA Helmholtz Center for\\
  Information Security\\
  Saarbrücken, Germany \\
  \texttt{xiao.zhang@cispa.de}
}
\begin{document}

\maketitle

\begin{abstract}
Flow matching is an emerging, scalable generative framework for characterizing continuous normalizing flows with wide-range applications. However, state-of-the-art methods are not well-suited for modeling dynamical systems, as they construct conditional paths that are restricted to linear interpolants, a suboptimal supervision signal, which may not capture the system's underlying state evolution. Moreover, constructing unified paths to satisfy multi-marginal constraints across observations is challenging, since naïve higher-order polynomials tend to be unstable and oscillatory. To address these limitations, we introduce SplineFlow, a theoretically grounded flow matching algorithm that jointly models conditional paths across observations via B-spline interpolation. Specifically, SplineFlow exploits the smoothness and stability of B-spline bases to learn the complex underlying dynamics in a structured manner while ensuring the multi-marginal requirements are met. Comprehensive experiments across deterministic and stochastic dynamical systems under various configurations, as well as cellular trajectory inference tasks, demonstrate that SplineFlow outperforms existing baselines, especially when the underlying dynamics are of higher degree or in irregular sampling regimes. Our code is available at: \url{https://github.com/santanurathod/SplineFlow}.
\end{abstract}

% Our code is available at: \url{https://github.com/santanurathod/SplineFlow}.
\section{Introduction}
Flow matching \citep{8_lipman_FM} is a generative framework with growing applications across a wide variety of domains, including materials science \citep{3_luo_crystalflow}, language modeling \citep{4_gat_discreteFM}, single-cell biology~\citep{12_rohbeck_FM}, and life science \citep{1_FM_Biology_Survey}.
The wide adoption results from the simplicity, scalability, and generality of its training process compared with alternative frameworks, such as score-based diffusion \citep{5_song_scorebaseddiffusion} and continuous normalizing flows (CNFs) \citep{6_chen_CNF, 7_rubanova_latentode}. % Flow matching trains the velocity field of the underlying generative dynamics by \textit{regressing} over a tractable conditional velocity field in a simulation-free manner, as opposed to several CNF-based techniques \citep{6_chen_CNF, 7_rubanova_latentode} that require expensive simulations for training.
In particular, unlike CNF-based methods that require expensive simulations, flow matching trains the velocity field of the generative dynamics in a simulation-free manner. At inference time, sampling involves forward ODE integration steps, which are more efficient than the denoising steps of score-based diffusion. Since accurately characterizing the velocity field lies at the heart of modeling dynamical systems, flow matching naturally provides a scalable probabilistic framework for efficiently learning the dynamics of the underlying system. 

The training objective of flow matching is a regression loss over conditional velocities, making the construction of conditional paths critical, as they resemble the \textit{supervision signal}.
These paths are typically constructed by linearly interpolating observed data points \citep{8_lipman_FM,13_zhang_TFM}, which enables efficient generation of high-fidelity samples with far fewer function evaluations \citep{29_straightflows_pooladian}. In these works, the primary emphasis is on the quality of the generated samples rather than on whether the learned dynamics meaningfully capture the underlying ground truth. To faithfully model dynamical systems, the learned neural velocity field must closely match the underlying dynamics throughout the time horizon, rather than merely generating valid samples at observed time points. Therefore, linear interpolants are a restrictive design choice, particularly suboptimal when the underlying dynamics are nonlinear, high-curvature, or contain exponential or sinusoidal terms, necessitating more flexible conditional path designs. 
However, constructing higher-degree conditional paths that are stable, scalable, and easily integrated with existing frameworks remains challenging, since higher-degree polynomial paths that satisfy all marginal observations tend to be oscillatory, a behavior known as Runge’s phenomenon \cite{14_epperson_rungephenomenon}.
While prior work has improved the vanilla design for modeling cellular dynamics~\citep{11_tong_MOTFM,2_rathod_contextflow}, the emphasis has been on better designs of the coupling scheme while still relying on linear paths. 

We present SplineFlow, a flow matching algorithm that constructs conditional probability paths using B-splines (Algorithm \ref{alg:splineflow}), a 
class of spline functions created using the Cox-de Boor recursion formula~\citep{15_boor_splinesbook}.
Specifically, we explain the limitations of linear conditional paths for modeling dynamical systems and highlight the advantages of B-spline interpolants, including lower approximation errors, greater flexibility, and improved stability (Section \ref{sec:linear vs. B-spline}). Theoretically, we show how SplineFlow integrates B-spline conditional paths into the flow matching framework, ensuring the regressed velocity field satisfies the marginal distributions and a more accurate characterization of the underlying dynamics (Section \ref{sec:splineflow algorithm}). We also extend our spline-based method to model stochastic dynamics with additive noise by regressing on both the drift and the score using simulation-free Schrödinger Bridges (SF2M)~\citep{20_tong_SF2M}. 
Comprehensive experiments across simulations of deterministic and stochastic dynamics with varying complexities, as well as real-world unpaired cellular trajectories, demonstrate the advantages of SplineFlow over baselines, especially pronounced for nonlinear and oscillatory systems or when sampling is irregular (Section \ref{sec:experiments}).

In summary, our contributions are as follows:
\begin{itemize}
    \setlength{\itemsep}{0pt}
    \setlength{\topsep}{0.1pt}
    \vspace{-0.05in}
    \item Using B-splines to construct conditional paths, we develop SplineFlow, an efficient and effective flow matching framework for modeling deterministic dynamical systems. We also present a natural SF2M extension of SplineFlow for handling stochastic dynamics. 

    \item By leveraging functional and first-derivative approximation error rates of B-splines, we analyze the theoretical advantages of SplineFlow in terms of asymptotic dynamics approximation error and sample generation quality compared to linear interpolants.
    
    \item 
    % We evaluate SplineFlow across ODE dynamical systems, their SDE counterparts,
    % and cellular trajectory inference tasks. 
    The improvement of SplineFlow over baselines is most pronounced for nonlinear and oscillatory ODE systems under irregular sampling, whereas for SDEs, the gains are strongest for nonlinear systems. Compared with adjoint-based methods, SplineFlow preserves performance while being significantly faster. For cellular trajectory inference, SplineFlow outperforms baselines under interpolation while remaining competitive under extrapolation.
\end{itemize}

    % \item Theoretically, we prove the \add{functional and first derivative} approximation error bounds of B-spline interpolants with comparisons to linear ones, showing that linear interpolants are a special case. \add{We also prove the theoretical advantages of SplineFlow in learning the true underlying system dynamics and trajectory generation.}
    % We also demonstrate the validity of B-spline probability paths relative to existing flow matching frameworks, as well as the form of the associated conditional velocity field used during training.
    
    % Additionally, SplineFlow computes spline parameters per trajectory before training, inducing negligible overhead.
    
\section{Related Work}

% \subsection{Continuous Generative Models}

\shortsection{Flow Matching}
Flow matching \citep{8_lipman_FM, 9_Vanden_FM, 10_Xingchao_FM} is a probabilistic modeling framework for characterizing the generative dynamics by regressing a neural network on tractable conditional velocity fields, which bypasses expensive simulations as required in previous continuous normalizing flow techniques \citep{27_neural_ODE, 25_JMLR_normalizing_flows}. Linear velocity fields are typically used in flow matching, as they have been shown to achieve faster training convergence and greater efficiency in function evaluations than other choices, such as diffusion paths \citep{8_lipman_FM,10_Xingchao_FM}. Notably, linear paths between noise and observations using optimal-transport couplings have resulted in even better performances \citep{29_straightflows_pooladian}. However, it is important to note that these techniques primarily optimize for endpoint generation fidelity and efficiency, and not for faithful characterization of intermediary dynamics.

\shortsection{Score Matching}
% Generative tasks usually involve transforming a tractable distribution into a desired distribution representing observations. Flow-Matching techniques are used when the underlying dynamics of the transformations are modeled deterministically, while score based techniques are used to model the transformations stochastically. 
In score-based diffusion~\citep{5_song_scorebaseddiffusion, 31_diffusion_nicol}, stochastic transformations are modeled by reversing a tractable Gaussian-noising process, which results in a score-dependent denoising velocity field. While capable of generating complex, high-dimensional data, its application to modeling stochastic dynamics is limited by its reliance on the Gaussian-noising steps. Stochastically transforming two arbitrary distributions between each other is well known as the Schrödinger Bridge problem \citep{31_schrodingerbridge_leonard,32_SB_schrodinger} with several applications in generative modeling \citep{30_schrodingerbridge_bortoli,33_SB_vargas}. However, the iterative methods for their approximate solutions \citep{30_schrodingerbridge_bortoli,34_SB_Bunne} suffer from instability and numerical issues, limiting their applicability. To bridge these gaps, \cite{20_tong_SF2M} proposed SF2M (score and flow matching), which models the stochastic dynamics by regressing probability flow velocity fields and probability scores over conditional probability paths modeled via Brownian bridges. Similar to flow matching, it again uses linear interpolants to construct the conditional paths.

\shortsection{Modeling Dynamical Systems}
Although the underlying state transformations of deterministic systems can be modeled via CNFs~\citep{27_neural_ODE}, these methods require computing an adjoint state via forward simulation at each iteration, which limits scalability. Similar requirements arise when modeling stochastic dynamics \citep{35_neuralsde_kidger} or dynamics from irregularly sampled observations \citep{7_rubanova_latentode}. Flow matching addresses the above scalability issue; however, most efforts on the dynamical front have been limited to solutions for modeling processes underlying unpaired datasets, such as transcriptomic trajectories \cite{1_FM_Biology_Survey}. 
For instance, \citet{11_tong_MOTFM} constructed the conditional paths between temporal transcriptomic samples by linearly interpolating between minibatch optimal transport (OT) couplings, while \citet{2_rathod_contextflow} achieved improvements by incorporating biological priors using spatial omics data. 
\citet{13_zhang_TFM} employed flow matching for modeling dynamical systems, but they still rely on piecewise linear conditional paths, with no provision for faithfully characterizing the intermediary system dynamics.
Recently, \citep{12_rohbeck_FM} proposed multi-marginal OT-couplings with cubic splines. However, while the method enables smooth interpolation, it is primarily motivated by minimum-curvature considerations rather than faithful approximation of the underlying system dynamics, and lacks rigorous guarantees regarding spline selection and the flexibility to convert to lower-order or higher-order polynomials when needed. Different from these works, SplineFlow is the first method to demonstrate the efficacy of modeling dynamics across degrees of complexity using well-motivated B-spline interpolants. 

\section{Motivating B-Splines for Dynamic Modeling}
\label{sec:splineflow modeling dynamical systems}

% The evolution of the random variable $x_{t}$ is characterized from the source distribution at $t = 0$ to the target distribution at $t = 1$. 

\subsection{Problem Setup}

Let $x_t$ be the state variables of the underlying dynamics at time $t$ governed either by an ODE $dx_{t} = u_{t}(x_{t})dt$ if the system is deterministic, or by a SDE $dx_{t} = u_{t}(x_{t})dt + g(t)d{w}_{t}$ if stochastic.
Here, $u_t(\cdot)$ is the underlying velocity or drift, $g(t)$ stands for the diffusion schedule, and $w_t$ is the standard Wiener process. 
Let $\{X_i\}_{i=0}^{N-1}$ be a collection of $N$ training trajectories that follow the underlying system dynamics but with different initial values, where $X_i = [ x_i(t_0^i), x_i(t_1^i), \ldots, x_i(t_{n_i}^i) ]$ is the $i$-th trajectory, $x_i(t_j^i)\in\mathbb{R}^d$ denotes the $j$-observation, $x_i(t_0^i)$ stands for the initial value, and $0 = t_0^i < t_1^i < \cdots < t_{n_i}^i = 1$ denote the observed timestamps.
We use $p\in[0, 1]$ to denote the degree of sampling irregularity (or sparsity), where a fraction $p$ of observations are randomly masked.
The task is to learn a velocity network $u_{\theta}$ (and also a score network $s_{\phi}$ for stochastic systems), such that for any testing trajectory $X$, it minimizes the MSE across any sampled time point $t'$ between the ground-truth observation $x_{t'}$ and the predicted observations by integrating from the initial value $x_{t_0}$. 
Formally, if the underlying system is defined by an ODE, the goal is to minimize:
\begin{align}
\label{eq: ode objective}
    \min_{\theta} \: \mathbb{E} \bigg[ \Big\| x_{t_0} + \int_{t_0}^{t'} u_{\theta}(x_{t}, t) dt - x_{t'} \Big\|^2 \bigg]. 
\end{align}
Otherwise, if the underlying system is stochastic and determined by an SDE, we minimize:
\begin{align}
\label{eq: sde objective}
    \min_{\theta, \phi} \: \mathbb{E} \bigg[ \Big\| x_{t_0} + \int_{t_0}^{t'} \Big( (u_{\theta}(x_{t}, t) + \frac{1}{2} g(t)^2 s_{\phi}(x_{t}, t))dt+g(t)dw_{t} \Big) - x_{t'} \Big\|^2 \bigg].
\end{align}
See Appendix \ref{section: Flow And Score Matching} for detailed derivations of why minimizing Equation \ref{eq: sde objective} is desirable for SDEs.

% We defer the preliminaries discussing \textit{matching} based on ODE and SDE formulations to the Appendix~\ref{sec:preliminaries}.

\subsection{Limitation of Linear Interpolants}

While flow matching is primarily designed to optimize endpoint data generation (i.e., only considering $t=1$), it has been extended to model the state evolution of dynamical systems~\citep{13_zhang_TFM, 11_tong_MOTFM}, where observations are made at multiple time points across different trajectories. 
In prior literature on flow matching, $u_{\theta}$ is typically trained by regressing on conditional velocity fields that generate Gaussian conditional paths, using \textit{linear interpolants} for each consecutive observation pair.
Specifically, for any $t\in[t^{i}_{j}, t^{i}_{j+1})$, the target conditional probability path is constructed as:
\begin{align}
\label{eq:multi-time-sampling-path}
    p_{t}(x | z) = \mathcal{N}\!\left(
    x; \mu_t(z), \sigma^2 I
    \right), \text{ where }  \mu_t(z) = \frac{t^{i}_{j+1} - t}{t^{i}_{j+1} - t^{i}_{j}} x_{i}(t^{i}_{j})
    + \frac{t - t^{i}_{j}}{t^{i}_{j+1} - t^{i}_{j}} x_{i}(t^{i}_{j+1}),
\end{align}
where $z= (x_{i}(t^{i}_{j}), x_{i}(t^{i}_{j+1}))$ denotes a latent variable, and $\sigma>0$ is a small constant representing the standard deviation.
According to the construction, the associated conditional velocity field takes a simple form of piecewise functions: for any $t\in[t^{i}_{j}, t^{i}_{j+1})$, $u_t(x | z) = ( x_{i}(t^{i}_{j+1}) - x_{i}(t^{i}_{j} )) / (t^{i}_{j+1} - t^{i}_{j})$.
% $$
% u_t(x | z) = \frac{x_{i}(t^{i}_{j+1}) - x_{i}(t^{i}_{j})}{t^{i}_{j+1} - t^{i}_{j}}, \:\: \forall t\in[t^{i}_{j}, t^{i}_{j+1}).
% $$
Flow matching algorithms are then designed to minimize the following regression loss:
\begin{align}
\label{eq: conditional flow matching loss main}
    \min_{\theta} \: \mathbb{E}_{t \sim \mathcal{U}(0,1), z \sim q(z), x \sim p_t(x | z)} \big\| u_\theta(t, x) - u_t(x | z) \big\|^2,
\end{align}
where $\mathcal{U}(0,1)$ is the uniform distribution over $[0,1]$, and $q(z)$ refers to the joint distribution, usually corresponding to either independent~\citep{8_lipman_FM} or OT-based couplings~\citep{29_straightflows_pooladian}, for the consecutive time points $(t_j, t_{j+1})$ such that the sampled $t$ falls inside. Full discussions on the preliminaries of flow matching and their extensions to SDEs are provided in Appendix \ref{sec:preliminaries}. 

Since linear probability paths interpolate between observational points, the trained velocity field will induce a marginal distribution that is approximately similar to the observational distribution, thus modeling the underlying dynamics. However, while linear paths connect observations at endpoints, they cannot approximate the intermediary functional values well, 
and in particular, often miss out on curvature and induce significant bias, as illustrated in Figure \ref{fig: linear_vs_bspline_interpolation}.

\begin{figure}[t]
  \centering
  \begin{subfigure}[t]{0.32\textwidth}
    \includegraphics[width=\linewidth]{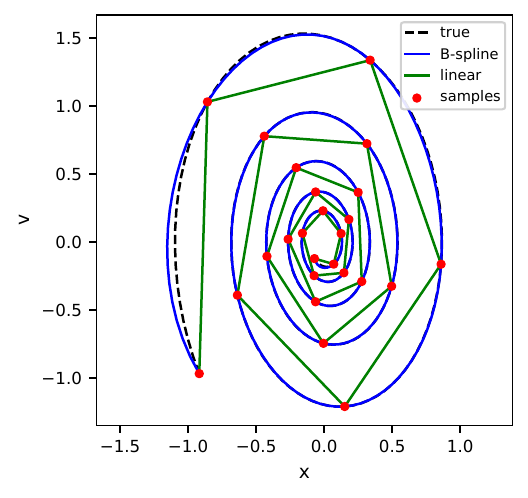}
    \caption{Phase Space}
    \label{fig: dh_phase}
  \end{subfigure}\hfill
  \begin{subfigure}[t]{0.32\textwidth}
    \includegraphics[width=\linewidth]{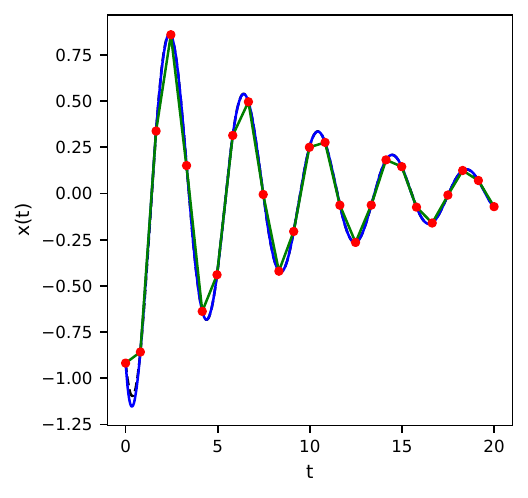}
    \caption{Position}
    \label{fig: dh_position}
  \end{subfigure}\hfill
  \begin{subfigure}[t]{0.32\textwidth}
    \includegraphics[width=\linewidth]{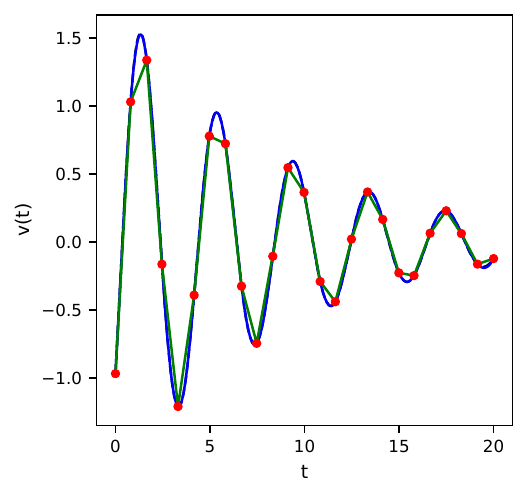}
    \caption{Velocity}
    \label{fig: dh_velocity}
  \end{subfigure}
  \vspace{-0.05in}
  \caption{Visualizations of interpolated trajectories from a damped harmonic oscillator.}
  \vspace{-0.1in}
  \label{fig: linear_vs_bspline_interpolation}
\end{figure}

\subsection{Advantages of B-Spline Interpolants}
\label{sec:linear vs. B-spline}

% \vspace{-1em}
% Flow Matching \citep{8_lipman_FM} is a promising candidate for approximating the underlying dynamics, since it is essentially a faster way to train continuous normalizing flows \citep{6_chen_CNF} that generate the observational data. 

% However, from Figure \ref{fig: linear_vs_bspline_interpolation}, which shows a trajectory (x(t), v(t)) from a damped harmonic system (Appendix \ref{apdx: dynamical_systems_datasets}), we can clearly see that linearly interpolating observations does not capture the true underlying trajectory, especially when interpolating between points lying on a curvature. 

% \subsection{Advantages of B-spline Interpolants}
% \label{sec:B-spline interpolants}

% In the flow matching framework, constructing valid conditional probability paths essentially boils down to ensuring that the paths satisfy the observational distribution at sampled time points, as shown in Theorem \ref{thm: theorem_3}. When using a Gaussian distribution, this task reduces to representing the mean $\mu_{t}(\cdot)$ as an interpolant function of the observations: $\mu_{t^{i}_{0}}(z)=x^{i}_{t^{i}_{0}}, \cdots, \mu_{t^{i}_{n_{i}}} (z)=x^{i}_{t^{i}_{n_{i}}}$, with $z= (X^{i})$. \xnote{Do we need the above paragraph?}

% Given the above limitations, we propose constructing the conditional paths using B-spline interpolants to faithfully capture the complex underlying system dynamics. 
B-Splines are widely used to model complex shapes and functions across several domains~\citep{16_hasan_bsplineapplications,17_biswas_bsplineapplications,18_li_bsplineapplications}. Notably, B-splines are known to be stable, smooth, and flexible with respect to the choice of polynomial degree, whereas other alternative choices of higher-order interpolants are prone to oscillatory behavior such as Runge's phenomenon \cite{14_epperson_rungephenomenon} and to non-smoothness (see Appendix \ref{apdx: Polynomial Interpolants} for detailed discussions).
Let $[x_{{t_{0}}}, x_{t_{1}}, \cdots, x_{t_{n}}]$ be the $n+1$ points we need to interpolate over. The B-spline interpolant function is built over by recursively combining basis splines using the Cox-De Boor Recursion formula \citep{15_boor_splinesbook}. Specifically, the B-spline bases are defined recursively by:
\begin{equation}
\begin{aligned}
\label{eq: Cox-De Boor Recursion}
    \mathcal{B}_{j,m}(t)
    =
    \frac{t - t_j}{t_{j+m} - t_j}\, \mathcal{B}_{j,m-1}(t) +
    \frac{t_{j+m+1} - t}{t_{j+m+1} - t_{j+1}}\, \mathcal{B}_{j+1,m-1}(t), \:\: \forall m\geq 1,
\end{aligned}
\end{equation}
where the $0$-th degree B-spline basis is given by
$\mathcal{B}_{j,0}=1$ if $t\in[t_{j}, t_{j+1})$, and $\mathcal{B}_{j,0}=0$ otherwise.
The interpolant function $\mu(t)$ of degree $m$ is then a linear combination of $m$-th degree B-splines, such that it recovers the $n+1$ observation points. Formally, $\mu(t)$ is defined as:
\begin{equation}
\label{eq: B-spline interpolant function}
\begin{aligned}
\mu(t)& = \sum_{j=0}^{n} c_{j,m} \mathcal{B}_{j,m} (t), \:\: \text{ s.t. } \:
\mu(t_{j})=x_{t_j}, \: \forall \: j \in \{0,1,\ldots,n\}, \\
% \mu(t)&= [\mu^{1}(t), \cdots, \mu^{d}(t)]
\end{aligned}
\end{equation}
% \vspace{-0.5em}
where $c_{j,m}$ is determined by solving the linear constraints. Note that $\mu(t)$ is continuous by construction and gets increasingly smooth with the degree of the chosen bases \citep{15_boor_splinesbook}. In practice, for data $x \in \mathbb{R}^{d}$, we calculate B-spline values at time $t$ separately for each dimension $d$ and concatenate them while using. 
The following theorem compares the approximation errors, both functional and first-derivative, of higher-degree B-splines with those of linear interpolants.
\begin{theorem}[Informal]
\label{thm: theorem_1}
Assume $f\in\mathcal{C}^{m}[a, b]$ is a function with $m$-th bounded derivatives, and we observe the its value at
$n+1$ equidistant points $\{t_j\}_{j=0}^n$. Let $\mu(t)$ be the B-spline interpolant of degree $m$ and $p_{1}(t)$ be the linear interpolant that fits the observations. Then, we have
\begin{align*}
    \|f(t)-\mu(t)\|_\infty=\mathcal{O}(n^{-m}) \quad &\text{and} \quad \|f(t)-p_{1}(t)\|_\infty=\mathcal{O}(n^{-2}), \\
    \|f'(t)-\mu'(t)\|_\infty=\mathcal{O}(n^{-m+1}) \quad &\text{and} \quad \|f'(t)-p_{1}'(t)\|_\infty=\mathcal{O}(n^{-1}).
\end{align*}
% $\|f(t)-\mu(t)\|_\infty=\mathcal{O}(n^{-m})$ and $\|f(t)-p_{1}(t)\|_\infty=\mathcal{O}(n^{-2})$, along with $\|f'(t)-\mu'(t)\|_\infty=\mathcal{O}(n^{-m+1})$ and $\|f'(t)-p_{1}'(t)\|_\infty=\mathcal{O}(n^{-1})$.
\end{theorem}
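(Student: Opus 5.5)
The plan is to treat the linear and B-spline cases separately. The linear case is elementary. The spline case rests on three standard facts from de Boor's theory: quasi-interpolant approximation, stability of the interpolation operator, and an inverse inequality. Throughout, $h=(b-a)/n$ is the mesh size, so $\mathcal{O}(h^k)=\mathcal{O}(n^{-k})$.

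\emph{Linear interpolant.} On each cell $[t_j,t_{j+1}]$, the Lagrange remainder formula gives
\[
f(t)-p_1(t)=\tfrac12 f''(\xi)(t-t_j)(t-t_{j+1}),
\]
so $\|f-p_1\|_\infty\le \tfrac18 h^2\|f''\|_\infty=\mathcal{O}(n^{-2})$ whenever $m\ge 2$. For the derivative, $p_1'$ on the cell equals the divided difference $f[t_j,t_{j+1}]$, which by the mean value theorem is $f'(\eta)$ for some $\eta$ in the cell. Hence $|f'(t)-p_1'(t)|=|f'(t)-f'(\eta)|\le h\|f''\|_\infty=\mathcal{O}(n^{-1})$.

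\emph{B-spline interpolant.} First, write $I_m$ for the degree-$m$ spline interpolation operator, using uniform knots and interpolation at the $n+1$ nodes, with a standard end-condition choice (e.g.\ not-a-knot) so that the system defining the coefficients $c_{j,m}$ in \eqref{eq: B-spline interpolant function} is uniquely solvable. Second, invoke the Jackson-type estimate for splines. A quasi-interpolant $Q$, built from dual functionals of the basis \eqref{eq: Cox-De Boor Recursion}, reproduces polynomials of degree $m$ and is local and bounded. It therefore satisfies, for $f\in\mathcal{C}^m$,
\[
\|f-Qf\|_\infty\le C h^m\|f^{(m)}\|_\infty,\qquad \|f'-(Qf)'\|_\infty\le C h^{m-1}\|f^{(m)}\|_\infty .
\]
Third, use that $I_m$ reproduces splines, so $I_m Qf = Qf$. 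This gives $f-I_mf=(f-Qf)-I_m(f-Qf)$, and hence $\|f-I_mf\|_\infty\le(1+\|I_m\|_\infty)\,Ch^m\|f^{(m)}\|_\infty$.

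For the derivative, write $f'-(I_mf)'=(f-Qf)'-(I_m(f-Qf))'$. The first term is already $\mathcal{O}(h^{m-1})$. For the second, apply the Bernstein/inverse inequality for splines on a uniform mesh, $\|s'\|_\infty\le C h^{-1}\|s\|_\infty$. This yields $\|(I_m(f-Qf))'\|_\infty\le C h^{-1}\|I_m\|\,\|f-Qf\|_\infty=\mathcal{O}(h^{m-1})$. Together these give the claimed $\mathcal{O}(n^{-m})$ and $\mathcal{O}(n^{-m+1})$ rates.

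The main obstacle is the third step: showing that $\|I_m\|_\infty$ is bounded independently of $n$. For uniform knots this reduces to uniform invertibility of the banded collocation matrix $(\mathcal{B}_{k,m}(t_j))$. For even and odd degree (interior versus midpoint nodes), that matrix is a Toeplitz matrix whose symbol is the Euler–Frobenius polynomial, which has no zeros on the unit circle. Hence the inverse has exponentially decaying entries and bounded $\ell^\infty$ norm, in the spirit of Demko's theorem, and the boundary rows coming from the end conditions are only a finite-rank perturbation. I would cite de Boor's book for this fact rather than reprove it, noting that for $m=3$ it follows directly from diagonal dominance. A secondary subtlety is that the statement is informal: the linear rates need $m\ge2$, and the constants depend on $\|f^{(m)}\|_\infty$ and $\|f''\|_\infty$. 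I would state these hypotheses explicitly in the formal version.
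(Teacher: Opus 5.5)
Your argument is correct. For the two function-value bounds it follows the paper: the same Lagrange remainder for $p_1$ (Lemma~\ref{thm: lemma_1}, Corollary~\ref{cor: corollary_1}), and the same $(1+\|I\|)\operatorname{dist}(f,S)$ reduction plus a Jackson estimate for $\mu$ (Lemmas~\ref{thm: lemma_3} and~\ref{thm: lemma_2}). Where you diverge, your route is the more careful one.

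\begin{itemize}
\item For $p_1'$, the paper differentiates the remainder formula as though $\xi$ did not depend on $t$, which is not justified. Your observation $p_1'=f[t_j,t_{j+1}]=f'(\eta)$ gives the same bound $h\|f''\|_\infty$ rigorously.
\item For $\mu'$, the paper cites a derivative estimate (Lemma~\ref{thm: lemma_5}) asserting only that \emph{some} spline approximation operator achieves $\mathcal{O}(h^{m-1})$, and then applies it to the interpolant directly. You derive the bound for the interpolant itself via $I_mQf=Qf$, the inverse (Markov) inequality on each knot interval, and $\|I_m\|$. That is exactly the link the citation leaves out.
\item The paper keeps $\|I\|$ in its constants and never shows it is independent of $n$; its derivative bound drops $\|I\|$ altogether. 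You correctly identify this as the crux.
\end{itemize}

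Your restriction to midpoint interpolation for even $m$ is necessary, not cosmetic. With knots at the data sites, as in the paper's formal statement, the Euler--Frobenius symbol vanishes at $z=-1$. Already for $m=2$, $\|I\|$ then grows linearly in $n$, and the factor $1+\|I\|$ destroys the rate.

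One point to tighten when you cite de Boor. A finite-rank boundary perturbation of an invertible Toeplitz operator is only Fredholm of index zero. Uniform invertibility of the finite systems also needs the one-sided interpolation problems with your specific end conditions to be uniquely solvable. Check that the reference covers exactly those conditions: not-a-knot for odd $m$, and the midpoint variant for even $m$. The not-a-knot knot sequence is only quasi-uniform near the ends, but that is harmless for both the Jackson and the inverse estimates.
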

Theorem \ref{thm: theorem_1} shows that if a function has bounded $m$-th derivatives, then B-spline interpolants can achieve much lower approximation error compared with linear interpolants.
We provide the formal, complete statement and its detailed proof in Appendix \ref{section: proof of theorem 4.1}.
The degree $m$ is chosen based on the nature of the underlying data. For nonlinear dynamical systems, the best-fit is usually greater than $2$.
% Analogous properties transfer to the first derivatives as well, which ultimately enable better velocity fields as presented in further sections.
While Theorem \ref{thm: theorem_1} is stated for equidistant or regularly sampled data points, B-splines are known to be stable even in the case of irregular sampling~\citep{15_boor_splinesbook,24_boor_irregularbound} as seen in Lemmas~\ref{thm: lemma_3}-\ref{thm: lemma_5}, which are central results in our proofs and are not limited to regularity.
% where the operator norm becomes dependent on the global mesh ratio and is shown to be upper bounded still \cite{15_boor_splinesbook,24_boor_irregularbound}. 

We conduct preliminary experiments to compare estimates obtained with B-spline and linear interpolants for trajectories sampled from a damped harmonic oscillator (see Appendix \ref{apdx:damped harmonic oscillator} for its formal definition). 
Figure \ref{fig: linear_vs_bspline_interpolation} visualizes the comparison results, where 
the practical implications of our theoretical results become quite clear: B-splines are much more adept at capturing the underlying curvature compared to linear interpolants. B-splines also retain the flexibility in terms of degree with which to model, unlike cubic interpolants, which are restricted to the cubic degree.

\begin{theorem}[Informal]
\label{thm: theorem_2}
A linear interpolant is a special case of a B-spline interpolant of degree $1$.
\end{theorem}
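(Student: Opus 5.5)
Our plan is to compute the degree-$1$ B-spline bases explicitly from the Cox--de Boor recursion in \eqref{eq: Cox-De Boor Recursion} and show that the resulting interpolant coincides with the piecewise linear interpolant used in \eqref{eq:multi-time-sampling-path}. We fix the knot sequence as the observation times $t_0<t_1<\cdots<t_n$, augmented with one extra knot at each end ($t_{-1}<t_0$ and $t_{n+1}>t_n$, or the usual clamped repeated knots with the convention $0/0=0$), so that there are $n+1$ degree-$1$ bases $\mathcal{B}_{j-1,1}$, $j=0,\dots,n$, each centered at $t_j$.

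First, we set $m=1$ in the recursion and use $\mathcal{B}_{j,0}=\mathbf{1}_{[t_j,t_{j+1})}$ to obtain the hat function: $\mathcal{B}_{j-1,1}(t)=\frac{t-t_{j-1}}{t_j-t_{j-1}}$ on $[t_{j-1},t_j)$, $\frac{t_{j+1}-t}{t_{j+1}-t_j}$ on $[t_j,t_{j+1})$, and $0$ otherwise. From this formula we read off the cardinal property $\mathcal{B}_{j-1,1}(t_k)=\delta_{jk}$. The interpolation system in \eqref{eq: B-spline interpolant function} therefore has the identity as its collocation matrix, so it is uniquely solvable with $c_{j,1}=x_{t_j}$.

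Next, for $t\in[t_j,t_{j+1})$ only two hats are nonzero, namely those centered at $t_j$ and $t_{j+1}$. Hence
$\mu(t)=\frac{t_{j+1}-t}{t_{j+1}-t_j}x_{t_j}+\frac{t-t_j}{t_{j+1}-t_j}x_{t_{j+1}}$, which is exactly $\mu_t(z)$ in \eqref{eq:multi-time-sampling-path}. Differentiating on each open interval then recovers the conditional velocity $(x_{t_{j+1}}-x_{t_j})/(t_{j+1}-t_j)$. Since we argue coordinatewise, the identity extends to $\mathbb{R}^d$.

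The only delicate step is the bookkeeping at the boundary. We need to handle the endpoint $t=t_n$, where the half-open indicator convention makes the last basis vanish, and to choose the end knots so that the hats centered at $t_0$ and $t_n$ have the correct one-sided slopes. We will handle this by taking the value at $t_n$ as the left limit, the standard convention, and by noting that the extra end knots do not affect $\mu$ on $[t_0,t_n]$. Uniqueness of the interpolant then gives equality as functions on $[t_0,t_n]$.
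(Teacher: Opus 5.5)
Your proposal is correct and follows essentially the same route as the paper's proof. Both place (clamped) knots at the observation times, obtain the degree-$1$ bases from the Cox--de Boor recursion as hat functions, solve the interpolation conditions to get $c_{j,1}=x_{t_j}$, and check that on each $[t_j,t_{j+1})$ the two surviving terms reproduce the linear interpolant. Your cardinal-property argument ($\mathcal{B}_{j-1,1}(t_k)=\delta_{jk}$, giving an identity collocation matrix) and your left-limit convention at $t_n$ make the coefficient and boundary steps cleaner than the paper's clamped-knot bookkeeping, which contains some indexing slips.
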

Theorem \ref{thm: theorem_2} with a formal statement is proven in Appendix \ref{section: proof of theorem 4.2}.
When the underlying function is linear in time, B-spline interpolants are expected to recover the performance of linear interpolants.

\section{SplineFlow: Flow Matching with B-Spline Interpolation}
\label{sec:splineflow algorithm}

% \vspace{-1em}
Motivated by the efficacy, stability, and flexibility of B-splines in Section \ref{sec:linear vs. B-spline}, we propose SplineFlow, which constructs conditional paths in flow matching frameworks via B-spline interpolation. Algorithm \ref{alg:splineflow} in Appendix \ref{apdx:SFAlgorithm} presents the pseudocode of SplineFlow for both deterministic and stochastic cases.

\subsection{Modeling Deterministic Dynamics} 

When the dynamics are governed by an ODE, we model $p(x | z)$ using Gaussian distributions $\mathcal{N}(x; \mu_{t}(z), \sigma_t^{2}I)$, where $\mu_t(z)$ is replaced by B-spline interpolants. 
The following theorem, proven in Appendix~\ref{section: proof of theorem 4.5}, derives an error bound with respect to the neural velocity field learned by SplineFlow.

\begin{theorem}[Informal]
\label{thm: theorem_5}
Let $f(t)$ be the ground-truth ODE dynamics of the system. Then, with equidistant observations, the mean squared error (MSE) between the learned velocity field $u_{\theta}(t,x)$ from SplineFlow and the true velocity field is upper bounded by:
\begin{align*}
\mathbb{E}\big[ \|u_{\theta}(t,x) - f'(t)\| \big]
\le
\sqrt{\text{Training loss}}+\mathcal{O}(n^{-m+1}),
\end{align*}
where $m$ denotes the degree of B-spline functions selected in SplineFlow. 
% Linear baselines correspond to an error rate of $\mathcal{O}(n^{-1})$.
\end{theorem}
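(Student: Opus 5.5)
The plan is to split the error with the triangle inequality, inserting the conditional B-spline velocity $\mu'_t(z)$ that SplineFlow regresses on:
\begin{align*}
\mathbb{E}\big[\|u_\theta(t,x)-f'(t)\|\big]
\le \mathbb{E}\big[\|u_\theta(t,x)-\mu'_t(z)\|\big] + \mathbb{E}\big[\|\mu'_t(z)-f'(t)\|\big].
\end{align*}
Here the expectation is over $t\sim\mathcal{U}(0,1)$, $z\sim q(z)$ and $x\sim p_t(x\mid z)$. The variable $z$ denotes the full trajectory of observations, and $f$ is the ground-truth trajectory whose samples at the equidistant times $t_j$ make up $z$.

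For the first term I would first fix the form of the conditional velocity. With $p_t(x\mid z)=\mathcal{N}(\mu_t(z),\sigma^2 I)$ and $\sigma$ constant, the conditional velocity is $u_t(x\mid z)=\mu_t'(z)$. If $\sigma_t$ varies in time, there is an extra term $\frac{\sigma_t'}{\sigma_t}(x-\mu_t(z))$. I would either assume $\sigma$ is constant, as in the linear setup above, or absorb this extra term into the loss, since it is what the network regresses on anyway. Jensen's (or Cauchy--Schwarz) inequality then gives
\begin{align*}
\mathbb{E}\|u_\theta-\mu'_t\|\le\big(\mathbb{E}\|u_\theta-u_t(x\mid z)\|^2\big)^{1/2},
\end{align*}
which is exactly $\sqrt{\text{Training loss}}$ for the conditional flow matching objective of Equation \ref{eq: conditional flow matching loss main}. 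Here "training loss" means the population conditional loss at the learned $\theta$. If one wants the marginal loss instead, the standard equivalence of conditional and marginal flow matching (up to a $\theta$-independent constant) can be cited from the preliminaries.

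For the second term, the integrand does not depend on $x$. For each fixed $z$ and each coordinate, $\mu_t(z)$ is the degree-$m$ B-spline interpolant of the corresponding coordinate of $f$ at the $n+1$ equidistant nodes. Theorem \ref{thm: theorem_1} (first-derivative part) then gives $\sup_t\|f'(t)-\mu'(t)\|_\infty=\mathcal{O}(n^{-m+1})$. Summing over the $d$ coordinates costs at most a factor of $\sqrt{d}$ or $d$, which is absorbed into the constant. Taking expectations over $t$ and $z$ preserves the bound, provided the constant in Theorem \ref{thm: theorem_1} is uniform over trajectories. That uniformity holds if the $m$-th derivatives of trajectories in the support of $q$ are uniformly bounded, which I would add as a standing assumption in the formal statement.

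The main obstacle is making this uniformity and the identification rigorous. The constant in Theorem \ref{thm: theorem_1} depends on $\|f^{(m)}\|_\infty$ and on the norm of the spline interpolation operator; Lemmas \ref{thm: lemma_3}--\ref{thm: lemma_5} bound the latter independently of $n$ for quasi-uniform meshes. I also need to interpret "$f'(t)$" as the true velocity evaluated along the trajectory, i.e.\ $u_t(x_t)=f'(t)$ at the state $x\approx f(t)$. The Gaussian smoothing by $\sigma$ introduces a mismatch between $x$ and $f(t)$, and I would handle it either by treating the target as $f'(t)$ for the conditioning trajectory, as the statement does, or by adding a Lipschitz term $L\sigma$ that vanishes as $\sigma\to0$. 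Combining the two terms yields the claimed bound $\sqrt{\text{Training loss}}+\mathcal{O}(n^{-m+1})$.
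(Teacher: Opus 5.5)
Your proposal is correct and follows the paper's proof. The paper also inserts $\mu'_t(z)$ via the triangle inequality, using that $u_t(x\mid z)=\mu'_t(z)$ for constant $\sigma$, applies Jensen to bound the regression term by $\sqrt{\text{Training loss}}$, and bounds the approximation term with the first-derivative estimate of Theorem~\ref{thm: theorem_1}. Your extra points (a constant uniform over trajectories in the support of $q$, the coordinatewise factor, and reading $f'(t)$ along the conditioning trajectory) make explicit assumptions the paper leaves implicit; they do not change the argument.
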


Theorem \ref{thm: theorem_5} shows that SplineFlow approximates the ground-truth dynamics better than existing linear-interpolant-based approaches with an error term corresponding to $\mathcal{O}(n^{-1})$.  
In practice, the training loss usually converges to $0$ (see Appendix~\ref{apdx:loss_curves}), and thus, better approximation properties of B-splines translate to better conditional paths whose derivatives are regressed over by the neural network $u_{\theta}$, eventually leading to better prediction of the ground-truth dynamics. 

Theorem \ref{thm: theorem_5} has implications for the sampling or generation process, in which SplineFlow generates samples closer to the ground truth than existing approaches, as shown in the corollary below.

\begin{corollary}[Informal]
\label{thm: corollary_1}
Under the same setting as Theorem~\ref{thm: theorem_5}, the distance between ground truth $x_{t}$ and generated samples $\hat{x}_{t}$ from a trajectory for SplineFlow at any $t\in[a,b]$ is upper bounded by
$$
\mathbb{E} \big[\| x_t - \hat{x}_t\| \big]\le(b-a)\times\sqrt{\text{Training loss}} + \mathcal{O}(n^{-m+1}).
$$
\end{corollary}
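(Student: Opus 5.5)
The plan is to write the ground-truth trajectory and the generated trajectory in integral form and compare them pointwise in time. Fix a trajectory with initial value $x_a$. The truth satisfies $x_t = x_a + \int_a^t f'(s)\,ds$, where $f$ is the ground-truth state as a function of time, so $f'(s)$ is its velocity. The generated sample, obtained by integrating the learned field from the same initial value, satisfies $\hat{x}_t = x_a + \int_a^t u_\theta(s,\hat{x}_s)\,ds$. Subtracting and applying the triangle inequality for Bochner integrals gives
\begin{align*}
\|x_t - \hat{x}_t\| \le \int_a^t \|u_\theta(s,\hat{x}_s) - f'(s)\|\,ds .
\end{align*}
Taking expectations and using Fubini/Tonelli to exchange expectation and time integral reduces the problem to bounding the pointwise expected velocity error. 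That is exactly the quantity controlled by Theorem~\ref{thm: theorem_5}.

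The key steps, in order, are as follows.

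\emph{Step 1.} Invoke the formal version of Theorem~\ref{thm: theorem_5} in its pointwise-in-$s$ form. Its proof decomposes the error as
\begin{align*}
\|u_\theta - f'\| \le \|u_\theta - \mu'\| + \|\mu' - f'\| .
\end{align*}
The first term is controlled by Jensen/Cauchy--Schwarz, $\mathbb{E}\|u_\theta-\mu'\|\le\sqrt{\mathbb{E}\|u_\theta-\mu'\|^2}$, which is $\sqrt{\text{Training loss}}$ when the loss is the regression objective against the B-spline conditional velocity $\mu'_t(z)$. The second term is controlled uniformly by the derivative bound of Theorem~\ref{thm: theorem_1}, giving $\|f'-\mu'\|_\infty=\mathcal{O}(n^{-m+1})$.

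\emph{Step 2.} Integrate over $s\in[a,t]\subseteq[a,b]$. This yields
\begin{align*}
\mathbb{E}\|x_t-\hat{x}_t\| \le (t-a)\sqrt{\text{Training loss}} + (t-a)\,\mathcal{O}(n^{-m+1}) .
\end{align*}
Bounding $t-a\le b-a$ and absorbing the constant $b-a$ into the $\mathcal{O}$ term gives the claimed bound. If the training loss is defined as an average over $s\sim\mathcal{U}(a,b)$ rather than pointwise, I would instead apply Cauchy--Schwarz to the whole time integral:
\begin{align*}
\int_a^b \mathbb{E}\|u_\theta-\mu'\|\,ds \le (b-a)\sqrt{\tfrac{1}{b-a}\int_a^b \mathbb{E}\|u_\theta-\mu'\|^2\,ds} .
\end{align*}
This produces a $\sqrt{b-a}$ rather than a $b-a$ prefactor, which is also fine because $\sqrt{b-a}\le b-a$ when $b-a\ge 1$, and a normalization convention handles the other case.

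The main obstacle is the distributional mismatch. Theorem~\ref{thm: theorem_5} and the training loss are expectations over $x\sim p_t$, that is, along the conditional B-spline paths, whereas Step 1 needs the error evaluated at the \emph{generated} states $\hat{x}_s$. I would first handle this by adopting the same idealization the paper uses for Theorem~\ref{thm: theorem_5}: the learned field is evaluated along the trajectory, so the expectation in that theorem is over $(s,x_s)$ on the trajectory, and the error is a function of time only. In other words, I treat $u_\theta(s,\hat{x}_s)$ as being controlled by the same bound as in Theorem~\ref{thm: theorem_5}.

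A fully rigorous version would instead add and subtract $u_\theta(s,x_s)$. Assuming $u_\theta$ is $L$-Lipschitz in $x$, the extra term is bounded by $L\int_a^t\|x_s-\hat{x}_s\|\,ds$, and Gr\"onwall's inequality then gives the bound multiplied by $e^{L(b-a)}$. I would remark that this factor can be absorbed as a constant under the Lipschitz assumption, and keep the informal statement as written.
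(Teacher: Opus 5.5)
Your proposal follows the paper's proof. Both write $x_t-\hat{x}_t=\int (f'(s)-u_\theta(s,\hat{x}_s))\,ds$, bound its norm by the time length ($\le b-a$) times the velocity error, and then invoke the decomposition of Theorem~\ref{thm: theorem_5}. Your Fubini step makes precise the paper's loose bound $\|x_t-\hat{x}_t\|\le(b-a)\|f'-u_\theta\|$, and your idealization (controlling $u_\theta(s,\hat{x}_s)$ by an expectation over $x\sim p_s(\cdot\mid z)$) is exactly what the paper assumes without saying so.

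One small correction: when the training loss is the average over $t\sim\mathcal{U}(a,b)$, as in the CFM objective, your Cauchy--Schwarz display gives exactly $(b-a)\sqrt{\text{Training loss}}$, not a $\sqrt{b-a}$ prefactor, so no normalization argument is needed. Also, in your Gr\"onwall refinement the factor $e^{L(b-a)}$ multiplies the $\sqrt{\text{Training loss}}$ term as well, so it changes the stated coefficient rather than leaving the statement as written.
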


We present the formal version of Corollary \ref{thm: corollary_1} and its detailed proofs in Appendix \ref{section: proof of corollary 4.6}. 
The following theorem proves that if we employ the B-spline-based construction, we can ensure that (i) the marginal distributions can be recovered from conditional probability paths, (ii) the conditional velocity field for the dynamic case, analogous to the static case as in Equation \ref{eq: conditional probability path} satisfies the continuity equation, and (iii) the conditional velocity field that induces the B-spline probability path has an analytical form.
% Theorem \ref{thm: theorem_3} below proves these rigorously. 

% Most existing Flow Matching frameworks use linear conditional paths and utilize the velocity field to model either data generation dynamics going from noise $(t=0)$ to data $(t=1)$ \citep{8_lipman_FM,9_Vanden_FM} and pairwise uncoupled dynamics \citep{11_tong_MOTFM,2_rathod_contextflow}. Trajectory Flow Matching \citep{13_zhang_TFM} comes closest to our work in terms of modeling dynamics of underlying systems from the observed trajectories, however, it too utilizes linear conditional paths that may not be suitable for higher-degree and nonlinear dynamics as we've discussed before and as we'll show in our experiments. 

\begin{theorem}[Informal]
\label{thm: theorem_3}
Let $\mu_t(\cdot)$ be the B-spline interpolant of degree $m$ and the conditional probability path be $p_t(x | z)=\mathcal{N}(x;\mu_{t}(z), \sigma_t^2 I)$ with $\sigma_t(z) = \sigma \to 0$.
Then, the induced marginal $p_t$ recovers the data marginals and satisfies the continuity equation, with a conditional velocity field:
$$
u_t(x| z)
=
\sum_{j=1}^{n} m \cdot c_{j,m}
\left(
\frac{\mathcal{B}_{j,m-1}(t)}{t_{j+m}-t_j}
-
\frac{\mathcal{B}_{j+1,m-1}(t)}{t_{j+m+1}-t_{j+1}}
\right).
$$
\end{theorem}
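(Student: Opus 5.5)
The plan follows the standard conditional flow matching argument of \citet{8_lipman_FM}, with the affine map built from the B-spline interpolant; there are three parts.

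\textbf{Marginal recovery.} Fix the latent variable $z=X^i$, the full observed trajectory, with $z\sim q(z)$ the empirical distribution over training trajectories. The interpolant of Equation~\ref{eq: B-spline interpolant function} satisfies $\mu_{t_j}(z)=x_i(t_j)$ at every knot. Hence, as $\sigma\to 0$, $p_{t_j}(x|z)=\mathcal{N}(x;x_i(t_j),\sigma^2 I)$ converges weakly to $\delta_{x_i(t_j)}$. Integrating against $q(z)$ shows that the marginal $p_{t_j}(x)=\int p_{t_j}(x|z)q(z)\,dz$ converges to the empirical data marginal at each observation time.

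\textbf{Conditional velocity and continuity equation.} Write the conditional flow as $\psi_t(x_0)=\mu_t(z)+\sigma(x_0-\mu_{t_0}(z))$ with $x_0\sim\mathcal N(\mu_{t_0},\sigma^2I)$; this pushes forward to $p_t(\cdot|z)$ because $\sigma_t$ is constant. By Theorem~3 of \citet{8_lipman_FM}, the generating field is
\[
u_t(x|z)=\frac{\sigma_t'}{\sigma_t}(x-\mu_t)+\mu_t'=\mu_t'(z).
\]
This field satisfies the conditional continuity equation $\partial_t p_t(x|z)+\nabla\cdot(p_t(x|z)u_t(x|z))=0$. I will verify this directly: since $u_t$ is constant in $x$, $\partial_t p_t(x|z)=\sigma^{-2}(x-\mu_t)^\top\mu_t'\,p_t$, which equals $-\nabla\cdot(p_t\mu_t')$. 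Define the marginal field $u_t(x)=\int u_t(x|z)p_t(x|z)q(z)\,dz/p_t(x)$. Linearity of the continuity equation under mixtures, justified by exchanging integrals (dominated convergence, since there are finitely many trajectories and everything is smooth), then gives the marginal continuity equation. This is exactly Theorem~1 of \citet{8_lipman_FM}, which I will invoke.

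\textbf{Analytic derivative.} It remains to compute $\mu_t'$. This is the main obstacle, since it requires the B-spline derivative formula $\frac{d}{dt}\mathcal B_{j,m}(t)=m\big(\frac{\mathcal B_{j,m-1}(t)}{t_{j+m}-t_j}-\frac{\mathcal B_{j+1,m-1}(t)}{t_{j+m+1}-t_{j+1}}\big)$. I will prove it by induction on $m$ using the Cox--de Boor recursion~\eqref{eq: Cox-De Boor Recursion}:
\begin{itemize}
\item For $m=1$, differentiate the hat function piecewise; away from the knots, $\mathcal B_{\cdot,0}$ is locally constant.
\item For the inductive step, differentiate the recursion by the product rule. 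Apply the hypothesis to $\mathcal B_{j,m-1}'$ and $\mathcal B_{j+1,m-1}'$. Then re-collect terms using the recursion at degree $m-1$ to rewrite the products $(t-t_j)\mathcal B_{j+1,m-2}$ and similar terms. This is de Boor's classical computation~\citep{15_boor_splinesbook}, and I expect the bookkeeping of knot differences to be the delicate part.
\end{itemize}
Use the convention that terms with a zero denominator from repeated knots are set to zero. The derivative formula then holds for all $t$ except finitely many knots when $m=1$, a measure-zero set that does not affect the regression loss, and it holds everywhere for $m\ge2$ since $\mu\in C^{m-1}$.

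Summing $c_{j,m}\mathcal B_{j,m}'(t)$ over $j$, coordinatewise in each of the $d$ dimensions, yields the displayed expression for $u_t(x|z)$. The sum index starting at $j=1$ reflects the support and boundary conventions of the padded knot vector, where the first basis function's lower-order term vanishes.
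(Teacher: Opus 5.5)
Your three-part plan is the paper's proof. It uses the $\sigma\to 0$ delta limit at the knots for the marginals, Theorem~1 of \citet{8_lipman_FM} for the marginal continuity equation, and Theorem~3 of \citet{8_lipman_FM} to get $u_t(x|z)=\mu_t'(z)$, followed by the B-spline derivative formula. The only difference is that you prove that formula by induction on the Cox--de Boor recursion, where the paper cites \citet{23_butterfield_bsplinederivative} and \citet{15_boor_splinesbook}. The induction does close. After substituting the degree-$(m-1)$ recursion, the coefficients of $\mathcal{B}_{j,m-2}$ and $\mathcal{B}_{j+2,m-2}$ match immediately. The coefficient of $\mathcal{B}_{j+1,m-2}$ matches because $\frac{t-t_j}{t_{j+m}-t_j}+\frac{t_{j+m}-t}{t_{j+m}-t_j}$ and $\frac{t-t_{j+1}}{t_{j+m+1}-t_{j+1}}+\frac{t_{j+m+1}-t}{t_{j+m+1}-t_{j+1}}$ both equal $1$. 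So this part is a sound, self-contained replacement for the citation.

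The step that fails is your closing justification of the lower limit $j=1$. Differentiating $\mu=\sum_{j=0}^{n}c_{j,m}\mathcal{B}_{j,m}$ termwise gives a sum over $j=0,\dots,n$. The $j=0$ summand is $m\,c_{0,m}\bigl(\mathcal{B}_{0,m-1}(t)/(t_m-t_0)-\mathcal{B}_{1,m-1}(t)/(t_{m+1}-t_1)\bigr)$.

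\begin{itemize}
\item Clamping kills only the first piece of this summand.
\item The second piece, $-m\,c_{0,m}\mathcal{B}_{1,m-1}(t)/(t_{m+1}-t_1)$, is not identically zero. With clamped knots, $c_{0,m}=\mu(t_0)=x_{t_0}$ and $\mathcal{B}_{1,m-1}(t_0)=1$.
\item For $m=1$, dropping it gives slope $x_{t_1}/(t_1-t_0)$ on the first interval instead of $(x_{t_1}-x_{t_0})/(t_1-t_0)$.
\end{itemize}

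Your ``lower-order term vanishes'' reasoning is valid only for the rearranged form $\mu'(t)=m\sum_{j}\frac{c_{j,m}-c_{j-1,m}}{t_{j+m}-t_j}\mathcal{B}_{j,m-1}(t)$. There, clamped knots give $\mathcal{B}_{0,m-1}\equiv\mathcal{B}_{n+1,m-1}\equiv 0$, and the sum genuinely runs over $j=1,\dots,n$. The displayed formula, with $c_{j,m}$ rather than differences and $j$ starting at $1$, therefore contains an index slip. The paper's Lemma~\ref{thm: lemma_4} also just differentiates termwise and says the result follows, without addressing this. You should conclude with the full sum over $j=0,\dots,n$, or with the differenced form, rather than defend $j=1$.

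There is also a minor slip in your flow map. $\psi_t(x_0)=\mu_t+\sigma(x_0-\mu_{t_0})$ with $x_0\sim\mathcal{N}(\mu_{t_0},\sigma^2 I)$ has covariance $\sigma^4 I$. Drop the factor $\sigma$, or write $\mu_t+\sigma\epsilon$ with $\epsilon\sim\mathcal{N}(0,I)$. This is harmless, because the velocity is still $\mu_t'$ and you verify the conditional continuity equation directly anyway.
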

% \vspace{-1em}
We present the formal version of Theorem \ref{thm: theorem_3} and its proof in Appendix \ref{section: proof of theorem 4.3}.
According to Theorem \ref{thm: theorem_3}, one can show that $\nabla_{\theta}\mathcal{L}_{\mathrm{CFM}}=\nabla_{\theta}\mathcal{L}_{\mathrm{FM}}$ even in the dynamic case, analogous to the static case as shown in \citet{8_lipman_FM}. This enables us to finally train the network $u_{\theta}$ using the regression loss (Equation \ref{eq: flow matching regression loss}), with conditional velocity fields corresponding to a B-spline probability path. 
% Formally, the training objective of SplineFlow is defined as:
% \begin{align}
% \label{eq: conditional flow matching loss SplineFlow}
%     \min_{\theta} \: \mathbb{E}_{t \sim \mathcal{U}(0,1), z \sim q(z), x \sim p_t(x | z)} \big\| u_{\theta}(t, x) - u_t(x | z) \big\|^2.
% \end{align}
% Algorithm \ref{alg:splineflow} summarizes the training pipeline used to train the neural network dynamics. 
% Once training is completed, the time-varying observations for a given initial value can be generated by integrating the velocity field: $x_{t'}= x_{t_0} + \int_{t_0}^{t'} u_{\theta}(x_{t}, t)dt$ for any $t'$.

\subsection{Modeling Stochastic Dynamics}

We further extend SplineFlow to the \textit{flow and score matching} (SF2M) framework~\citep{20_tong_SF2M} for handling stochastic dynamics (see Appendix \ref{section: Flow And Score Matching} for full preliminaries on SF2M).
When the underlying SDE has a constant diffusion schedule $g(t)=\sigma$, we can express the drift $u_{t}(x_{t})$ based on the probability flow velocity field $u^{o}_{t}$ and the probability score $\nabla_{x}\log(p_{t}(x_{t}))$:
\begin{align}
\label{eq: original SDE drift main}
    u_{t}(x_{t})= u^{o}_{t}(x_{t})+\dfrac{\sigma^2}{2}\nabla_{x}\log(p_{t}(x_{t})),
\end{align}
where $u^{o}_{t}(x_{t})$ is formally defined in Equation \ref{eq: probability flow ODE}.
From Theorem 3.2 in  \citet{20_tong_SF2M}, we know that constructing a conditional probability flow $u_{t}^{o}(\cdot)$ and its induced probability path $p_{t}(\cdot)$ satisfying marginal constraints gives us a tractable way to train the probability flow velocity field approximation $u_{\theta}(t,x)$ and the score approximation $s_\phi(t,x)$ by optimizing over the SF2M regression loss:
\begin{equation}
\begin{aligned}
\label{eq: SF2M loss main}
&\mathcal{L}_{\text{SF2M}}(\theta, \phi)
=
\mathbb{E}_{t, z, x}
\Big[
\| u_\theta(t,x) - u^{o}_t(x | z) \|^2 + \lambda(t)^2
\| s_\phi(t, x) - \nabla \log p_t(x | z) \|^2
\Big].
\end{aligned}
\end{equation}

We adapt our SplineFlow framework to construct  $p_{t}(x | z)$ using B-spline interpolation and the induced $u^{o}_{t}(x| z)$ for modeling stochastic dynamics. The following proposition, proven in Appendix \ref{section: proof of proposition 4.4}, states the simplified optimization objective of  SplineFlow for modeling SDE dynamics. 

\begin{proposition}
\label{thm: theorem_4}
Assume the underlying SDE dynamics have a constant diffusion $\sigma$.
Let $p_t(x | z)=\mathcal{N}(x;\mu_t(z),\sigma_t^2(z))$ be a
Spline-Bridge with B-spline mean $\mu_t$ and standard deviation $\sigma_{t_i}\to 0$ at observed timestamps. Then, for  $\lambda(t)=\sigma_t(z)$, the $\text{SF2M}$ regression objective can be simplified as:
$$
\mathcal{L}_{\mathrm{SplineFlow}}(\theta, \phi)=\mathbb{E}\Big[\| u_\theta(t,x)-(\epsilon\,\sigma_t'+\mu_t') \|^2+\|\lambda(t)s_\phi(t,x)+\epsilon\|^2\Big],
$$
where $\epsilon\sim\mathcal{N}(0,I)$, $\sigma_t'$ and $\mu_t'$ stand for the derivatives of $\sigma_t$ and $\mu_t$.
\end{proposition}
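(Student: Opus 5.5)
The plan is to substitute the Gaussian form of the Spline-Bridge into each term of the SF2M loss \eqref{eq: SF2M loss main}, using the reparameterization $x=\mu_t(z)+\sigma_t(z)\epsilon$ with $\epsilon\sim\mathcal{N}(0,I)$. Sampling $x\sim p_t(x|z)$ is then the same as sampling $\epsilon$, so the expectation over $(t,z,x)$ becomes an expectation over $(t,z,\epsilon)$. The two terms of $\mathcal{L}_{\text{SF2M}}$ can be handled separately.

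\textbf{Flow term.} I would identify the conditional probability flow velocity $u^o_t(x|z)$ that generates the Gaussian path $\mathcal{N}(\mu_t,\sigma_t^2 I)$. The standard result (Theorem 3 in \citet{8_lipman_FM}, as used in the preliminaries of Appendix~\ref{sec:preliminaries}) gives
\[
u^o_t(x|z)=\frac{\sigma_t'}{\sigma_t}\,(x-\mu_t)+\mu_t' .
\]
One can verify this directly: the flow map $\psi_t(x_0)=\mu_t+\sigma_t\epsilon$ with $\epsilon$ held fixed pushes the path forward, and $\frac{d}{dt}\psi_t=\mu_t'+\sigma_t'\epsilon$. Substituting $x-\mu_t=\sigma_t\epsilon$ then yields $u^o_t(x|z)=\epsilon\sigma_t'+\mu_t'$. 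Here $\mu_t'$ is the B-spline derivative already computed in Theorem~\ref{thm: theorem_3}. Note that in SF2M the Brownian-bridge-type $\sigma_t$ has its own form, but only differentiability of $\sigma_t$ between knots is needed.

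\textbf{Score term.} For the Gaussian, $\nabla_x\log p_t(x|z)=-(x-\mu_t)/\sigma_t^2=-\epsilon/\sigma_t$. With $\lambda(t)=\sigma_t(z)$ we get
\[
\lambda(t)^2\|s_\phi-\nabla\log p_t\|^2=\|\sigma_t s_\phi+\epsilon\|^2=\|\lambda(t)s_\phi(t,x)+\epsilon\|^2.
\]
Summing the two terms gives the stated objective.

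\textbf{Main obstacle.} The delicate point is the degeneracy $\sigma_{t_i}\to0$ at the observed timestamps. There both $\sigma_t'/\sigma_t$ and $1/\sigma_t$ blow up, so the unweighted SF2M score term is ill-defined. I would argue as follows. After reparameterization, the flow target $\epsilon\sigma_t'+\mu_t'$ contains no division by $\sigma_t$. The score term, once weighted by $\lambda(t)=\sigma_t$, is likewise bounded. So both integrands are finite away from a measure-zero set of times, and the identity holds for Lebesgue-a.e.\ $t$, which suffices under $t\sim\mathcal{U}(0,1)$. Finally, the validity of using this conditional objective in place of the marginal one would rest on Theorem~3.2 of \citet{20_tong_SF2M} combined with the marginal-recovery property from Theorem~\ref{thm: theorem_3}.
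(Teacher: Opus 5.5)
Your proposal is correct and follows essentially the same route as the paper: you reparameterize $x=\mu_t(z)+\sigma_t(z)\epsilon$, substitute into the Gaussian conditional velocity $\frac{\sigma_t'}{\sigma_t}(x-\mu_t)+\mu_t'$ to get $\epsilon\sigma_t'+\mu_t'$, and use $\nabla_x\log p_t(x\mid z)=-\epsilon/\sigma_t$ with $\lambda(t)=\sigma_t$ to turn the weighted score term into $\|\lambda(t)s_\phi+\epsilon\|^2$. Your extra remark on the knots goes beyond what the paper provides, but the flow target should not be called ``bounded'': with the paper's piecewise-quadratic variance, $\sigma_t'\sim (t-t_j)^{-1/2}$ near each knot, so the target is only finite almost everywhere (its second moment in fact diverges logarithmically); this does not affect the result, because the claimed identity is a pointwise a.e.\ equality of nonnegative integrands.
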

% \vspace{-1em}
When modeling stochastic dynamical systems, we observed that the choice of $\sigma_{t}(z)$ can drastically affect performance. Apart from setting $\sigma_t(z) = \sigma$, we proposed a piecewise quadratic scheme:
\begin{align}
\label{eq:quadratic_variance}
\sigma^{2}_{t}(z)=\dfrac{\sigma^{2}}{(t_{j+1}-t_{j})^{2}} {(t-t_j)(t_{j+1}-t)}, \:\: \forall t \in [t_j, t_{j+1}), 
\end{align}
% $\sigma^{2}_{t}(z)=\dfrac{\sigma^{2}_{\text{min}}}{(t_{i+1}-t_{i})^{2}} {(t-t_i)(t_{i+1})}$, $t \in [t_{i}, t_{i+1})$,
which empirically works the best when modeling cellular dynamics. 
Algorithm \ref{alg:splineflow} outlines all the necessary steps used to train the velocity network $u_{\theta}$ and the score network $s_{\phi}$ and to obtain the SDE drift term $u_{t}$ (Equation \ref{eq: original SDE drift main}). Together with the diffusion schedule $\sigma$, which can be estimated (see Appendix~\ref{apdx:hyperparameter_selection} for detailed procedure), this drift can then be integrated using the Euler-Maruyama SDE discretization scheme~\citep{37_SDEnumericalmethods_platen} to generate SDE trajectories from given initial values.

\begin{table*}[t]
\centering
\caption{Comparisons of different methods for modeling ODE dynamics in MSE across $6$ deterministic systems and varying sampling irregularity $p$. The best performance is highlighted in bold.}
\label{tab:ode_combined_all_datasets}
\vspace{-0.05in}
\small
\setlength{\tabcolsep}{6pt}
\resizebox{0.98\linewidth}{!}{
\begin{tabular}{llcccccc}
\toprule
$\bm{p}$ & \textbf{Model} & \textbf{Exp.} & \textbf{Harm.} & \textbf{Damp.} & \textbf{LV} & \textbf{Hopper.} & \textbf{Lorenz} \\
\midrule

\multirow{5}{*}{$0$}
& NeuralODE  & $9.99\mathrm{e}{-1}$ & $6.54$ & $3.47$ & $1.98\mathrm{e}{4}$ & $2.95$ & $1.49$ \\
& LatentODE  & $1.00\mathrm{e}{-4}$ & $8.20\mathrm{e}{-4}$ & $1.20\mathrm{e}{-4}$ & $\mathbf{1.56\mathrm{e}{-3}}$ & $1.89$ & $1.16$ \\
& MMFM       & $1.40\mathrm{e}{-3}$ & $1.40\mathrm{e}{-2}$ & $2.00\mathrm{e}{-3}$ & $4.87\mathrm{e}{-1}$ & $1.82\mathrm{e}{+1}$ & $8.02\mathrm{e}{-1}$ \\
& TFM        & $6.60\mathrm{e}{-4}$ & $1.81\mathrm{e}{-1}$ & $1.50\mathrm{e}{-2}$ & $1.84\mathrm{e}{-1}$ & $1.55$ & $2.44$ \\
& SplineFlow & $\mathbf{8.90\mathrm{e}{-5}}$ & $\mathbf{3.70\mathrm{e}{-4}}$ & $\mathbf{9.50\mathrm{e}{-5}}$ & $8.60\mathrm{e}{-2}$ & $\mathbf{1.41}$ & $\mathbf{6.39\mathrm{e}{-1}}$ \\

\midrule
\multirow{4}{*}{$0.25$}
& LatentODE  & $1.00\mathrm{e}{-4}$ & $1.00\mathrm{e}{-3}$ & $3.40\mathrm{e}{-4}$ & $\mathbf{9.40\mathrm{e}{-4}}$ & $1.61$ & -- \\
& MMFM       & $2.00\mathrm{e}{-3}$ & $1.40\mathrm{e}{-2}$ & $1.80\mathrm{e}{-3}$ & $3.72\mathrm{e}{-1}$ & $3.13\mathrm{e}{+1}$ & -- \\
& TFM        & $6.90\mathrm{e}{-4}$ & $1.74\mathrm{e}{-1}$ & $1.50\mathrm{e}{-2}$ & $2.59\mathrm{e}{-1}$ & $1.51$ & -- \\
& SplineFlow & $\mathbf{8.20\mathrm{e}{-5}}$ & $\mathbf{6.50\mathrm{e}{-4}}$ & $\mathbf{3.00\mathrm{e}{-5}}$ & $1.00\mathrm{e}{-1}$ & $\mathbf{1.43}$ & -- \\

\midrule
\multirow{4}{*}{$0.5$}
& LatentODE  & $1.60\mathrm{e}{-4}$ & $7.60\mathrm{e}{-4}$ & $7.30\mathrm{e}{-4}$ & $\mathbf{1.55\mathrm{e}{-3}}$ & $\mathbf{1.65}$ & -- \\
& MMFM       & $1.70\mathrm{e}{-3}$ & $1.20\mathrm{e}{-2}$ & $3.00\mathrm{e}{-3}$ & $4.41$ & $4.78\mathrm{e}{+1}$ & -- \\
& TFM        & $9.40\mathrm{e}{-4}$ & $1.77\mathrm{e}{-1}$ & $1.60\mathrm{e}{-2}$ & $4.26$ & $1.72$ & -- \\
& SplineFlow & $\mathbf{1.10\mathrm{e}{-4}}$ & $\mathbf{5.50\mathrm{e}{-4}}$ & $\mathbf{4.70\mathrm{e}{-5}}$ & $7.05\mathrm{e}{-1}$ & $1.83$ & -- \\

\midrule
\multirow{4}{*}{$0.75$}
& LatentODE  & $9.60\mathrm{e}{-5}$ & $\mathbf{7.90\mathrm{e}{-4}}$ & $9.70\mathrm{e}{-4}$ & $\mathbf{2.90\mathrm{e}{-3}}$ & $3.56$ & -- \\
& MMFM       & $2.40\mathrm{e}{-3}$ & $1.00\mathrm{e}{-2}$ & $4.00\mathrm{e}{-3}$ & $8.62$ & $6.11\mathrm{e}{+1}$ & -- \\
& TFM        & $1.40\mathrm{e}{-3}$ & $2.05\mathrm{e}{-1}$ & $3.10\mathrm{e}{-2}$ & $1.98$ & $\mathbf{2.81}$ & -- \\
& SplineFlow & $\mathbf{8.60\mathrm{e}{-5}}$ & $1.12\mathrm{e}{-3}$ & $\mathbf{4.80\mathrm{e}{-5}}$ & $1.78$ & $3.34$ & -- \\

\midrule
\multicolumn{8}{c}{\textbf{Avg. Training Runtime (s)}} \\
\midrule
& LatentODE  & $2.80\mathrm{e}{+4}$ & $4.47\mathrm{e}{+4}$ & $4.38\mathrm{e}{+4}$ & $4.44\mathrm{e}{+4}$ & $1.21\mathrm{e}{+5}$ & $5.30\mathrm{e}{+4}$ \\
& SplineFlow & $\mathbf{1.90\mathrm{e}{+2}}$ & $\mathbf{2.73\mathrm{e}{+2}}$ & $\mathbf{2.67\mathrm{e}{+2}}$ & $\mathbf{2.65\mathrm{e}{+2}}$ & $\mathbf{2.02\mathrm{e}{+2}}$ & $\mathbf{2.79\mathrm{e}{+2}}$ \\
& Speedup ($\times$) & $\mathbf{147}$ & $\mathbf{163}$ & $\mathbf{164}$ & $\mathbf{167}$ & $\mathbf{597}$ & $\mathbf{190}$ \\
\bottomrule
\end{tabular}
}
\vspace{-0.05in}
\end{table*}

\section{Experiments}
\label{sec:experiments}
% 
% \shortsection{Datasets}
We test the performance of SplineFlow on several ODE dynamical systems, including Exponential Decay, Harmonic Oscillator, Damped Harmonic Oscillator, Lotka--Volterra, and Lorenz System, along with their SDE counterparts (see Appendix \ref{apdx: dynamical_systems_datasets} for their definitions). In addition, we evaluate SplineFlow on the HopperPhysics dataset from the MuJoCo Physics simulation engine, and on preprocessed, log-normalized longitudinal transcriptomic datasets, including the Embryoid Body temporal data from \citet{38_ebdata_burkhardt} and a post-traumatic brain regeneration dataset from \citet{39_stereoseq_wei}, across two-dimensional PHATE and PCA embeddings with 10 principal components (PCs).

% \shortsection{Baselines \& Metrics}  
For ODE, we compare SplineFlow with adjoint-based NeuralODE~\citep{27_neural_ODE} and LatentODE~\citep{7_rubanova_latentode}, as well as Trajectory Flow Matching (TFM)~\citep{13_zhang_TFM} and Multi-Marginal Flow Matching (MMFM)~\citep{12_rohbeck_FM}. We benchmark SplineFlow against SF2M~\citep{20_tong_SF2M} for SDE and, additionally, against Minibatch-OT Flow Matching (MOTFM)~\citep{11_tong_MOTFM} for transcriptomics datasets. 
For ODE evaluation, we compute the mean squared error (MSE) between the predicted trajectories and the ground truth. For SDE, we use several metrics, including the 2-Wasserstein distance ($\mathcal{W}_{2}$), maximum mean discrepancy (MMD), and energy distance (Energy), which measure statistical fidelity, as well as MSE on SDE and PODE (probability flow ODE) trajectories. For all experiments, we report average statistics over $5$ runs.

\subsection{ODE Dynamics}
Table \ref{tab:ode_combined_all_datasets} summarizes the comparisons for ODE dynamics, showing that SplineFlow outperforms the baselines in most settings. Compared to adjoint-based Neural ODE and Latent ODE, simulation-free methods (TFM, MMFM, and SplineFlow) are at least on par in MSE with regularly sampled observations ($p=0$), if not better, while being significantly (over $100$x) faster (see Appendix~\ref{apdx:runtime_complexity} for detailed runtime analyses of these algorithms). 
Higher-degree interpolants in SplineFlow capture the underlying dynamics much better than linear interpolants in TFM, especially for nonlinear and oscillatory systems, such as Lotka--Volterra and the harmonic family. We discuss the experiments on the chaotic Lorenz systems in Section \ref{sec:lorenz}, and the ablations of B-spline degree in Section \ref{sec:further analysis}.

% Note that the Lorenz system is known to be chaotic, highly different from other dynamical systems under both deterministic ODE and stochastic SDE settings; therefore, we discuss our empirical findings separately for the Lorenz system in Section \ref{sec:lorenz}.
For irregularly sampled observations with $p>0$, TFM with its linear interpolants performs poorly compared to SplineFlow, as observed not only from the metrics in Table \ref{tab:ode_combined_all_datasets} but also from the visualizations of the evolved dynamics in Appendix \ref{apdx: viz_dynamics}. Higher-degree spline bases tend to better capture the underlying behavior under irregular sampling regimes. 
In scenarios where the underlying dynamics behave approximately linearly, such as in HopperPhysics, MMFM (owing to its nonlinear interpolants) performs poorly compared to SplineFlow, which is on par with TFM. This is expected, since linear interpolants are a special case of B-spline interpolants with $m=1$ as shown in Theorem \ref{thm: theorem_2}. Figure \ref{fig: heatmap_ode} in Appendix \ref{apdx:expanded_results} further visualizes the relative improvement of SplineFlow in a heatmap.
\subsection{SDE Dynamics}
For SDE dynamical systems with \textit{additive constant diffusion}, we observe that constructing conditional probability paths with a constant variance scheme $\sigma_{t}(\bm{z})=\sigma$ performs better than the piecewise quadratic scheme as introduced in Equation \ref{eq:quadratic_variance}. Table~\ref{tab:sde_results_condensed} summarizes the comparison results across different evaluation metrics between SplineFlow implemented with the constant scheme and the baseline $\text{SF2M}$.
For regularly sampled observations, higher-degree B-spline interpolants used to model the conditional probability paths are particularly beneficial for nonlinear Lotka--Volterra and oscillatory damped harmonic systems, which aligns with Theorem \ref{thm: theorem_1}. For irregularly sampled observations, SplineFlow outperforms the baseline for Lotka--Volterra, whereas $\text{SF2M}$ with its linear interpolants is the better-performing model in other cases. 
Due to space limit, we present full SDE comparison results with $p\in\{0.25, 0.75\}$ and across other metrics in Table \ref{tab:apdx_sde_irregular} of Appendix \ref{apdx:expanded_results}.
% Note that this aligns with our expectations, since B-splines of order $m=1$ reduce to linear interpolants, suggesting that SplineFlow can still effectively be applied to simpler SDE dynamics. 

% new table
\begin{table}[t]
\centering
\caption{Comparisons of SplineFlow and SF2M for modeling SDE dynamics across multiple datasets and metrics, where $p$ denotes the degree of irregularity. The best results are highlighted in bold.}
\vspace{0.05in}
\centering
% \small
\setlength{\tabcolsep}{10pt}
\label{tab:sde_results_condensed}
\resizebox{0.98\linewidth}{!}{
\begin{tabular}{lllccc}
\toprule
\textbf{Dataset} & $\bm{p}$ & \textbf{Model} & \textbf{MSE (SDE)} & $\bm{\mathcal{W}_2}$ & \textbf{MMD} \\
\midrule
\multirow{4}{*}{Exp-Decay}
& \multirow{2}{*}{$0$}
& SF2M
& $0.477 \pm 0.010$
& $\mathbf{0.305 \pm 0.005}$
& $\mathbf{0.024 \pm 0.001}$ \\
& & SplineFlow ($m=1$)
& $\mathbf{0.456 \pm 0.008}$
& $0.331 \pm 0.003$
& $0.027 \pm 0.002$ \\
\cmidrule(lr){2-6}

& \multirow{2}{*}{$0.5$}
& SF2M
& $\mathbf{0.524 \pm 0.011}$
& $0.204 \pm 0.012$
& $0.009 \pm 0.002$ \\
& & SplineFlow ($m=1$)
& $0.530 \pm 0.007$
& $\mathbf{0.195 \pm 0.001}$
& $\mathbf{0.007 \pm 3.3{e}{-4}}$ \\

\midrule
\multirow{4}{*}{Damped Harmonic}
& \multirow{2}{*}{$0$}
& SF2M
& $\mathbf{0.967 \pm 0.033}$
& $0.430 \pm 0.007$
& $0.048 \pm 0.003$ \\
& & SplineFlow ($m=1$)
& $0.970 \pm 0.009$
& $\mathbf{0.410 \pm 0.004}$
& $\mathbf{0.042 \pm 0.002}$ \\
\cmidrule(lr){2-6}

& \multirow{2}{*}{$0.5$}
& SF2M
& $\mathbf{0.973 \pm 0.025}$
& $\mathbf{0.319 \pm 0.007}$
& $\mathbf{0.022 \pm 0.003}$ \\
& & SplineFlow ($m=1$)
& $0.981 \pm 0.040$
& $0.321 \pm 0.028$
& $0.022 \pm 0.006$ \\

\midrule
\multirow{4}{*}{Lotka--Volterra}
& \multirow{2}{*}{$0$}
& SF2M
& $0.461 \pm 0.097$
& $0.443 \pm 0.050$
& $0.125 \pm 0.024$ \\
& & SplineFlow ($m=2$)
& $\mathbf{0.273 \pm 0.014}$
& $\mathbf{0.294 \pm 0.003}$
& $\mathbf{0.042 \pm 0.003}$ \\
\cmidrule(lr){2-6}

& \multirow{2}{*}{$0.5$}
& SF2M
& $0.355 \pm 0.031$
& $0.375 \pm 0.037$
& $0.080 \pm 0.029$ \\
& & SplineFlow ($m=3$)
& $\mathbf{0.285 \pm 0.025}$
& $\mathbf{0.310 \pm 0.012}$
& $\mathbf{0.044 \pm 0.004}$ \\

\midrule
\multirow{2}{*}{Lorenz}
& \multirow{2}{*}{$0$}
& SF2M
& $2.159 \pm 0.008$
& $0.632 \pm 0.008$
& $0.130 \pm 0.003$ \\
& & SplineFlow ($m=4$)
& $\mathbf{1.227 \pm 0.056}$
& $\mathbf{0.180 \pm 0.015}$
& $\mathbf{0.005 \pm 0.002}$ \\

\bottomrule
\end{tabular}
}
\vspace{-0.05in}
\end{table}

% \begin{figure*}[t]
% \centering
% \begin{subfigure}[t]{0.235\linewidth}
%     \centering
%     \includegraphics[width=\linewidth]{figures/lorenz/lorenz_ode_baseline.png}
%     \caption{TFM (ODE)}
%     \label{fig:lorenz_ode_baseline}
% \end{subfigure}\hfill
% \begin{subfigure}[t]{0.235\linewidth}
%     \centering
%     \includegraphics[width=\linewidth]{figures/lorenz/lorenz_ode_splineflow.png}
%     \caption{SplineFlow (ODE)}
%     \label{fig:lorenz_ode_splineflow}
% \end{subfigure}
% \begin{subfigure}[t]{0.235\linewidth}
%     \centering
%     \includegraphics[width=\linewidth]{figures/lorenz/lorenz_sde_baseline.png}
%     \caption{SF2M (SDE)}
%     \label{fig:lorenz_sde_baseline}
% \end{subfigure}\hfill
% \begin{subfigure}[t]{0.235\linewidth}
%     \centering
%     \includegraphics[width=\linewidth]{figures/lorenz/lorenz_sde_splineflow.png}
%     \caption{SplineFlow (SDE)}
%     \label{fig:lorenz_sde_splineflow}
% \end{subfigure}
% \vspace{-0.05in}
% \caption{Visualizations of the ODE and SDE Lorenz trajectories predicted by different methods. Compared with baselines, SplineFlow learns the underlying structural properties of the Lorenz system more effectively.}
% \label{fig:lorenz_comparison}
% \end{figure*}

% 
\subsection{Lorenz Systems}
\label{sec:lorenz}

% 

% Note that the Lorenz system is known to be chaotic, highly different from other dynamical systems under both deterministic ODE and stochastic SDE settings; therefore, we discuss our empirical findings separately for the Lorenz system in Section \ref{sec:lorenz}.

Chaotic dynamical systems exhibit distinctive evolution dynamics, with drastic shifts in their states at certain timepoints and highly nonlinear trajectories. For Lorenz, in particular, depending on the initial value, the state may converge onto one attractor basin for a period before shifting to another attractor basin, known as the \textit{Butterfly Effect}. This behavior makes the system sensitive to masking and irregular sampling, since the state values at certain critical time points contain information about which attractor the trajectory will approach. Therefore, we restrict the configurations on Lorenz to regular sampling ($p = 0$).
Our experiments show that higher-degree SplineFlow not only outperform TFM for the ODE case and $\text{SF2M}$ for SDE in terms of metric performance (Tables~\ref{tab:ode_combined_all_datasets} and ~\ref{tab:sde_results_condensed}), but also enable SplineFlow to capture the underlying structure of the dynamics much better than the baselines, including the adjoint-based methods (Figures \ref{fig:lorenz_ode_baseline}, \ref{fig:lorenz_ode_splineflow} and \ref{fig:lorenz_ode_splineflow_img2}–\ref{fig:lorenz_sde_splineflow_img2}). This aligns with our theoretical motivation for using SplineFlow to model nonlinear and higher-degree dynamical systems.

% % old table 4
% \begin{table}[H]
% \centering
% \caption{Performance for Lorenz System under regular sampling $p=0$. ODE trajectories are evaluated using MSE, while SDE trajectories are evaluated using MSE (SDE) , Wasserstein distance, and MMD. The MSE for SDE setting corresponds to the Probability flow ODE generated trajectory.}
% \label{tab:lorenz_ode_sde}
% \resizebox{\linewidth}{!}{
% \begin{tabular}{llcccc}
% \toprule
% Setting & Model & MSE & MSE (SDE) & Wasserstein & MMD \\
% \midrule
% \multirow{4}{*}{ODE}
% & NODE
% & $1.491 \pm 0.359$ & -- & -- & -- \\
% & LatentODE
% & $1.157 \pm 0.204$ & -- & -- & -- \\
% & TFM
% & $2.442 \pm 0.076$ & -- & -- & -- \\
% & SplineFlow
% & $\mathbf{0.639 \pm 0.004}$ & -- & -- & -- \\

% \midrule
% \multirow{2}{*}{SDE}
% & SF2M
% & $2.170 \pm 0.008$
% & $2.159 \pm 0.008$
% & $0.632 \pm 0.008$
% & $0.130 \pm 0.003$ \\
% & SplineFlow
% & $\mathbf{1.232 \pm 0.040}$
% & $\mathbf{1.227 \pm 0.056}$
% & $\mathbf{0.180 \pm 0.015}$
% & $\mathbf{0.005 \pm 0.002}$ \\
% \bottomrule
% \end{tabular}
% }
% \end{table}
% 

\begin{figure*}[t]
\centering

\begin{subfigure}[t]{0.32\linewidth}
    \centering
    \includegraphics[width=\linewidth, height=0.8\linewidth]{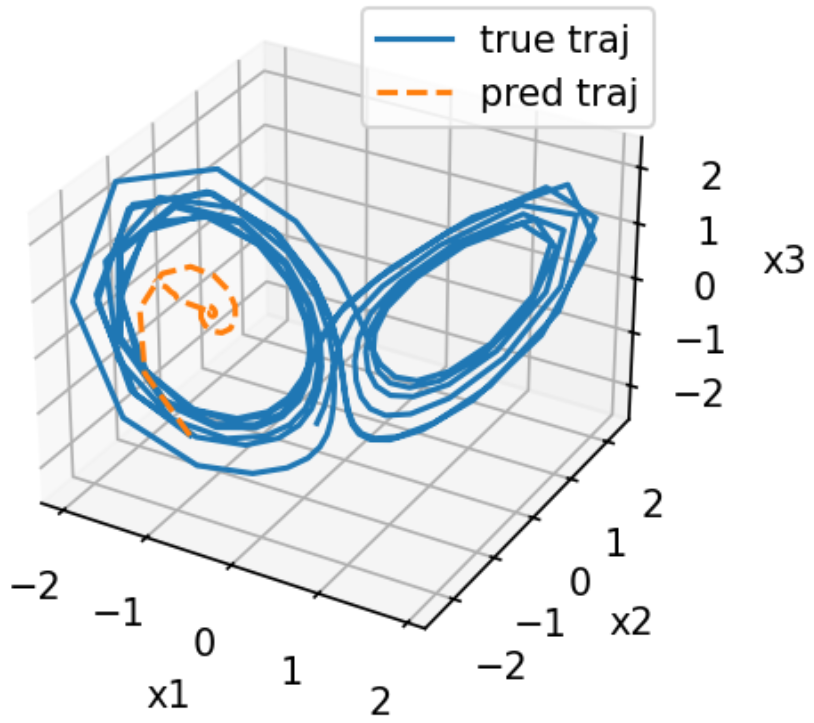}
    \caption{TFM}
    \label{fig:lorenz_ode_baseline}
\end{subfigure}\hfill
\begin{subfigure}[t]{0.32\linewidth}
    \centering
    \includegraphics[width=\linewidth, height=0.8\linewidth]{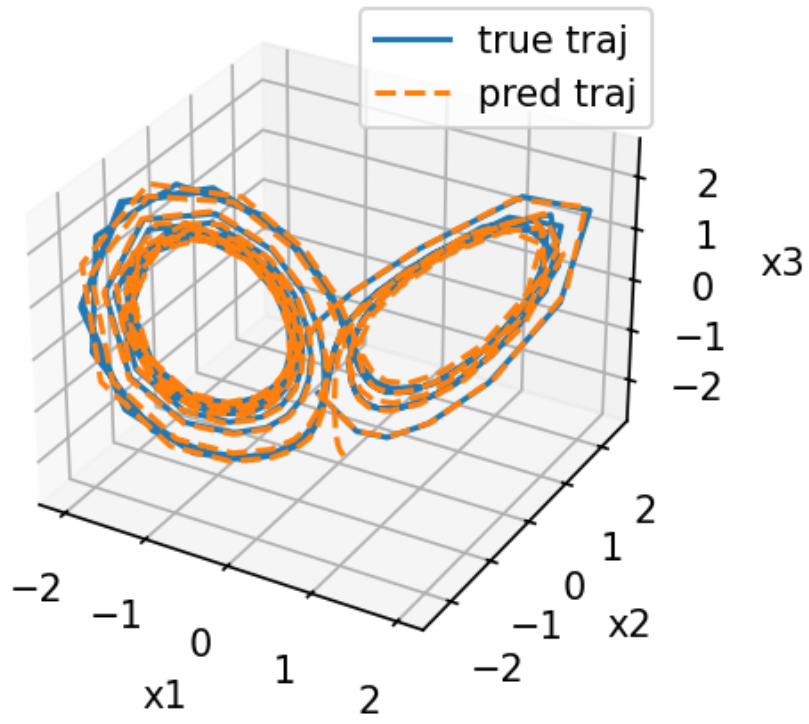}
    \caption{SplineFlow}
    \label{fig:lorenz_ode_splineflow}
\end{subfigure}\hfill
\begin{subfigure}[t]{0.32\linewidth}
    \centering
    \includegraphics[width=\linewidth, height=0.8\linewidth]{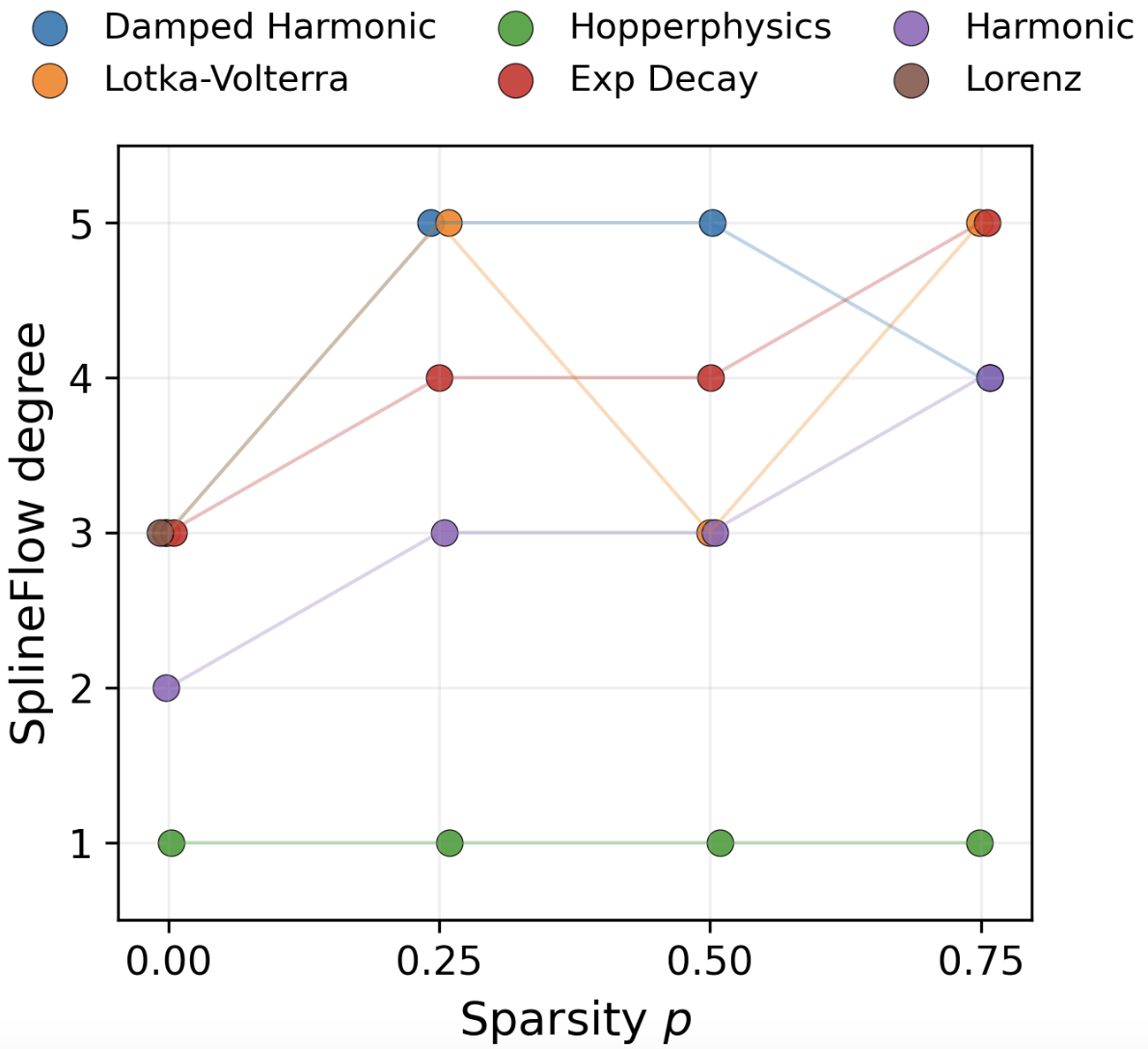}
    \caption{Degree vs. Irregularity}
    \label{fig:degree_m_vs_p}
\end{subfigure}
% \hfill
% \begin{subfigure}[t]{0.235\linewidth}
%     \centering
%     \includegraphics[width=\linewidth]{figures/degree_correlation.png}
%     \caption{CV-selected vs.\ optimal $m$}
%     \label{fig:degree_cv_vs_sf}
% \end{subfigure}
\vspace{-0.05in}
\caption{(a) and (b) compare the Lorenz ODE trajectories learned by SplineFlow with TFM, whereas (c) presents the ablations of spline degree $m$ on ODE dynamics with varying sampling irregularity $p$.
% , and simple cross-validation reliably recovers near-optimal choices with Spearman correlation $\rho=0.85$.
}
\vspace{-0.05in}
\label{fig:lorenz_and_degree_combined}
\end{figure*}

\section{Further Analysis}
\label{sec:further analysis}

\shortsection{Selection of Spline Degree}
We analyze the role of spline degree $m$ under varying levels of sampling sparsity.
Figure~\ref{fig:degree_m_vs_p} shows that higher spline degrees become increasingly beneficial as $p$ increases for oscillatory and nonlinear ODE systems.
This observation is expected, as higher-degree B-splines better capture curvature and
higher-order dynamics when observations are limited. In contrast, for approximately linear dynamics (e.g., HopperPhysics), lower degrees ($m=1$) suffice and perform best, consistent with the fact that linear
interpolants are a special case of B-splines (Theorem~\ref{thm: theorem_2}). Full ablation results are deferred to Appendix \ref{apdx:ablations_all}. These results
highlight the importance of adapting the supervision signal in flow matching
to the underlying data complexity. Figure~\ref{fig:degree_cv_vs_sf} in Appendix~\ref{apdx:hyperparameter_selection} shows that a simple
cross-validation procedure produces estimates strongly correlated with the optimal degree
(Spearman correlation $\rho = 0.85$), suggesting a practical mechanism for selecting the spline degree parameter $m$ in SplineFlow without extensive hyperparameter tuning.
% \begin{figure*}[t]
% \centering
% \begin{subfigure}[t]{0.4\linewidth}
%     \centering
%     \includegraphics[width=\linewidth]{figures/degree_vs_p_ode.png}
%     \caption{Optimal SplineFlow degree $m$ vs. Sparsity $p$}
%     \label{fig:degree_m_vs_p}
% \end{subfigure}\hfill
% \begin{subfigure}[t]{0.4\linewidth}
%     \centering
%     \includegraphics[width=\linewidth]{figures/degree_correlation.png}
%     \caption{Optimal SplineFlow degree $m$ vs. Optimal cross validated degree $m$}
%     \label{fig:degree_cv_vs_sf}
% \end{subfigure}
% \vspace{-0.05in}
% \caption{Influence of $m$. (a) Higher degrees help under sparsity. (b) Cross-validation recovers optimal $m$.}
% \label{fig:degree_m_influence}
% \end{figure*}

\begin{table}[t]
\caption{Comparison results of SplineFlow on transcriptomic datasets with PHATE embedding.}
\vspace{0.05in}
\label{tab:transcriptomic_data_results_temporal}
\centering
\setlength{\tabcolsep}{10pt}
\resizebox{0.98\textwidth}{!}{
\begin{tabular}{lll ccc}
\toprule
\textbf{Dataset} & \textbf{Task} & \textbf{Model}
& \textbf{MMD} & $\bm{\mathcal{W}_2}$ & \textbf{Energy} \\
\midrule

\multirow{6}{*}{Brain Regeneration} 
&
\multirow{3}{*}{Inter.}
& MOTFM
& $0.503 \pm 0.016$
& $0.794 \pm 0.026$
& $1.440 \pm 0.062$ \\
& & SF2M
& $0.457 \pm 0.008$
& $0.753 \pm 0.026$
& $1.310 \pm 0.036$ \\
& & SplineFlow $(m=5)$
& $\mathbf{0.285 \pm 0.007}$
& $\mathbf{0.607 \pm 0.027}$
& $\mathbf{0.815 \pm 0.021}$ \\

\cmidrule{2-6}
& \multirow{3}{*}{Extra.}
& MOTFM
& $1.010 \pm 0.012$
& $1.570 \pm 0.005$
& $3.890 \pm 0.051$ \\
& & SF2M
& $0.972 \pm 0.009$
& $\mathbf{1.530 \pm 0.047}$
& $3.770 \pm 0.072$ \\
& & SplineFlow $(m=4)$
& $\mathbf{0.955 \pm 0.009}$
& $1.536 \pm 0.019$
& $\mathbf{3.740 \pm 0.042}$ \\

\midrule
\multirow{6}{*}{Embryoid Evolution} &
\multirow{3}{*}{Inter.}
& MOTFM
& $0.060 \pm 0.003$
& $0.378 \pm 0.011$
& $0.175 \pm 0.005$ \\
& & SF2M
& $0.025 \pm 0.004$
& $0.272 \pm 0.024$
& $0.086 \pm 0.013$ \\
& & SplineFlow $(m=3)$
& $\mathbf{0.013 \pm 0.001}$
& $\mathbf{0.207 \pm 0.017}$
& $\mathbf{0.040 \pm 0.001}$ \\

\cmidrule{2-6}
& \multirow{3}{*}{Extra.}
& MOTFM
& $0.083 \pm 0.009$
& $0.448 \pm 0.051$
& $0.239 \pm 0.007$ \\
& & SF2M
& $\mathbf{0.058 \pm 0.011}$
& $\mathbf{0.439 \pm 0.044}$
& $\mathbf{0.194 \pm 0.016}$ \\
& & SplineFlow $(m=2)$
& $0.069 \pm 0.002$
& $0.448 \pm 0.044$
& $0.207 \pm 0.030$ \\

\bottomrule
\end{tabular}
}
\vspace{-0.05in}
\end{table}

\shortsection{Cellular Dynamics}
We evaluate SplineFlow on a Stereo-seq spatial transcriptomics dataset \citep{39_stereoseq_wei} from a temporal study of post-traumatic brain regeneration in an Axolotl Telencephalon (Salamander) tissue, as well as on an unpaired transcriptomic Embryoid developmental dataset~\citep{38_ebdata_burkhardt}. 
% Following trauma or injury, cells undergo a complex molecular process whose underlying dynamics lead to the emergence of different cell types over time as the brain heals, making this a dynamic system of interest. 
Both datasets contain unpaired transcriptomic expression profiles across $5$ distinct stages, with replicates collected from different individual organisms at each stage, and we consider both interpolation ($t = 3$) and extrapolation ($t = 5$) settings. For SplineFlow, we use static pairwise entropic OT couplings to generate trajectories 
% $X_{i} = [x_i(1), x_i(2), x_i(3), x_i(4), x_i(5)]$ 
to define the conditional probability paths. Since SDE modeling of cellular processes has been shown to perform better than ODE modeling as used in MOTFM \citep{11_tong_MOTFM}, we adopt the stochastic version of SplineFlow.

Table~\ref{tab:transcriptomic_data_results_temporal} shows that for both datasets and across interpolation and extrapolation settings, SplineFlow and $\text{SF2M}$ outperform MOTFM, highlighting the importance of stochastic modeling of cellular processes. Moreover, SplineFlow outperforms $\text{SF2M}$ quite comfortably in the interpolation setting, while being best overall in the extrapolation setting for the Stereo-seq dataset. PHATE embedding captures geometric nonlinear structures in transcriptomic data, thereby making SplineFlow better equipped than $\text{SF2M}$ to model temporal processes more accurately. 
We observe similar trends for experiments with PCA embedding, which contains linear projections unlike PHATE embedding (see Tables~\ref{tab:apdx_stereoseq_condensed} and \ref{tab:apdx_citeseq_condensed} in Appendix \ref{apdx:expanded_results} for detailed comparison results).
For the Embryoid Evolution dataset, Table~\ref{tab:transcriptomic_data_results_temporal} shows that SplineFlow outperforms MOTFM and $\text{SF2M}$ under the interpolation setting, while remaining competitive under the extrapolation setup.

\section{Conclusion}

We introduced SplineFlow, a simulation-free flow matching algorithm that constructs conditional paths using B-spline interpolants.  
SplineFlow provides a scalable and flexible framework for modeling complex dynamical systems, with potential applications ranging from faster data-driven simulations to modeling temporal biological processes, such as longitudinal drug treatment response. 
% SplineFlow improves performance compared to existing baselines across a range of ODE and SDE systems, while better capturing qualitative structural properties. 
% Beyond classical dynamical systems, SplineFlow also improves the modeling of latent temporal processes driving longitudinal transcriptomic changes. 
Future work may further improve upon this by learning optimal knot placement during training and by extending the framework to model stochastic dynamics with state-dependent diffusion terms.

%%%% submit version 1
% We introduced SplineFlow, a flow-matching algorithm for modeling dynamical systems by building conditional probability paths using B-splines. SplineFlow provides a theoretically principled way to model nonlinear, higher-degree, and irregularly sampled dynamics using a highly scalable simulation-free framework. SplineFlow improves metric performance compared to existing baselines across a range of ODE and SDE systems, while better capturing qualitative structural properties. Beyond classical systems, SplineFlow also improves modeling of latent temporal processes driving longitudinal transcriptomic changes. Future works may further improve performance by learning optimal knot placement during training and by extending the framework scalably to model stochastic dynamics with state dependent diffusion terms. 

% In the unusual situation where you want a paper to appear in the
% references without citing it in the main text, use \nocite
\nocite{langley00}

% \section*{Impact Statement}
% % This paper presents work whose goal is to advance the field of Machine Learning. There are many potential societal consequences of our work, none which we feel must be specifically highlighted here.
% Our work contributes a general framework for data-driven modeling of dynamical systems from observational data, without requiring explicit mechanistic formulations. The proposed algorithm is particularly relevant in settings where the underlying dynamics are complex, the data are partially observed, and analytical descriptions are unavailable. The proposed framework can serve as a foundation for a range of applications, from financial market modeling to biomedical settings involving longitudinal processes, for example, analyzing biological dynamics from unpaired datasets. Future work should focus on real-world validation of model predictions in collaboration with domain experts to better assess their reliability and practical usefulness.

\bibliography{citations}
\bibliographystyle{plainnat}

%%%%%%%%%%%%%%%%%%%%%%%%%%%%%%%%%%%%%%%%%%%%%%%%%%%%%%%%%%%%%%%%%%%%%%%%%%%%%%%
%%%%%%%%%%%%%%%%%%%%%%%%%%%%%%%%%%%%%%%%%%%%%%%%%%%%%%%%%%%%%%%%%%%%%%%%%%%%%%%
% APPENDIX
%%%%%%%%%%%%%%%%%%%%%%%%%%%%%%%%%%%%%%%%%%%%%%%%%%%%%%%%%%%%%%%%%%%%%%%%%%%%%%%
%%%%%%%%%%%%%%%%%%%%%%%%%%%%%%%%%%%%%%%%%%%%%%%%%%%%%%%%%%%%%%%%%%%%%%%%%%%%%%%
\clearpage
\newpage

\section*{NeurIPS Paper Checklist}

\begin{enumerate}

\item {\bf Claims}
    \item[] Question: Do the main claims made in the abstract and introduction accurately reflect the paper's contributions and scope?
    \item[] Answer: \answerYes{}% Replace by \answerYes{}, \answerNo{}, or \answerNA{}.
    \item[] Justification: Our main claims in the abstract and introduction revolve around the limitations of existing flow matching techniques that use linear interpolants to model dynamical systems and the proposal of SplineFlow to mitigate those limitations. In Sections~\ref{sec:linear vs. B-spline} and \ref{sec:splineflow algorithm}, we derive theoretical results showing that using b-splines leads to lower function approximation error, which in turn yields better approximation of the underlying dynamics by the velocity fields and better sample quality. Our exhaustive experiments in Section~\ref{sec:experiments} demonstrate the benefits of using higher-order B-splines across ODE, SDE, and cellular dynamics settings.
    \item[] Guidelines:
    \begin{itemize}
        \item The answer \answerNA{} means that the abstract and introduction do not include the claims made in the paper.
        \item The abstract and/or introduction should clearly state the claims made, including the contributions made in the paper and important assumptions and limitations. A \answerNo{} or \answerNA{} answer to this question will not be perceived well by the reviewers. 
        \item The claims made should match theoretical and experimental results, and reflect how much the results can be expected to generalize to other settings. 
        \item It is fine to include aspirational goals as motivation as long as it is clear that these goals are not attained by the paper. 
    \end{itemize}

\item {\bf Limitations}
    \item[] Question: Does the paper discuss the limitations of the work performed by the authors?
    \item[] Answer: \answerYes{} % Replace by \answerYes{}, \answerNo{}, or \answerNA{}.
    \item[] Justification: We discuss limitations in terms of more clarity with respect to optimal knot placement in splines, and lack of provision for non-additive diffusion terms for modeling the SDE dynamics. 
    \item[] Guidelines:
    \begin{itemize}
        \item The answer \answerNA{} means that the paper has no limitation while the answer \answerNo{} means that the paper has limitations, but those are not discussed in the paper. 
        \item The authors are encouraged to create a separate ``Limitations'' section in their paper.
        \item The paper should point out any strong assumptions and how robust the results are to violations of these assumptions (e.g., independence assumptions, noiseless settings, model well-specification, asymptotic approximations only holding locally). The authors should reflect on how these assumptions might be violated in practice and what the implications would be.
        \item The authors should reflect on the scope of the claims made, e.g., if the approach was only tested on a few datasets or with a few runs. In general, empirical results often depend on implicit assumptions, which should be articulated.
        \item The authors should reflect on the factors that influence the performance of the approach. For example, a facial recognition algorithm may perform poorly when image resolution is low or images are taken in low lighting. Or a speech-to-text system might not be used reliably to provide closed captions for online lectures because it fails to handle technical jargon.
        \item The authors should discuss the computational efficiency of the proposed algorithms and how they scale with dataset size.
        \item If applicable, the authors should discuss possible limitations of their approach to address problems of privacy and fairness.
        \item While the authors might fear that complete honesty about limitations might be used by reviewers as grounds for rejection, a worse outcome might be that reviewers discover limitations that aren't acknowledged in the paper. The authors should use their best judgment and recognize that individual actions in favor of transparency play an important role in developing norms that preserve the integrity of the community. Reviewers will be specifically instructed to not penalize honesty concerning limitations.
    \end{itemize}

\item {\bf Theory assumptions and proofs}
    \item[] Question: For each theoretical result, does the paper provide the full set of assumptions and a complete (and correct) proof?
    \item[] Answer: \answerYes{}% Replace by \answerYes{}, \answerNo{}, or \answerNA{}.
    \item[] Justification: Yes, the formal statements of our theorems are mentioned in Appendix~\ref{section: Mathematical Proofs}, and we discuss all the necessary assumptions and conditions under which the guarantees hold.
    \item[] Guidelines:
    \begin{itemize}
        \item The answer \answerNA{} means that the paper does not include theoretical results. 
        \item All the theorems, formulas, and proofs in the paper should be numbered and cross-referenced.
        \item All assumptions should be clearly stated or referenced in the statement of any theorems.
        \item The proofs can either appear in the main paper or the supplemental material, but if they appear in the supplemental material, the authors are encouraged to provide a short proof sketch to provide intuition. 
        \item Inversely, any informal proof provided in the core of the paper should be complemented by formal proofs provided in appendix or supplemental material.
        \item Theorems and Lemmas that the proof relies upon should be properly referenced. 
    \end{itemize}

    \item {\bf Experimental result reproducibility}
    \item[] Question: Does the paper fully disclose all the information needed to reproduce the main experimental results of the paper to the extent that it affects the main claims and/or conclusions of the paper (regardless of whether the code and data are provided or not)?
    \item[] Answer: \answerYes{} % Replace by \answerYes{}, \answerNo{}, or \answerNA{}.
    \item[] Justification: We've included all the implementation details, such as network architecture, hyperparameter settings and runtime, in Appendix~\ref{apdx:training}.
    \item[] Guidelines:
    \begin{itemize}
        \item The answer \answerNA{} means that the paper does not include experiments.
        \item If the paper includes experiments, a \answerNo{} answer to this question will not be perceived well by the reviewers: Making the paper reproducible is important, regardless of whether the code and data are provided or not.
        \item If the contribution is a dataset and\slash or model, the authors should describe the steps taken to make their results reproducible or verifiable. 
        \item Depending on the contribution, reproducibility can be accomplished in various ways. For example, if the contribution is a novel architecture, describing the architecture fully might suffice, or if the contribution is a specific model and empirical evaluation, it may be necessary to either make it possible for others to replicate the model with the same dataset, or provide access to the model. In general. releasing code and data is often one good way to accomplish this, but reproducibility can also be provided via detailed instructions for how to replicate the results, access to a hosted model (e.g., in the case of a large language model), releasing of a model checkpoint, or other means that are appropriate to the research performed.
        \item While NeurIPS does not require releasing code, the conference does require all submissions to provide some reasonable avenue for reproducibility, which may depend on the nature of the contribution. For example
        \begin{enumerate}
            \item If the contribution is primarily a new algorithm, the paper should make it clear how to reproduce that algorithm.
            \item If the contribution is primarily a new model architecture, the paper should describe the architecture clearly and fully.
            \item If the contribution is a new model (e.g., a large language model), then there should either be a way to access this model for reproducing the results or a way to reproduce the model (e.g., with an open-source dataset or instructions for how to construct the dataset).
            \item We recognize that reproducibility may be tricky in some cases, in which case authors are welcome to describe the particular way they provide for reproducibility. In the case of closed-source models, it may be that access to the model is limited in some way (e.g., to registered users), but it should be possible for other researchers to have some path to reproducing or verifying the results.
        \end{enumerate}
    \end{itemize}

\item {\bf Open access to data and code}
    \item[] Question: Does the paper provide open access to the data and code, with sufficient instructions to faithfully reproduce the main experimental results, as described in supplemental material?
    \item[] Answer: \answerNo{} % Replace by \answerYes{}, \answerNo{}, or \answerNA{}.
    \item[] Justification: The refactored source code will be released during the post-review process.
    \item[] Guidelines:
    \begin{itemize}
        \item The answer \answerNA{} means that paper does not include experiments requiring code.
        \item Please see the NeurIPS code and data submission guidelines (\url{https://neurips.cc/public/guides/CodeSubmissionPolicy}) for more details.
        \item While we encourage the release of code and data, we understand that this might not be possible, so \answerNo{} is an acceptable answer. Papers cannot be rejected simply for not including code, unless this is central to the contribution (e.g., for a new open-source benchmark).
        \item The instructions should contain the exact command and environment needed to run to reproduce the results. See the NeurIPS code and data submission guidelines (\url{https://neurips.cc/public/guides/CodeSubmissionPolicy}) for more details.
        \item The authors should provide instructions on data access and preparation, including how to access the raw data, preprocessed data, intermediate data, and generated data, etc.
        \item The authors should provide scripts to reproduce all experimental results for the new proposed method and baselines. If only a subset of experiments are reproducible, they should state which ones are omitted from the script and why.
        \item At submission time, to preserve anonymity, the authors should release anonymized versions (if applicable).
        \item Providing as much information as possible in supplemental material (appended to the paper) is recommended, but including URLs to data and code is permitted.
    \end{itemize}

\item {\bf Experimental setting/details}
    \item[] Question: Does the paper specify all the training and test details (e.g., data splits, hyperparameters, how they were chosen, type of optimizer) necessary to understand the results?
    \item[] Answer: \answerYes{} % Replace by \answerYes{}, \answerNo{}, or \answerNA{}.
    \item[] Justification:  We've included the training details, such as network architecture, hyperparameter settings, runtime, etc., in Appendix~\ref{apdx:training}.
    \item[] Guidelines:
    \begin{itemize}
        \item The answer \answerNA{} means that the paper does not include experiments.
        \item The experimental setting should be presented in the core of the paper to a level of detail that is necessary to appreciate the results and make sense of them.
        \item The full details can be provided either with the code, in appendix, or as supplemental material.
    \end{itemize}

\item {\bf Experiment statistical significance}
    \item[] Question: Does the paper report error bars suitably and correctly defined or other appropriate information about the statistical significance of the experiments?
    \item[] Answer: \answerYes{} % Replace by \answerYes{}, \answerNo{}, or \answerNA{}.
    \item[] Justification: We've provided mean $\pm$ std across several seeds for all the metrics, and full ablation studies in Appendix~\ref{apdx:ablations_all}, including the condensed results in Appendix~\ref{apdx:expanded_results}.
    \item[] Guidelines:
    \begin{itemize}
        \item The answer \answerNA{} means that the paper does not include experiments.
        \item The authors should answer \answerYes{} if the results are accompanied by error bars, confidence intervals, or statistical significance tests, at least for the experiments that support the main claims of the paper.
        \item The factors of variability that the error bars are capturing should be clearly stated (for example, train/test split, initialization, random drawing of some parameter, or overall run with given experimental conditions).
        \item The method for calculating the error bars should be explained (closed form formula, call to a library function, bootstrap, etc.)
        \item The assumptions made should be given (e.g., Normally distributed errors).
        \item It should be clear whether the error bar is the standard deviation or the standard error of the mean.
        \item It is OK to report 1-sigma error bars, but one should state it. The authors should preferably report a 2-sigma error bar than state that they have a 96\% CI, if the hypothesis of Normality of errors is not verified.
        \item For asymmetric distributions, the authors should be careful not to show in tables or figures symmetric error bars that would yield results that are out of range (e.g., negative error rates).
        \item If error bars are reported in tables or plots, the authors should explain in the text how they were calculated and reference the corresponding figures or tables in the text.
    \end{itemize}

\item {\bf Experiments compute resources}
    \item[] Question: For each experiment, does the paper provide sufficient information on the computer resources (type of compute workers, memory, time of execution) needed to reproduce the experiments?
    \item[] Answer: \answerYes{} % Replace by \answerYes{}, \answerNo{}, or \answerNA{}.
    \item[] Justification: We've included the training details, such as network architecture, hyperparameter settings, runtime, etc., in Appendix~\ref{apdx:training}. All the experiments were run on a single Nvidia DGX A100 GPU.
    
    \item[] Guidelines:
    \begin{itemize}
        \item The answer \answerNA{} means that the paper does not include experiments.
        \item The paper should indicate the type of compute workers CPU or GPU, internal cluster, or cloud provider, including relevant memory and storage.
        \item The paper should provide the amount of compute required for each of the individual experimental runs as well as estimate the total compute. 
        \item The paper should disclose whether the full research project required more compute than the experiments reported in the paper (e.g., preliminary or failed experiments that didn't make it into the paper). 
    \end{itemize}
    
\item {\bf Code of ethics}
    \item[] Question: Does the research conducted in the paper conform, in every respect, with the NeurIPS Code of Ethics \url{https://neurips.cc/public/EthicsGuidelines}?
    \item[] Answer: \answerYes{} % Replace by \answerYes{}, \answerNo{}, or \answerNA{}.
    \item[] Justification: We've read the guidelines and confirm that we're not in violation of any code of ethics.
    \item[] Guidelines:
    \begin{itemize}
        \item The answer \answerNA{} means that the authors have not reviewed the NeurIPS Code of Ethics.
        \item If the authors answer \answerNo, they should explain the special circumstances that require a deviation from the Code of Ethics.
        \item The authors should make sure to preserve anonymity (e.g., if there is a special consideration due to laws or regulations in their jurisdiction).
    \end{itemize}

\item {\bf Broader impacts}
    \item[] Question: Does the paper discuss both potential positive societal impacts and negative societal impacts of the work performed?
    \item[] Answer: \answerNo{}% Replace by \answerYes{}, \answerNo{}, or \answerNA{}.
    \item[] Justification: The paper present a general principled methodology to extend flow matching for dynamical systems, rather than specific applications with direct societal implications.
    \item[] Guidelines:
    \begin{itemize}
        \item The answer \answerNA{} means that there is no societal impact of the work performed.
        \item If the authors answer \answerNA{} or \answerNo, they should explain why their work has no societal impact or why the paper does not address societal impact.
        \item Examples of negative societal impacts include potential malicious or unintended uses (e.g., disinformation, generating fake profiles, surveillance), fairness considerations (e.g., deployment of technologies that could make decisions that unfairly impact specific groups), privacy considerations, and security considerations.
        \item The conference expects that many papers will be foundational research and not tied to particular applications, let alone deployments. However, if there is a direct path to any negative applications, the authors should point it out. For example, it is legitimate to point out that an improvement in the quality of generative models could be used to generate Deepfakes for disinformation. On the other hand, it is not needed to point out that a generic algorithm for optimizing neural networks could enable people to train models that generate Deepfakes faster.
        \item The authors should consider possible harms that could arise when the technology is being used as intended and functioning correctly, harms that could arise when the technology is being used as intended but gives incorrect results, and harms following from (intentional or unintentional) misuse of the technology.
        \item If there are negative societal impacts, the authors could also discuss possible mitigation strategies (e.g., gated release of models, providing defenses in addition to attacks, mechanisms for monitoring misuse, mechanisms to monitor how a system learns from feedback over time, improving the efficiency and accessibility of ML).
    \end{itemize}
    
\item {\bf Safeguards}
    \item[] Question: Does the paper describe safeguards that have been put in place for responsible release of data or models that have a high risk for misuse (e.g., pre-trained language models, image generators, or scraped datasets)?
    \item[] Answer: \answerNA{} % Replace by \answerYes{}, \answerNo{}, or \answerNA{}.
    \item[] Justification: \answerNA{}
    \item[] Guidelines:
    \begin{itemize}
        \item The answer \answerNA{} means that the paper poses no such risks.
        \item Released models that have a high risk for misuse or dual-use should be released with necessary safeguards to allow for controlled use of the model, for example by requiring that users adhere to usage guidelines or restrictions to access the model or implementing safety filters. 
        \item Datasets that have been scraped from the Internet could pose safety risks. The authors should describe how they avoided releasing unsafe images.
        \item We recognize that providing effective safeguards is challenging, and many papers do not require this, but we encourage authors to take this into account and make a best faith effort.
    \end{itemize}

\item {\bf Licenses for existing assets}
    \item[] Question: Are the creators or original owners of assets (e.g., code, data, models), used in the paper, properly credited and are the license and terms of use explicitly mentioned and properly respected?
    \item[] Answer: \answerNA{}% Replace by \answerYes{}, \answerNo{}, or \answerNA{}.
    \item[] Justification: \answerNA{}
    \item[] Guidelines:
    \begin{itemize}
        \item The answer \answerNA{} means that the paper does not use existing assets.
        \item The authors should cite the original paper that produced the code package or dataset.
        \item The authors should state which version of the asset is used and, if possible, include a URL.
        \item The name of the license (e.g., CC-BY 4.0) should be included for each asset.
        \item For scraped data from a particular source (e.g., website), the copyright and terms of service of that source should be provided.
        \item If assets are released, the license, copyright information, and terms of use in the package should be provided. For popular datasets, \url{paperswithcode.com/datasets} has curated licenses for some datasets. Their licensing guide can help determine the license of a dataset.
        \item For existing datasets that are re-packaged, both the original license and the license of the derived asset (if it has changed) should be provided.
        \item If this information is not available online, the authors are encouraged to reach out to the asset's creators.
    \end{itemize}

\item {\bf New assets}
    \item[] Question: Are new assets introduced in the paper well documented and is the documentation provided alongside the assets?
    \item[] Answer: \answerNA{} % Replace by \answerYes{}, \answerNo{}, or \answerNA{}.
    \item[] Justification: \answerNA{}
    \item[] Guidelines:
    \begin{itemize}
        \item The answer \answerNA{} means that the paper does not release new assets.
        \item Researchers should communicate the details of the dataset\slash code\slash model as part of their submissions via structured templates. This includes details about training, license, limitations, etc. 
        \item The paper should discuss whether and how consent was obtained from people whose asset is used.
        \item At submission time, remember to anonymize your assets (if applicable). You can either create an anonymized URL or include an anonymized zip file.
    \end{itemize}

\item {\bf Crowdsourcing and research with human subjects}
    \item[] Question: For crowdsourcing experiments and research with human subjects, does the paper include the full text of instructions given to participants and screenshots, if applicable, as well as details about compensation (if any)? 
    \item[] Answer: \answerNA{}% Replace by \answerYes{}, \answerNo{}, or \answerNA{}.
    \item[] Justification: \answerNA{}
    \item[] Guidelines:
    \begin{itemize}
        \item The answer \answerNA{} means that the paper does not involve crowdsourcing nor research with human subjects.
        \item Including this information in the supplemental material is fine, but if the main contribution of the paper involves human subjects, then as much detail as possible should be included in the main paper. 
        \item According to the NeurIPS Code of Ethics, workers involved in data collection, curation, or other labor should be paid at least the minimum wage in the country of the data collector. 
    \end{itemize}

\item {\bf Institutional review board (IRB) approvals or equivalent for research with human subjects}
    \item[] Question: Does the paper describe potential risks incurred by study participants, whether such risks were disclosed to the subjects, and whether Institutional Review Board (IRB) approvals (or an equivalent approval/review based on the requirements of your country or institution) were obtained?
    \item[] Answer: \answerNA{}% Replace by \answerYes{}, \answerNo{}, or \answerNA{}.
    \item[] Justification: \answerNA{}
    \item[] Guidelines:
    \begin{itemize}
        \item The answer \answerNA{} means that the paper does not involve crowdsourcing nor research with human subjects.
        \item Depending on the country in which research is conducted, IRB approval (or equivalent) may be required for any human subjects research. If you obtained IRB approval, you should clearly state this in the paper. 
        \item We recognize that the procedures for this may vary significantly between institutions and locations, and we expect authors to adhere to the NeurIPS Code of Ethics and the guidelines for their institution. 
        \item For initial submissions, do not include any information that would break anonymity (if applicable), such as the institution conducting the review.
    \end{itemize}

\item {\bf Declaration of LLM usage}
    \item[] Question: Does the paper describe the usage of LLMs if it is an important, original, or non-standard component of the core methods in this research? Note that if the LLM is used only for writing, editing, or formatting purposes and does \emph{not} impact the core methodology, scientific rigor, or originality of the research, declaration is not required.
    %this research? 
    \item[] Answer: \answerNA{} % Replace by \answerYes{}, \answerNo{}, or \answerNA{}.
    \item[] Justification: \answerNA{}
    \item[] Guidelines:
    \begin{itemize}
        \item The answer \answerNA{} means that the core method development in this research does not involve LLMs as any important, original, or non-standard components.
        \item Please refer to our LLM policy in the NeurIPS handbook for what should or should not be described.
    \end{itemize}

\end{enumerate}

\appendix
\onecolumn

\section{Preliminaries}
\label{sec:preliminaries}

\subsection{Flow Matching}
\label{section: Flow Matching}

Flow matching (FM)~\citep{8_lipman_FM} is a technique that characterizes continuous normalizing flows for a system of variables. Let $\psi : [0,1] \times \mathbb{R}^{d} \rightarrow \mathbb{R}^{d}$ be the underlying continuous flow that acts on random variables at each time $t \in [0,1]$, which transforms samples $x_{0}$ from the source distribution $q_{0} = q(x_{0})$ into samples $x_{1}$ from the target distribution $q_{1} = q(x_{1})$, so that $\psi(1, x_{0}) = x_{1}$. In continuous normalizing flows, $\psi$ can be defined by an ordinary differential equation (ODE): 
\begin{align}
    \dfrac{d \psi(t, x_{0})}{dt} = \dfrac{dx_{t}}{dt} = u_{t}(x_{t}),
\end{align}
where $u_{t}(x) : [0,1] \times \mathbb{R}^{d} \rightarrow \mathbb{R}^{d}$ is the velocity field describing the rate of change of the random variables. Transforming the random variable $x_{t}$ also transforms the associated probability distribution $p_{t}(x) : [0,1] \times \mathbb{R}^{d} \rightarrow \mathbb{R}_{+}$, which satisfies the boundary conditions $p_{t=0} = q(x_{0})$ and $p_{t=1} = q(x_{1})$. The relationship between the velocity field and its induced probability distribution at time $t$ is given by the well-known \emph{continuity equation}~\citep{19_villani_OTtextbook}:
 \begin{align}
 \label{eq: continuity equation}
\frac{\partial p_t(x)}{\partial t} = - \nabla_{x} \cdot \big(u_t(x) p_t(x)\big),
 \end{align}
where $\nabla_{x}$ denotes the divergence operator. Flow matching approximates the velocity field by a neural network $u_{\theta}: [0,1]\times\mathbb{R}^{d} \rightarrow \mathbb{R}^{d}$ by minimizing a regression loss $\mathcal{L}_{\text{FM}}$:
\begin{align}
\label{eq: flow matching regression loss}
    \min_{\theta} \: \mathbb{E}_{t \sim \mathcal{U}(0,1), x \sim p_t(x)} \big\| u_\theta(t, x) - u_t(x) \big\|^2,
\end{align}
where $\mathcal{U}(0,1)$ denotes the uniform distribution over $[0,1]$, and $p_{t}$ is the time-varying probability distribution induced by the velocity field $u_{t}$. Once trained, the velocity field $u_{\theta}$ can be integrated from $0$ to $1$ to transform a source sample $x_{0} \sim q_{0}$ into a target sample $x_{1} \sim q_{1}$. In practice, however, the velocity field $u_{t}(x)$ is unknown, and several velocity fields can exist such that the boundary conditions for the induced probability paths are satisfied.

\subsection{Conditional Flow Matching}
\label{section: Conditional Flow Matching}

To tackle the intractability of the flow matching objective $\mathcal{L}_{\text{FM}}$ in Equation \ref{eq: flow matching regression loss}, conditional flow matching (CFM) was introduced~\citep{8_lipman_FM} , where the velocity field $u_{t}(x)$ and the induced probability path $p_{t}(x)$ are defined as:
\begin{align}
\label{eq: conditional probability path}
p_t(x) = \int p_t(x  |  z) q(z) dz, \:\: u_t(x) = \int u_t(x  |  z) \frac{p_t(x  |  z) q(z)}{p_t(x)} dz.
\end{align}
Here, $z$ is the conditional variable, and $u_{t}(x | z)$ denotes the conditional velocity field such that the induced conditional probability paths $p_{t}(x | z)$ satisfy the boundary conditions: $\int p_{t=0}(x | z) q(z) dz = q_{0}$ and $\int p_{t=1}(x | z) q(z) dz = q_{1}$. 
The CFM regression loss $\mathcal{L}_{\text{CFM}}$ is defined as:
\begin{align}
\label{eq: conditional flow matching loss}
    \min_{\theta} \: \mathbb{E}_{t \sim \mathcal{U}(0,1), z \sim q(z), x \sim p_t(x  |  z)} \big\| u_\theta(t, x) - u_t(x  |  z) \big\|^2.
\end{align}
\citet{8_lipman_FM} proved that $\nabla_{\theta} \mathcal{L}_{\text{FM}}=\nabla_{\theta} \mathcal{L}_{\text{CFM}}$, suggesting that we can regress the velocity field using the more tractable conditional velocities. Therefore, the remaining task is to define an appropriate conditional velocity field $u_{t}(x  |  z)$ and the associated $p_{t}(x  |  z)$ to sample from.

In the existing literature, the conditional path $p_t(x | z)$ is typically modeled using a Gaussian distribution with $\mu_{t}(z)$ and $\sigma_{t}(z)$ being the constructed, latent-dependent, time-varying mean and standard deviation, such that the distribution at a sampled time point matches the observations. For such a particular choice, the associated conditional velocity $u_t(x | z)$ can be expressed in an analytical form:
\begin{align}
\label{eq: gaussian probability path}
p_t(x | z) = \mathcal{N}(x \: | \: \mu_t(z), \sigma_t(z)^2), \:\:
u_t(x | z) = \frac{\sigma_t'(z)}{\sigma_t(z)} (x - \mu_t(z)) + \mu_t'(z).
\end{align}

\subsection{Flow and Score Matching}
\label{section: Flow And Score Matching}

% So far, we've discussed how to characterize the velocity field when the underlying dynamics are governed by an ODE. 
% $dx_{t} = u_{t}(x_{t})dt$, such that the samples generated from this flow resemble the observed data. 
In certain scenarios, the underlying dynamics of the state random variables $x_{t}$ can evolve stochastically, corresponding to a stochastic differential equation (SDE):
\begin{align}
\label{eq: SDE}
dx_{t}= u_{t}(x_{t})dt + g(t)dw_{t},
\end{align}
where $u_{t}(x): [0,1] \times \mathbb{R}^{d} \rightarrow \mathbb{R}^{d}$ is the velocity field, $g(t): [0,1] \rightarrow \mathbb{R}_{>0}$ is a positive diffusion function, and $w_{t}$ stands for the standard Wiener process. The relationship between $u_{t}(x)$ and the SDE induced probability path $p_{t}(x): [0,1]\times\mathbb{R^{d}}\rightarrow\mathbb{R}_{+}$ is given by a \emph{Fokker-Planck equation} (FPE): 
\begin{align}
\label{eq: FPE}
\frac{\partial p_{t}(x)}{\partial t}= -\nabla_{x} \cdot (u_{t}(x)p_{t}(x))+\dfrac{g^{2}(t)}{2}\nabla^{2}_{x}(p_{t}(x)).
\end{align}
\citet{5_song_scorebaseddiffusion} showed that starting from some initial distribution $p_{0}(x)$ and evolving the random variable according to the probability flow ODE with drift $u^{o}_{t}(x_{t})$:
\begin{align}
\label{eq: probability flow ODE}
    d x_{t}&= \Big[u_{t}(x_{t})- \dfrac{g^{2}(t)}{2}\nabla_{x}\log(p_{t}(x_{t}))\Big]dt= u^{o}_{t}(x_{t}) dt,
\end{align}
induces the same time varying marginal distribution $p_{t}(x)$ as the SDE defined by Equation \ref{eq: SDE}. Therefore, given a diffusion schedule $g(t)$, the score function $\nabla_{x}\log(p_{t}(x_{t}))$, and the drift $u^{o}_{t}(x_{t})$ defined by Equation \ref{eq: probability flow ODE}, we can recover the drift for the original SDE in Equation \ref{eq: SDE} as:
\begin{align}
\label{eq: original SDE drift}
    u_{t}(x_{t})= u^{o}_{t}(x_{t})+\dfrac{g^{2}(t)}{2}\nabla_{x}\log(p_{t}(x_{t})).
\end{align}
Analogous to conditional flow matching, the drift $u^{o}_{t}(x)$ and the score $\nabla_{x}\log(p_{t}(x))$ can be approximated by neural networks optimizing the following regression loss over their conditional equivalents (Theorem 3.2 in \citet{20_tong_SF2M}):
\begin{equation}
\begin{aligned}
\label{eq: SF2M loss}
&\mathcal{L}_{\text{SF2M}}(\theta, \phi)
=
\mathbb{E}_{t, z, x}
\Big[
\| u_\theta(t,x) - u^{o}_t(x  |  z) \|^2 + \lambda(t)^2
\| s_\phi(t, x) - \nabla \log p_t(x  |  z) \|^2
\Big].
\end{aligned}
\end{equation}
Here, $u^{o}_{t}(x |  z)$ induces the conditional probability $p_{t}(x  |  z)$ and the expectation is taken over uniformly distributed time $t \sim \mathcal{U}(0,1)$, the latent variables $z \sim q(z)$ and the conditional probability samples $x \sim p_t(x  |  z)$, and $\lambda(t)$ denotes the time conditional weights.  
When the conditional probability path is constructed as a linear Brownian bridge: 
\begin{align}
p_{t}(x | z) = \mathcal{N} (x; (1-t)x_{0}+tx_{1}, \:\: \sigma^{2}t(1-t)),
\end{align}
where $z= (x_{0}, x_{1})$ is the conditional variable. From Equation \ref{eq: gaussian probability path}, we can compute:
\begin{align}
    \nonumber u^{o}_{t}(x| z)= \frac{1 - 2t}{t(1 - t)}\big(x - t x_1 - (1 - t) x_0 \big)+ x_1 - x_0, \:\: \nabla_{x} \log p_t(x)= \frac{t x_1 + (1 - t) x_0 - x}{\sigma^2 t(1 - t)}.
\end{align}

\section{Polynomial Interpolants}
\label{apdx: Polynomial Interpolants}

% \begin{table}[H] 
% \centering 
% \small 
% \setlength{\tabcolsep}{5pt} 
% \begin{tabular}{lp{2.2cm}ccp{2.2cm}p{2.2cm}} 
% \toprule 
% Interpolant & Mathematical Form & Locality & Smoothness & Derivatives & Key Limitation \\ \midrule 
% Linear & Piecewise linear & Local & $C^0$ & Discontinuous & Cannot model smooth or higher-order dynamics \\ 
% Cubic & Piecewise cubic & Global & $C^2$ & Continuous & Sensitive to boundary conditions, reduced locality \\ 
% Lagrangian & Global polynomial & Global & Smooth & Unstable & Runge's phenomenon for large $N$ \\ 
% Chebyshev & Polynomial in $T_k$ basis & Global & Smooth & stable than lagrangian & Requires structured sampling, unsuitable for irregular data \\ 
% Hermite & Value + Derivative interpolation & Local & $C^{1+}$ & Noise-sensitive & Requires accurate derivatives \\ 
% B-spline & $\sum_k c_k B_{k,m}$ & Local & Degree-controlled & Stable & -- \\ 
% \bottomrule 
% \end{tabular} 
% \caption{Comparison of interpolation operators and their structural properties.} \label{tab:interpolation_comparison} 
% \end{table}

Let $f : [a,b] \to \mathbb{R}^d$ be an unknown function observed at samples
$\{(t_i, x_i)\}_{i=0}^N$, where $x_i = f(t_i)$ and
$a \le t_0 < \cdots < t_N \le b$.
An interpolation operator $\mathcal{I}$ constructs a continuous-time function
$\mathcal{I}f$ such that $(\mathcal{I}f)(t_i) = x_i$ for all $i$. Below, give formal definitions for different kinds of polynomial interpolants, and discuss their structural properties, as well as limitations (see Table \ref{tab:interpolation_comparison} for a summary).

\shortsection{Linear Interpolation}
A linear interpolant is a piecewise linear function defined as:
\begin{equation}
(\mathcal{I}_{\mathrm{lin}} f)(t)
=
x_i \frac{t_{i+1}-t}{t_{i+1}-t_i}
+
x_{i+1} \frac{t-t_i}{t_{i+1}-t_i},
\: \text{ for } t \in [t_i,t_{i+1}].
\end{equation}
It is numerically stable since it depends on local estimations. However, while the resulting interpolant is continuous, its derivatives are discontinuous at sampling points. This makes linear interpolation a suboptimal choice for modeling smooth or higher-order dynamics~\citep{40_ODE_hairer}.

\shortsection{Cubic Interpolation}
A cubic interpolant $s(t)$ is a polynomial of degree three satisfying
\begin{equation}
s(t_i) = x_i,
\qquad
s \in C^2([a,b]).
\end{equation}
In addition to being a continuous function, the first two derivatives ($s'(t) $ and $ s''(t)$) are also continuous, making it a popular choice for approximating smooth functions. However, the construction of $s(t)$ depends on all the sampled points ($x_{0}, \cdots, x_{N}$) to ensure the continuity of its first two derivatives or $\mathcal{C}^{2}[a,b]$, making it extremely sensitive to boundary conditions and reducing locality. 

\subsection{Lagrangian Interpolation}
A Lagrangian interpolant is a global polynomial defined as:
\begin{equation}
(\mathcal{I}_{\mathrm{Lag}} f)(t)
=
\sum_{i=0}^N x_i \ell_i(t),
\qquad
\ell_i(t) =
\prod_{j \neq i} \frac{t - t_j}{t_i - t_j}.
\end{equation}
While it is the most natural polynomial satisfying all the boundary conditions, it suffers from Runge's Phenomenon~\citep{14_epperson_rungephenomenon}, characterized by large oscillations and instability, making it a poor choice to interpolate between a large number of points $N$. While piecewise polynomial construction can be done, it comes at the cost of discontinuous derivatives at observational time points.

\begin{table}[t] 
\centering 
\caption{Comparison of different interpolation operators and their structural properties.}
\setlength{\tabcolsep}{6pt} 
\renewcommand{\arraystretch}{1.1}
\begin{tabular}{lcccc} 
\toprule 
\textbf{Interpolant} & \textbf{Mathematical Form} & \textbf{Locality} & \textbf{Smoothness} & \textbf{Derivatives} \\ 
\midrule 
Linear & Piecewise linear & Local & $C^0$ & Discontinuous \\ 
Cubic & Piecewise cubic & Global & $C^2$ & Continuous \\ 
Lagrangian & Global polynomial & Global & Smooth & Unstable  \\ 
Chebyshev & Polynomial in $T_k$ basis & Global & Smooth & Stable \\ 
Hermite & Value+Derivative interpolation & Local & $C^{1+}$ & Noise-Sensitive \\ 
B-spline & $\sum_k c_k B_{k,m}$ & Local & Degree-Controlled & Stable \\ 
\bottomrule 
\end{tabular} 
\label{tab:interpolation_comparison} 
\end{table}

% \begin{table}[H] 
% \centering 
% \small 
% \setlength{\tabcolsep}{5pt} 
% \begin{tabular}{lp{2.2cm}ccp{2.2cm}p{2.2cm}} 
% \toprule 
% Interpolant & Mathematical Form & Locality & Smoothness & Derivatives & Key Limitation \\ \midrule 
% Linear & Piecewise linear & Local & $C^0$ & Discontinuous & Cannot model smooth or higher-order dynamics \\ 
% Cubic & Piecewise cubic & Global & $C^2$ & Continuous & Sensitive to boundary conditions, reduced locality \\ 
% Lagrangian & Global polynomial & Global & Smooth & Unstable & Runge's phenomenon for large $N$ \\ 
% Chebyshev & Polynomial in $T_k$ basis & Global & Smooth & stable than lagrangian & Requires structured sampling, unsuitable for irregular data \\ 
% Hermite & Value + Derivative interpolation & Local & $C^{1+}$ & Noise-sensitive & Requires accurate derivatives \\ 
% B-spline & $\sum_k c_k B_{k,m}$ & Local & Degree-controlled & Stable & -- \\ 
% \bottomrule 
% \end{tabular} 
% \caption{Comparison of interpolation operators and their structural properties.} \label{tab:interpolation_comparison} 
% \end{table}

\subsection{Chebyshev Interpolation}
Chebyshev interpolation is equivalent to polynomial interpolation at Chebyshev nodes, defined as:
\begin{equation}
t_k = \cos\!\left( \frac{(2k+1)\pi}{2(N+1)} \right),
\end{equation}
which gives us Chebyshev interpolants: 
\begin{equation}
(\mathcal{I}_{\mathrm{Cheb}} f)(t)
=
\sum_{k=0}^N c_k T_k(t),
\end{equation}
where $T_k$ are Chebyshev polynomials of the first kind, defined by
$T_0(x)=1$, $T_1(x)=x$, and $T_{k+1}(x)=2xT_k(x)-T_{k-1}(x)$. Chebyshev Interpolation is known to improve numerical stability compared to other naïve polynomial interpolants, such as the Lagrangian interpolant above~\citep{41_approximationtheory_treftehan}. However, since the interpolant is constructed globally, it's highly unsuitable for irregularly sampled time points as is common~\citep{41_approximationtheory_treftehan}.

\subsection{Hermite Interpolation}
A Hermitian interpolant is a generalization of a Lagrangian interpolant, such that apart from satisfying observational values, the derivatives at the observational points are also satisfied:
\begin{equation}
(\mathcal{I}_{\mathrm{Herm}} f)(t_i) = x_i,
\qquad
(\mathcal{I}_{\mathrm{Herm}} f)'(t_i) = v_i .
\end{equation}
When accurate derivative information is available, Hermitian interpolants are known to produce
smooth and structurally consistent interpolations. However, the quality of the interpolant depends heavily on the availability of derivatives, which are often unavailable. Errors in derivatives lead to poorly fitted interpolants, making them a suboptimal choice~\citep{42_splinefunctions_schumaker}.

\subsection{B-Spline Interpolation}
A B-spline interpolant of degree $m$ is defined as:
\begin{equation}
(\mathcal{I}_{\mathrm{B}} f)(t)
=
\sum_{k} c_k B_{k,m}(t),
\end{equation}
where $\{B_{k,m}\}$ are B-spline basis functions defined over observations via the
Cox--de Boor recursion formula (Equation~\ref{eq: Cox-De Boor Recursion}). Each basis function is locally defined, ensuring its stability, while the smoothness is controlled by the degree $m$. This combination of locality and smoothness control makes B-splines a particularly optimal choice for modeling dynamical functions.

\section{Synthetic Dynamical Systems}
\label{apdx: dynamical_systems_datasets}

\setlength{\tabcolsep}{10pt}
\begin{table}[H]
\caption{Summary of the properties of different dynamical systems considered in our experiments.}
% \vspace{-0.05in}
\centering
\begin{tabular}{lcccc}
\toprule
System & Linear & Non-linear & Oscillatory & Chaotic \\
\midrule
Exponential Decay        & \checkmark & \xmark & \xmark & \xmark \\
Harmonic Oscillator     & \checkmark & \xmark & \checkmark & \xmark \\
Damped Harmonic Oscillator & \checkmark & \xmark & \checkmark & \xmark \\
Lotka--Volterra          & \xmark & \checkmark & \checkmark & \xmark \\
Lorenz System            & \xmark & \checkmark & \xmark & \checkmark \\
\bottomrule
\end{tabular}
\label{tab:dynamics_summary}
\end{table}

We provide the formal definitions of all the synthetic dynamical systems evaluated in our experiments. Table \ref{tab:dynamics_summary} and Table \ref{tab:synthetic_dynamics_summary} further summarize the properties, the corresponding ODE and SDE dynamics, and their use cases.

\subsection{Exponential Decay}
Exponential decay is a simple linear dynamical system, which can be written as the following ODE:
\begin{equation}
\dot x(t) = -\lambda x(t), \qquad \lambda > 0.
\end{equation}
It's often used to model first-order relaxation processes such as radioactive decay or simple stable linear systems~\citep{45_expdecay_citation}.
The SDE equivalent of the exponential decay is known as an Ornstein--Uhlenbeck (OU) process given by:
\begin{equation}
dX_t = -\lambda X_t\,dt + \sigma\,dW_t,
\end{equation}
where $W_{t}$ denotes the standard Wiener process, $\lambda$ is the hyperparameter controlling the rate of decay, and $\sigma$ is the constant exogenous noise rate. 
OU processes are used to model systems such as the velocity of a Brownian particle under friction, gene-expression fluctuations around the steady state~\citep{46_expdecay_citation}, among others.

\subsection{Harmonic Oscillator}
A harmonic oscillator is bistate dynamical system $z(t)=(x(t),v(t))$ with the following ODE:
\begin{equation}
\dot x(t) = v(t), 
\qquad 
\dot v(t) = -\omega^2 x(t),
\end{equation}
where $W_{t}$ is the standard Wiener process, and $\omega$ is the natural undamped frequency of the system, and $\sigma$ is the constant exogenous noise rate. 
It's often used to model oscillations near stable equilibria in systems, such as LC circuits~\citep{47_harmonic_citation} and linearized mechanical vibrations.

\subsection{Damped Harmonic Oscillator}
\label{apdx:damped harmonic oscillator}

A damped harmonic oscillator is a bistate dynamical system with a damping coefficient ($\gamma$) compared to a simple harmonic oscillator above, given by the following ODE:
\begin{equation}
\dot x(t) = v(t), 
\qquad 
\dot v(t) = -\omega^2 x(t) - 2\gamma v(t),
\end{equation}
often used to model damped vibrations in mechanical systems with friction or RLC electronic circuits~\citep{47_harmonic_citation} with resistance, among others. The stochastic equivalent is also known as a Langevin System, written as: 
\begin{equation}
dX_t = V_t\,dt, 
\qquad 
dV_t = \bigl(-\omega^2 X_t - 2\gamma V_t\bigr)\,dt + \sigma\,dW_t,
\end{equation}
$W_{t}$ is the standard Wiener process, and $\omega$ is the natural undamped frequency of the system, $\gamma$ is the damping coefficient, and $\sigma$ is the constant exogenous noise rate. 
It is used to model the thermal fluctuations in mechanical systems, or the motion of a Brownian particle in a quadratic potential.

\begin{table}[t]
\caption{Full mathematical definition and descriptive summary of the considered dynamical systems.}
% \vspace{-0.05in}
\centering
\small
\setlength{\tabcolsep}{6pt}
\renewcommand{\arraystretch}{1.2}
\resizebox{\linewidth}{!}{
\begin{tabular}{llll}
\toprule
\textbf{Dynamical System} 
& \textbf{ODE} 
& \textbf{SDE (additive noise)} 
& \textbf{Model Use Case} \\
\midrule

\multirow{3}{*}{Exponential Decay}
& \multirow{3}{*}{$\dot x(t) = -\lambda x(t)$} 
& \multirow{3}{*}{$dX_t = -\lambda X_t\,dt + \sigma\,dW_t$} 
& First-order relaxation processes \\
& & & radioactive decay, simple stable linear systems \\
& & & mean-reverting stochastic dynamics (OU) \\

\midrule

\multirow{2}{*}{Harmonic} & $\dot x(t) = v(t)$ & $dX_t = V_t\,dt$ & Oscillations near stable equilibria \\
& $\dot v(t) = -\omega^2 x(t)$ & $dV_t = -\omega^2 X_t\,dt + \sigma\,dW_t$ & LC circuits, linearized mechanical vibrations \\

\midrule

\multirow{2}{*}{Damped Harmonic}
& $\dot x(t) = v(t)$ & $dX_t = V_t\,dt$ & Damped mechanical vibrations \\
& $\dot v(t) = -\omega^2 x(t) - 2\gamma v(t)$ & $ dV_t = (-\omega^2 X_t - 2\gamma V_t)\,dt + \sigma\,dW_t$ & RLC circuits, underdamped Langevin dynamics \\

\midrule

\multirow{2}{*}{Lotka--Volterra}
& $\dot x(t) = \alpha x(t) - \beta x(t)y(t)$ & $dX_t = (\alpha X_t - \beta X_tY_t)\,dt + \sigma\,dW_t^{(1)}$ & Predator--prey population dynamics, ecological systems
\\
& $\dot y(t) = \delta x(t)y(t) - \gamma y(t)$
& 
$dY_t = (\delta X_tY_t - \gamma Y_t)\,dt + \sigma\,dW_t^{(2)}$
& chemical reaction kinetics, resource--consumer models \\

\midrule

\multirow{3}{*}{Lorenz}
& $\dot x(t) = \sigma\bigl(y(t)-x(t)\bigr)$ & $dX_t = \sigma\bigl(Y_t - X_t\bigr)\,dt + \eta\,dW_t^{(1)}$ &  Canonical nonlinear and chaotic system
\\
& $\dot y(t) = x(t)\bigl(\rho-z(t)\bigr)-y(t)$ & $dY_t = \bigl(X_t(\rho - Z_t) - Y_t\bigr)\,dt + \eta\,dW_t^{(2)}$ & sensitivity to initial conditions
\\
& $\dot z(t) = x(t)y(t) - \beta z(t)$ &
$dZ_t = \bigl(X_tY_t - \beta Z_t\bigr)\,dt + \eta\,dW_t^{(3)}$
& benchmark for learning chaotic dynamics \\

\bottomrule
\end{tabular}
}
\label{tab:synthetic_dynamics_summary}
\end{table}

\subsection{Lotka--Volterra System}
A Lotka--Volterra System is also known as a predator-prey dynamical system with prey populations $x(t)$ and predator populations $y(t)$, which can be written as the following ODE:
\begin{equation}
\dot x(t) = \alpha x(t) - \beta x(t)y(t),
\qquad
\dot y(t) = \delta x(t)y(t) - \gamma y(t).
\end{equation}
It is often used to model simple predator-prey population dynamics~\citep{48_lotkavolterra_citation} in ecological environments, chemical reaction kinetics, and resource-consumer models. The stochastic version is used to model the influence of the outside environment in the form of perturbations, given by: 
\begin{equation}
dX_t = \bigl(\alpha X_t - \beta X_tY_t\bigr)\,dt + \sigma\,dW_t^{(1)},
\qquad
dY_t = \bigl(\delta X_tY_t - \gamma Y_t\bigr)\,dt + \sigma\,dW_t^{(2)},
\end{equation}
where $W_{t}$ is the standard Wiener process, $\alpha$ is the prey growth rate in the absence of predators, $\beta$ is the rate at which predators comsume prey, $\gamma$ is the death rate of predators in the absence of prey to feed over, and $\delta$ is the rate at which the consumed prey converted into predator population, and $\sigma$ is the constant exogenous noise rate. 

\subsection{Lorenz System}
The Lorenz system was originally used to model simplified thermal convection dynamics in the atmosphere~\citep{49_lorenz_citation} and was shown to exhibit interesting properties, including sensitivity to initial conditions, deterministic chaotic behavior, and strange attractor patterns. It's now considered a canonical example~\citep{50_lorenz_citation} in non-linear and chaotic dynamical system studies. The state evolution can be cast into an ODE as follows: 
\begin{equation}
\dot x(t) = \sigma\bigl(y(t)-x(t)\bigr),\qquad
\dot y(t) = x(t)\bigl(\rho-z(t)\bigr)-y(t),\qquad
\dot z(t) = x(t)y(t) - \beta z(t).
\end{equation}
A standard additive-noise variant is used to incorporate modeling error and exogenous influences. The stochastic Lorenz system can be written as:
\begin{equation}
\begin{aligned}
dX_t &= \sigma\bigl(Y_t - X_t\bigr)\,dt + \eta\,dW_t^{(1)}, \\
dY_t &= \bigl(X_t(\rho - Z_t) - Y_t\bigr)\,dt + \eta\,dW_t^{(2)}, \\
dZ_t &= \bigl(X_tY_t - \beta Z_t\bigr)\,dt + \eta\,dW_t^{(3)}.
\end{aligned}
\end{equation}
where $W_{t}^{(i)}$ are the standard Wiener processes, and ($\sigma, \rho, \beta$) are system-dependent model parameters controlling coupling strengths and dissipation rates, and $\eta$ is the constant exogenous noise rate. 

\subsection{Hopper Physics (MuJoCo)}
Hopper Physics is a dataset generated using the dm-control MuJoCo physics simulation library, which models the locomotion of a rigid-legged system governed by gravity and rigid-body mechanics. Let $q(t)$ be the generalized coordinates of the joints and $\dot q(t)$ be their generalized velocities. A standard form can be written as:
\begin{equation}
M(q)\ddot q + C(q,\dot q) + g(q) = \tau + J(q)^\top \lambda,
\end{equation}
where $M$ is the mass matrix, $C$ collects Coriolis/centrifugal terms, $g$ gravity, $\tau$ actuator torques, and $J^\top\lambda$ enforces constraints/contacts.
Equivalently in first-order state $s(t)=(q(t),\dot q(t))$:
\begin{equation}
\dot s(t) = 
\begin{bmatrix}
\dot q(t) \\
M(q)^{-1}\bigl(\tau + J(q)^\top\lambda - C(q,\dot q) - g(q)\bigr)
\end{bmatrix}.
\end{equation}

\section{Proofs of Main Theoretical Results}
\label{section: Mathematical Proofs}

\subsection{Proof of Theorem \ref{thm: theorem_1}}
\label{section: proof of theorem 4.1}

% \begin{theorem_1}\label{thm: theorem_1}
\begin{theorem_1_restated}
Let $f$ be a function in $\mathcal{C}^{m}[a, b]$ with bounded $m$-th derivative $\|f^{(m)}\|_{\infty} \ll \infty$, and let $t_{i}=a+(i-1)h$ be $n+1$ equidistant points to be interpolated, with the distant between consecutive points being $h=\frac{b-a}{n}$. Then, assuming a well defined B-Spline interpolant with knots $\{t_{i}\}_{i=0}^{n}$, $ \mu(t)= \sum_{i=0}^{n} c_{i,m} \mathcal{B}_{i,m} (t)$, where $\mathcal{B}_{i,m}$ is the B-Spline polynomial of degree $m$ as defined in Equation \ref{eq: Cox-De Boor Recursion}, has the asymptotic function approximation error $\mathcal{O}(n^{-m})$ compared to the linear interpolant $\mathcal{O}(n^{-2})$, defined as $p_{1}(t)= f(t_{i})+\frac{f(t_{i+1})-f(t_{i})}{t_{i+1}-t_{i}}(t-t_{i})$ for $t \in [t_{i}, t_{i+1}]$. Similarly, the approximation error between the first derivatives of the interpolants and the underlying function is $\mathcal{O}(n^{-m+1})$ for the B-Spline interpolant and $\mathcal{O}(n^{-1})$ for the linear interpolant.

Or more precisely,

\begin{itemize}
    \item$\|f(t)-\mu(t)\|_\infty \le (1+\|I\|)\,C^{1}_{m,a,b}\,n^{-m}\,\|f^{(m)}\|_\infty$
    \item $\|f(t) - p_{1}(t)\|_{\infty} \le  \frac{(b-a)^{2}}{8}\|f''\|_{\infty} \cdot n^{-2}$
    \item $\|f'(t)-\mu'(t)\|_\infty\le C^{2}_{m,a,b}\,n^{-m+1}\,\|f^{(m)}\|_\infty$
    \item $\|f'(t)-p_{1}'(t)\|_\infty\le (b-a)\,n^{-1}\,\|f''\|_{\infty}\,$
\end{itemize}

% and 
% \begin{align*}
%     \|f(t) - p_{1}(t)\|_{\infty} \le  \frac{(b-a)^{2}}{8}\|f''\|_{\infty} \cdot n^{-2},
% \end{align*}
where $\|I\|= \sup\Big\{\frac{\|If\|}{\|f\|}:  v\in \mathcal{C}[a,b]\setminus\{0\} \Big\}$ and $C_{m,a,b}$ is a constant depending only on degree $m$ and extremums $a,b$ and $C_{m}$ is a constant depending only on degree $m$.
\end{theorem_1_restated}
% \end{theorem_1}

\begin{proof}
For the first two function approximation bounds, the proof essentially boils down to combining the results from the first part of Corollary \ref{cor: corollary_1} and Lemma \ref{thm: lemma_2} containing the function approximation error rates for the linear interpolant $p_{1}(t)$ and B-Spline iterpolant $\mu(t)$. 

From Corollary \ref{cor: corollary_1} we can write,
\begin{align}
\label{eq: linear_interpolant_approx_error}
\|f(t)-p_{1}(t)\|_{\infty} \leq \frac{1}{8}\Delta_{\max}^{2}|f''(t)|_{\infty}.
\end{align}
Here, $\Delta_{i}= t_{i+1}-t_{i}$ and $\Delta_{\max}= \max_{i}\Delta_{i}$, and for equidistant points, $\Delta_{\max}= \frac{b-a}{n}$ which after substituting in the above Equation \ref{eq: linear_interpolant_approx_error} gives us the required result for linear interpolation: 
\begin{align}
\label{eq: linear_interpolant_approx_error_with_n}
    \|f(t) - p_{1}(t)\|_{\infty} \le  \frac{(b-a)^{2}}{8}\|f''\|_{\infty} \cdot (n)^{-2},
\end{align}
Similarly, knowing that  $\Delta_{\max}= \frac{b-a}{n}$ for equidistant points and utilizing result from Lemma \ref{thm: lemma_1} gives us, 
\begin{align}
\label{eq: bspline_interpolant_approx_error_with_n}
\|f(t)-\mu(t)\|_\infty
\le
(1+\|I\|)\,C^{1}_{m,a,b}\,n^{-m}\,\|f^{(m)}\|_\infty.
\end{align}

Using similar arguments for results from Lemma~\ref{thm: lemma_5} and second part of Corollary~\ref{cor: corollary_1}, and substituting $\Delta_{\max}= \frac{b-a}{n}$ in the upper bounds we have that, 

\begin{align}
\label{eq: bspline_derivative_approx_error_with_n}
\|f'(t)-\mu'(t)\|_\infty\le C^{2}_{m,a,b}\,n^{-m+1}\,\|f^{(m)}\|_\infty
\end{align}

and 

\begin{align}
\label{eq: linear_derivative_approx_error_with_n}
\|f'(t)-p_{1}'(t)\|_\infty\le (b-a)\,n^{-1}\,\|f''\|_{\infty}
\end{align}

From Equations \ref{eq: linear_interpolant_approx_error_with_n}-\ref{eq: linear_derivative_approx_error_with_n}, we get the required result.
\end{proof}

\subsection{Proof of Theorem \ref{thm: theorem_2}}
\label{section: proof of theorem 4.2}

% \begin{theorem_2}
\begin{theorem_2_restated}
Let $\mu(t)$ be a well defined B-Spline interpolant, interpolanting function $f(t) \in \mathcal{C}^{2}[a,b]$ with observation sites at $\{t_{i}\}_{i=0}^{n}$,  $ \mu(t)= \sum_{i=0}^{n} c_{i,m} \mathcal{B}_{i,m} (t)$, where $\mathcal{B}_{i,m}$ is the B-Spline polynomial of degree $m$ as defined in Equation \ref{eq: Cox-De Boor Recursion}. Then, degree $1$ interpolant or $\mu_{t}$ with $m=1$ is equivalent to the linear interpolant defined as: 
$$
p_{1}(t)= f(t_{i})+\frac{f(t_{i+1})-f(t_{i})}{t_{i+1}-t_{i}}(t-t_{i}) \: \text{ for any } \: t \in [t_{i}, t_{i+1}].
$$
\end{theorem_2_restated}
% \end{theorem_2}

\begin{proof}
We clamp the observation sites to get $n+3$ knots: $\tau_{0}=\tau_{1}<\tau_{2} \cdots \tau_{n}<\tau_{n+1}=\tau_{n+2}$ with $\tau_{0}=\tau_{1}=t_{0}$,  
$\tau_{n+1}=\tau_{n+2}=t_{n}$, and $\tau_{i}=t_{i-1}$ for $i \in [1, n+1]$.

From the definition of B-Splines in Section \ref{sec:linear vs. B-spline}, the 0-th degree B-Spline basis is defined as:
\begin{align}
\label{eq: 0th-degree b-spline apdx}
\mathcal{B}_{i,0}=
\begin{cases}
    1 & \text{if} \quad t\in[\tau_{i}, \tau_{i+1}), \\
    0 & \text{otherwise}.
\end{cases}
\end{align}
With the higher degree B-Spline basis defined recursively as:
\begin{align}
\label{eq: Cox-De Boor Recursion apdx}
\mathcal{B}_{i,m}(t)
=
\frac{t - \tau_i}{\tau_{i+m} - \tau_i}\, \mathcal{B}_{i,m-1}(t),
+
\frac{\tau_{i+m+1} - t}{\tau_{i+m+1} - \tau_{i+1}}\, \mathcal{B}_{i+1,m-1}(t).
\end{align}
The interpolant function $\mu(t)$ of degree $m$ is then a linear combination of B-Splines around $n+3$ knots, written as:
\begin{align}
\label{eq: b-spline interpolant function apdx}
\mu(t) &= \sum_{i=0}^{n+2} c_{i,m} \mathcal{B}_{i,m} (t) \quad \text{s.t.} \quad \mu(t_{0})=f(t_{0}), \: \mu(t_{1}) = f(t_{1}), \: \cdots, \: \mu(t_{n})=f(t_{n}).
\end{align}
Since our degree $m=1$, we thus focus on getting expressions for $\mathcal{B}_{i,1}$, starting with the left and right endpoints. 
From Cox-De Boor recursion and Equation \ref{eq: Cox-De Boor Recursion apdx}, we know that
\begin{align*}
\mathcal{B}_{0,1}(t)= \frac{t-\tau_{0}}{\tau_{1}-\tau_{0}} \mathcal{B}_{0,0}(t) + \frac{\tau_{2}-t}{\tau_{2}-\tau_{1}} \mathcal{B}_{1,0}(t).
\end{align*}
Since $\tau_{0}=\tau_{1}=t_{0}$ we have that $\mathcal{B}_{0,0}\equiv 0$, so
we get that $\mathcal{B}_{0,1}(t)= \frac{t_{1}-t}{t_{1}-t_{0}}\mathcal{B}_{1,0}$, which gives us,
\begin{align}
\label{eq: linear_eq_proof_s_1}
\mathcal{B}_{0,1}(t)=
\begin{cases}
    \frac{t_{1}-t}{t_{1}-t_{0}}, \quad \: t\in [t_{0}, t_{1}),\\
    0, \qquad \qquad \text{otherwise}.
\end{cases}
\end{align}
Similarly for the right endpoint, we have that $\tau_{n+1}=\tau_{n+2}=t_{n+1}$ which implies $\mathcal{B}_{n+2,0} \equiv 0$. And using this in the Cox-De Boor recursion formula (Equation \ref{eq: Cox-De Boor Recursion apdx}) gives 
\begin{align}
\label{eq: linear_eq_proof_s_n+2}
\mathcal{B}_{n+2,1}(t)=
\begin{cases}
    \frac{t-t_{n}}{t_{n+1}-t_{n}}, \quad t\in [t_{n-1}, t_{n}], \\
    0, \quad\quad \quad \quad \quad \text{otherwise}.
\end{cases}
\end{align}
For $i\in[1,n+1]$ we have that 
\begin{align}
\mathcal{B}_{i,1}(t)= \frac{t - \tau_i}{\tau_{i+1} - \tau_i}\, \mathcal{B}_{i,0}(t)
+
\frac{\tau_{i+2} - t}{\tau_{i+2} - \tau_{i+1}}\, \mathcal{B}_{i+1,0}(t),
\end{align}
and from Equation \ref{eq: 0th-degree b-spline apdx} we know that $\mathcal{B}_{i,0}(t)=1$ in $t \in [\tau_{i}, \tau_{i+1})$, $\mathcal{B}_{i+1,0}(t)=1$ in $t \in [\tau_{i+1}, \tau_{i+2})$, and utilizing the fact that $\tau_{i}=t_{i-1}$ for $i\in[1,n+1]$, gives us
\begin{align}
\label{eq: linear_eq_proof_s_i}
\mathcal{B}_{i,1}(t)=
\begin{cases}
\frac{t-t_{i-1}}{t_{i}-t_{i-1}}, \quad t\in [t_{i-1}, t_{i}) \\
\frac{t_{i+1}-t}{t_{i+1}-t_{i}}, \quad t\in [t_{i}, t_{i+1}) \\
0 \quad \quad \quad \quad
\end{cases}
\end{align}
We can now use the constraints from Equation \ref{eq: b-spline interpolant function apdx} and Equations \ref{eq: linear_eq_proof_s_1}, \ref{eq: linear_eq_proof_s_i}, \ref{eq: linear_eq_proof_s_n+2} to solve and get, 
\begin{align}
\label{eq: const. values}
c_{0,1}=c_{1,1}= f(t_{0}), c_{2,1}= f(t_{1}), \cdots, c_{n,1}=f(t_{n-1}), c_{n+1,1}=c_{n+2,1}=f(t_{n})
\end{align}
Finally, to prove the equivalence with the linear interpolant, consider the B-Spline interpolant $\mu(t)$ for $t\in [t_{i}, t_{i+1}]$. We know that,
\begin{align*}
\mu(t) &= \sum_{i=0}^{n+2}c_{i,m}\mathcal{B}_{i.m}(t).
\end{align*}
Utilizing Equations \ref{eq: linear_eq_proof_s_i} and \ref{eq: const. values}, we get that
\begin{align*}
\mu(t)&= f(t_{i})\frac{t_{i+1}-t}{t_{i+1}-t_{i}}+f(t_{i+1})\frac{t-t_{i}}{t_{i+1}-t_{i}} \\
 &= \frac{f(t_{i})(t_{i+1}-t_{i})+f(t_{i})t_{i}-f(t_{i+1})t_{i}+t(f(t_{i+1})-f(t_{i}))}{t_{i+1}-t_{i}}\\
&=f(t_{i})+\frac{f(t_{i+1})-f(t_{i})}{t_{i+1}-t_{i}}\times(t-t_{i}),
\end{align*}
which gives
\begin{align*}
\mu(t)=p_{1}(t)=f(t_{i})+\frac{f(t_{i+1})-f(t_{i})}{t_{i+1}-t_{i}}\times(t-t_{i}).
\end{align*}
Thus, we have proven the equivalence beteween degree $m=1$ B-Spline interpolant and linear interpolant $p_{1}(t)$.
\end{proof}

\subsection{Proof of Theorem \ref{thm: theorem_5}}
\label{section: proof of theorem 4.5}

\begin{theorem_5_restated}
Let $f \in \mathcal{C}^{m}[a,b]$ denote the true underlying dynamics with bounded $m$-th derivative $\|f^{(m)}\|_{\infty} < \infty$, and let $\mu(t) = \sum_{i=0}^{n} c_{i,m} \mathcal{B}_{i,m}(t)$ be the degree-$m$ B-Spline interpolant with equidistant knots $\{t_i\}_{i=0}^{n}$ and mesh spacing $h = \frac{b-a}{n}$. Let $u_{\theta}(t,x)$ be the velocity field learned by minimizing the flow matching regression objective (Equation \ref{eq: SFB loss}), and let $p_1(t)$ be the linear interpolant as defined in Theorem \ref{thm: theorem_2}. Then, choosing the conditional probability path as $p_t(x|z) = \mathcal{N}(x; \mu_t(z), \sigma^2)$ with constant $\sigma$, the error between the learned velocity field and the true dynamics decomposes as:
\begin{align*}
\mathbb{E}_{t,x}[\|u_{\theta}(t,x) - f'(t)\|]
\le
\sqrt{\underbrace{\mathbb{E}_{t,x}[\|u_{\theta}(t,x) - \mu'(t)\|^{2}]}_{\text{regression error}}}
+
\underbrace{C_{m,a,b}\, n^{-m+1}\, \|f^{(m)}\|_{\infty}}_{\text{approximation error}},
\end{align*}
where $C_{m,a,b}$ is a constant depending only on degree $m$ and the interval endpoints $a,b$. In particular, the approximation error is $\mathcal{O}(n^{-m+1})$ for the degree-$m$ B-Spline interpolant compared to $\mathcal{O}(n^{-1})$ for the linear interpolant.
\end{theorem_5_restated}

\begin{proof}
We proceed in two steps: first establishing the derivative approximation error, then connecting to the flow matching objective via the conditional velocity field, finally giving us the decomposition.

\textbf{Step 1: Derivative approximation error.}
From Theorem~\ref{thm: theorem_1} and by setting substituting $\Delta_{\max} = h = \frac{b-a}{n}$ for the equidistant knot sequence, we know that,
\begin{align}
\label{eq: bspline_deriv_error_final}
\|f'(t) - \mu'(t)\|_{\infty} \le C^{2}_{m,a,b}\, n^{-m+1}\, \|f^{(m)}\|_{\infty}.
\end{align}
Similarly, from Theorem~\ref{thm: theorem_1}, we also have the linear interpolant derivative error
\begin{align}
\label{eq: linear_deriv_error}
\|f'(t) - p_1'(t)\|_{\infty} \le (b-a)\, n^{-1}\, \|f''\|_{\infty}.
\end{align}

\textbf{Step 2: Error Decomposition.}
From Equation \ref{eq: gaussian probability path} or Theorem 3 from \citet{8_lipman_FM}, we know that for the conditional probability path $p_t(x|z) = \mathcal{N}(x; \mu_t(z), \sigma^2)$ with constant $\sigma$, the conditional velocity field reduces to
\begin{align}
\label{eq: cond_velocity_dynamics}
u_t(x \mid z) = \frac{\sigma'_t(z)}{\sigma_t(z)}(x - \mu_t(z)) + \mu'_t(z) = \mu'_t(z),
\end{align}
where the last equality follows from $\sigma'_t(z) = 0$. Thus, the flow matching regression objective trains $u_{\theta}$ to approximate $\mu'(t)$, which is itself an approximation to the true dynamics $f'(t)$.

Applying the triangle inequality, we can write
\begin{align*}
\|u_{\theta}(t,x) - f'(t)\|
&\le
\|u_{\theta}(t,x) - u_t(x \mid z)\| + \|u_t(x \mid z) - f'(t)\|_{\infty}\\
\|u_{\theta}(t,x) - f'(t)\|
&\le
\|u_{\theta}(t,x) - \mu'(t)\| + \|\mu'(t) - f'(t)\|_{\infty} \quad \text{from Equation~\ref{eq: cond_velocity_dynamics}}
\end{align*}
Substituting the derivative approximation bound from Equation~\ref{eq: bspline_deriv_error_final} into the second term yields
\begin{align*}
\|u_{\theta}(t,x) - f'(t)\|
&\le
\|u_{\theta}(t,x) - \mu'(t)\| + C^{2}_{m,a,b}\, n^{-m+1}\, \|f^{(m)}\|_{\infty}.
\end{align*}

Finally, taking expectation on both sides respect to $x, t$ and utilizing Jensen's inequality ($\mathbb{E}[\|X\|]\leq \mathbb{E}([\|X\|^{2}])^{1/2}$) gives us the required gives us the required result, 

\begin{align*}
\mathbb{E}_{t,x}[\|u_{\theta}(t,x) - f'(t)\|]
&\le
\sqrt{\mathbb{E}_{t,x}[\|u_{\theta}(t,x) - \mu'(t)\|^{2}]} + C^{2}_{m,a,b}\, n^{-m+1}\, \|f^{(m)}\|_{\infty}.
\end{align*}

% \begin{align*}
% \mathbb{E}[\|u_{\theta}(t,x) - f'(t)\|_{\infty}
% &\le
% \|u_{\theta}(t,x) - \mu'(t)\|_{\infty} + C^{2}_{m,a,b}\, n^{-m+1}\, \|f^{(m)}\|_{\infty}.
% \end{align*}

The first term is the regression error controlled by the flow matching training objective, and the second term is the approximation error determined by the spline degree and the number of observation sites. Since the analogous decomposition for the linear interpolant gives an approximation error of $\mathcal{O}(n^{-1})$ from Equation~\ref{eq: linear_deriv_error}, increasing the spline degree $m$ strictly reduces the approximation component of the bound, thereby pushing the learned velocity field $u_{\theta}$ closer to the true dynamics $f'(t)$.
\end{proof}

\subsection{Proof of Corollary \ref{thm: corollary_1}}
\label{section: proof of corollary 4.6}

\begin{theorem_6_restated}
Under similar assumptions as Theorem~\ref{thm: theorem_5}, let the ground truth and generated samples be defined as: 
\begin{align*}
x_t = x_0 + \int_0^t f'(s)\,ds, \qquad
\hat{x}_t = x_0 + \int_0^t u_{\theta}(s, \hat{x}_s)\,ds.
\end{align*}
Then, for any $t \in [a,b]$ (including times
$t \notin \{t_i\}_{i=0}^{n}$ not in the training data), the sampling error satisfies
\begin{align*}
\mathbb{E}[\|x_t - \hat{x}_t\|]
\le
(b-a)\bigg(
\mathbb{E}[\|u_{\theta} - \mu'\|]
+ C_{m,a,b}\,n^{-m+1}\,\|f^{(m)}\|_{\infty}
\bigg).
\end{align*}
In particular, the approximation component is $\mathcal{O}(n^{-m+1})$
for the degree-$m$ B-Spline interpolant compared to $\mathcal{O}(n^{-1})$
for the linear interpolant.
\end{theorem_6_restated}

\begin{proof}
Since both trajectories share the same initial condition $x_0$, we can
write
\begin{align}
\label{eq: traj_diff_integral}
x_t - \hat{x}_t
= \int_0^t \big[f'(s) - u_{\theta}(s, \hat{x}_s)\big]\,ds.
\end{align}
Taking absolute values and bounding $|t| \le b - a$ gives
\begin{align}
\label{eq: traj_diff_bound}
\|x_t - \hat{x}_t\|
\le (b-a)\,\|f' - u_{\theta}\|.
\end{align}
The result now follows directly from Theorem~\ref{thm: theorem_5},
which gives
\begin{align*}
\mathbb{E}[\|f' - u_{\theta}\|]
\le
\mathbb{E}[\|u_{\theta} - \mu'\|]
+ C_{m,a,b}\,n^{-m+1}\,\|f^{(m)}\|_{\infty}.
\end{align*}
Substituting into Equation~\ref{eq: traj_diff_bound} yields the
stated bound. The comparison with the linear interpolant follows from
Corollary~\ref{cor: corollary_1}, which gives an approximation error
of $\mathcal{O}(n^{-1})$.
\end{proof}

\subsection{Proof of Theorem \ref{thm: theorem_3}}
\label{section: proof of theorem 4.3}

% \begin{theorem_3}
\begin{theorem_3_restated}
Let $\mu(t)$ be the degree $m$ B-Spline interpolant for trajectories $X=[x_{t_{0}}, x_{t_{1}}, \cdots, x_{t_{n}}]$ as defined in Equation \ref{eq: Cox-De Boor Recursion}, and let the latent be $z=(x_{t_{0}}, x_{t_{1}}, \cdots, x_{t_{n}})$ with the distribution $q(z)$ over observed trajectories. Then, choosing the conditional probability path as $p_{t}(x|z) =\mathcal{N}(x; \mu_t(z), \sigma^{2}_{t}(z))$ with $\sigma_{t}(z) = \sigma \rightarrow 0$, enables the following:
\begin{itemize}
    \item The marginal probability distribution written as, $p_{t}(x)= \int p_{t}(x|z)q(z)dz$ satifies the observation distribution with $p_{t=t_{i}}(x)= q(x_{t_{i}}=x)$.
    \item  The marginal velocity field written as $u_{t}(x)= \int u_t(x \mid z) \frac{p_t(x \mid z) \, q(z)}{p_t(x)} \, dz$ and the marginal probability $p_{t}(.)$ obey the continuity equation $\frac{\partial p_t(x)}{\partial t} = -\nabla \cdot(u_{t}(x)p_{t}(x))$.
    \item The conditional probability path defined above is generated by the conditional velocity field given by $u_{t}(x\mid z)= \sum_{i=1}^{n} m \times c_{i,m} \Big\{ \frac{\mathcal{B}_{i,m-1}(t)}{t_{i+m}-t_{i}}-\frac{\mathcal{B}_{i+1,m-1}(t)}{t_{i+m+1}-t_{i+1}}\Big\}$.
\end{itemize}
\end{theorem_3_restated}
% \end{theorem_3}

\begin{proof}
(A). We know that $\lim_{\sigma \rightarrow 0}\mathcal{N}(x; \mu(t\mid z), \sigma^{2}I)=\delta(x-\mu(t\mid z))$, where $\delta$ is the delta distribution. For $t= t_{i}$ we can thus write
\begin{align*}
p_{t=t_{i}}(x)= \int \delta(x-\mu(t_{i}\mid z) q(z) dz.
\end{align*}
Since $\mu(.)$ is a B-Spline interpolant, we have that $\mu(t_{i}\mid z)=x_{t_{i}}$. Thus, we have
\begin{align*}
p_{t=t_{i}}(x)&= \int \delta(x-x_{t_{i}}) q(z) dz\\
&=\int \cdots\int \delta(x-x_{t_{i}})q(x_{t_{0}}, x_{t_{1}}, \cdots, x_{t_{n}}) dx_{t_{0}} \cdots dx_{t_{n}}\\
&= \int \delta(x-x_{t_{i}}) \Big(\int \cdots\int q(x_{t_{0}}, x_{t_{1}}, \cdots, x_{t_{n}}) dx_{t_{0}} \cdots dx_{t_{n}} \Big) dx_{t_{i}}\\
&= \int \delta(x-x_{t_{i}})q(x_{t_{i}})dx_{t_{i}}\\
&= q(x=x_{t_{i}}),
\end{align*}
which gives us the required result. 

(B). This result follows from a simple extension of Theorem 1 from \citet{8_lipman_FM}. We can write the marginal $p_{t}$ as $p_t(x)= \int p_t(x | z) \, q(z) \, dz$, which gives us
\begin{align*}
 \frac{\partial p_t(x)}{\partial t}
= \frac{\partial}{\partial t} \int p_t(x | z) \, q(z) \, dz.
\end{align*}
Invoking Continuity Equation (from Equation \ref{eq: continuity equation}) for the conditional probability, we get
\begin{align*}
\frac{\partial p_t(x)}{\partial t} &= \int -\nabla \cdot \left( u_t(x \mid z) \, p_t(x \mid z) \right)
\, q(z) \, dz\\
&= \int -\nabla \cdot \left( u_t(x \mid z) \, p_t(x \mid z)
\, q(z) \right) \, dz\\
&= -\nabla \cdot \int u_t(x \mid z) \,
\frac{p_t(x \mid z) \, q(z)}{p_t(x)} \, p_t(x) \, dz\\
&= - \nabla \cdot \left( \int u_t(x \mid z) \,
\frac{p_t(x \mid z) \, p(z)}{p_t(x)} \, dz\right)p_t(x)\\
&= - \nabla \cdot \left( \int u_t(x \mid z) \,
\frac{p_t(x \mid z) \, q(z)}{p_t(x)} \, dz \right)p_{t}(x).
\end{align*}
Utilizing the fact that the marginal velocity can be written as $u_{t}(x)= \int u_t(x \mid z) \frac{p_t(x \mid z) \, q(z)}{p_t(x)} \, dz$ we get the required result:  
\begin{align*}
\frac{\partial p_t(x)}{\partial t} = -\nabla \cdot(u_{t}(x)p_{t}(x)).
\end{align*}

(C). The random variable $x\sim\mathcal{N}(x; \mu_t(z), \sigma^{2}I)$ can be written as
\begin{align*}
    x_{t}= \mu_t(z)+\sigma \epsilon, \quad \quad \epsilon\sim\mathcal{N}(0,1).
\end{align*}
Then, from Equation \ref{eq: gaussian probability path} or Theorem 3 from \citet{8_lipman_FM} we get that 
\begin{align*}
u_t(x | z) &= \frac{\sigma_t'(z)}{\sigma_t(z)} (x - \mu_t(z)) + \mu_t'(z) = 0+\frac{d\mu_t(z)}{dt},
\end{align*}
where the last equality follows from the fact that $\sigma$ is constant. Utilizing the result from Lemma \ref{thm: lemma_4}, we obtain
\begin{align*}
u_t(x | z)= \sum_{i=1}^{n} m \times c_{i,m} \Big\{ \frac{\mathcal{B}_{i,m-1}(t)}{t_{i+m}-t_{i}}-\frac{\mathcal{B}_{i+1,m-1}(t)}{t_{i+m+1}-t_{i+1}}\Big\},
\end{align*}
which completes the proof.
\end{proof}

\subsection{Proof of Proposition \ref{thm: theorem_4}}
\label{section: proof of proposition 4.4}

% \begin{theorem_4}\label{thm: theorem_4}
\begin{theorem_4_restated}
Let the trajectories $X=[x_{t_{0}}, x_{t_{1}}, \cdots, x_{t_{n}}]$ be observations from the SDE dynamics as defined in Equation \ref{eq: SDE}, with constant diffusion term $g(t)=\sigma$,  and let the latent be $z=(x_{t_{0}}, x_{t_{1}}, \cdots, x_{t_{n}})$ with the distribution $q(z)$ over observed trajectories. Then, choosing the conditional probability path as a Spline-Bridge $p_{t}(x|z) =\mathcal{N}(x; \mu_t(z), \sigma^{2}_{t}(z))$ with $\lim_{\sigma(t_{i};z) \rightarrow0}$, where $\mu_{t}(z)$ is the B-spline interpolant and setting the time conditional weights $\lambda(t)=\sigma_{t}(z)$ we can write the regression loss $\mathcal{L}_{\mathrm{SplineFlow}}$ as: 
\begin{equation*}
\begin{aligned}
\label{eq: SFB loss}
\mathcal{L}_{\mathrm{SplineFlow}}(\theta, \phi)
= \mathbb{E}_{t, z, x}
\Big[
\| u_{\theta}(t,x) - (\epsilon\sigma_{t}'(z)+\mu_{t}'(z)) \|^2 +
\|\lambda(t) s_{\phi}(t,x) + \epsilon \|^2
\Big],
\end{aligned}
\end{equation*}
where $\epsilon \sim \mathcal{N}(0,1)$ and the expectation is defined over time $t \sim \mathcal{U}(0,1)$, the latent variables $z \sim q(z)$ and the conditional probability samples $x \sim p_t(x \mid z)$.
\end{theorem_4_restated}
% \end{theorem_4}

\begin{proof}

We know from the preliminaries that for a given diffusion $g(t)$ the SDE dynamics can be recovered by approximating the drift ($u_{t}(.)$) by utilizing the probability flow velocity field $u^{o}_{t}(.)$ and the probability score ($\nabla_{t}p(.)$). The regression loss from SF2M objective~\citep{20_tong_SF2M} can thus be written as:
\begin{equation}\label{eq:sf2m_loss_apdx}
\begin{aligned}
\mathcal{L}_{[\mathrm{SF}]^{2}\mathrm{M}}(\theta, \phi)
=
\mathbb{E}_{t, z, x}
\Big[
\| u_{\theta}(t,x) - u^{o}_t(x | z) \|^2 
+ \lambda(t)^2
\| s_{\phi}(t,x) - \nabla \log p_t(x \mid z) \|^2
\Big].
\end{aligned}
\end{equation}
And as shown in \citet{8_lipman_FM} for conditional gaussian paths $p_t(x | z) = \mathcal{N}(x \: | \: \mu_t(z), \sigma_t(z)^2)$, the velocity field inducing the distribution can be written as
\begin{align}
\label{eq: gaussian conditional velocity apdx}
u^{o}_t(x | z) &= \frac{\sigma_t'(z)}{\sigma_t(z)} (x - \mu_t(z)) + \mu_t'(z).
\end{align}
The samples from the conditional probability path $p_{t}(x|z) =\mathcal{N}(x; \mu_t(z), \sigma^{2}_{t}(z))$ can be written as:
\[
x_{t}= \mu_t(z)+\epsilon \sigma_{t}(z).
\]
After substitution in Equation \ref{eq: gaussian conditional velocity apdx}, we have
\begin{align}
\label{eq: loss_term_1}
u^{o}_t(x | z) = \sigma_t'(z)\epsilon + \mu_t'(z).
\end{align}
And since $x_{t}$ is a gaussian random variable, we get
\begin{align*}
    \nabla_{x}p(x \mid z)= -\frac{(x-\mu_{t}(z))}{\sigma_{t}^{2}(z)},
\end{align*}
which after using the fact that $x_{t}= \mu_t(z)+\epsilon \sigma_{t}(z)$ gives us
\begin{align}
\label{eq: loss_term_2}
    \nabla_{x}p(x \mid z)= -\frac{\epsilon}{\sigma_{t}(z)}.
\end{align}
Substituting Equation \ref{eq: loss_term_2}, Equation \ref{eq: loss_term_1}, and the given $\lambda(t)=\sigma_{t}(z)$ into the regression loss in Equation \ref{eq:sf2m_loss_apdx} gives us the required result,
\begin{align*}
\mathcal{L}_{\mathrm{SplineFlow}}(\theta, \phi)
=
&\mathbb{E}_{t, z, x}
\Big[
\| u_{\theta}(t,x) - (\epsilon\sigma_{t}'(z)+\mu_{t}'(z)) \|^2 +
\|\lambda(t) s_{\phi}(t,x) + \epsilon \|^2
\Big].
\end{align*}
\end{proof}

\subsection{Technical Lemmas used for Proving Main Theorems}
\label{section: Technical Lemmas}

% \begin{lemma_1}
\begin{lemma}
\label{thm: lemma_1}
Let $f(t)$ be a continuous and twice differentiable function in $\mathcal{C}^{2}[t_{i}, t_{i+1}]$, with $t_{i}, t_{i+1} \in \mathbb{R}$, then $\exists \xi \in [t_{i}, t_{i+1}]$ s.t.
\begin{align*}
f(t)= f(t_{i})+\Big(\frac{f(t_{i+1})-f(t_{i})}{t_{i+1}-t_{i}}\Big)(t-t_{i})+\frac{f''(\xi)}{2}(t-t_{i})(t-t_{i+1}).
\end{align*}
\end{lemma}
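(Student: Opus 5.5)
The plan is the classical interpolation-remainder argument, via an auxiliary function and repeated use of Rolle's theorem. Write $p_1(t)= f(t_i)+\frac{f(t_{i+1})-f(t_i)}{t_{i+1}-t_i}(t-t_i)$ and $\omega(t)=(t-t_i)(t-t_{i+1})$. Then $p_1$ agrees with $f$ at $t_i$ and $t_{i+1}$, and $\omega$ vanishes exactly at these two nodes.

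First, the trivial cases. If $t=t_i$ or $t=t_{i+1}$, both sides agree for any choice of $\xi$, since $f(t)=p_1(t)$ and $\omega(t)=0$. So fix $t\in(t_i,t_{i+1})$, where $\omega(t)\neq 0$. Define the constant $K=\frac{f(t)-p_1(t)}{\omega(t)}$. The goal becomes showing that $K=f''(\xi)/2$ for some $\xi\in[t_i,t_{i+1}]$.

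Next, consider the auxiliary function $g(s)=f(s)-p_1(s)-K\,\omega(s)$ for $s\in[t_i,t_{i+1}]$. It is $\mathcal{C}^2$ because $f$ is and $p_1,\omega$ are polynomials. It has three distinct zeros: $s=t_i$, $s=t_{i+1}$, and $s=t$, the last by the choice of $K$. Apply Rolle's theorem on $[t_i,t]$ and on $[t,t_{i+1}]$. This gives two distinct points $\eta_1<\eta_2$ in $(t_i,t_{i+1})$ with $g'(\eta_1)=g'(\eta_2)=0$. Applying Rolle once more to $g'$ on $[\eta_1,\eta_2]$ yields $\xi\in(\eta_1,\eta_2)\subset[t_i,t_{i+1}]$ with $g''(\xi)=0$.

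Finally, compute $g''$. Since $p_1''\equiv 0$ and $\omega''\equiv 2$, we get $g''(s)=f''(s)-2K$. So $K=f''(\xi)/2$, and substituting back into the definition of $K$ gives the stated identity.

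There is no serious obstacle. The only points needing care are excluding the endpoints so that $\omega(t)\neq0$, and noting that the second Rolle step needs $g'$ differentiable, which the $\mathcal{C}^2$ hypothesis supplies. Note also that $\xi$ depends on $t$; this is harmless for the later use in Corollary~\ref{cor: corollary_1}, where one bounds $|f''(\xi)|\le\|f''\|_\infty$ and $|\omega(t)|\le \Delta_i^2/4$ to get the $\frac18\Delta_{\max}^2$ rate. For vector-valued $f$ the identity is applied componentwise, with possibly different $\xi$ per coordinate.
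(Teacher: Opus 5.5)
Your proposal is correct and is essentially the paper's own argument: the same auxiliary function $g(s)=f(s)-p_1(s)-K\,\omega(s)$, which the paper writes with $L=f-p_1$, $\phi=\omega$ and $K=L(t)/\phi(t)$. It has zeros at $t_i$, $t$ and $t_{i+1}$, and the argument applies Rolle's theorem twice and uses $g''=f''-2K$. You are slightly more careful than the paper, which silently assumes $t$ is strictly inside $(t_i,t_{i+1})$ and never separately treats the endpoints, where $\omega(t)=0$.
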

% \end{lemma_1}

\begin{proof}
Define the linear interpolant as
\begin{align}
\label{eq: thm1_eq1}
p_1(t) := f(t_{i}) + \frac{f(t_{i+1})-f(t_{i})}{t_{i+1}-t_{i}}(t-t_{i}).
\end{align}
Consider some intermediary functions as
\begin{align}
\label{eq: thm1_eq2}
L(t) := f(t) - p_1(t),
\qquad
\phi(t) := (t-t_{i})(t-t_{i+1}).
\end{align}
For a fixed $x \in (t_{i},t_{i+1})$, define the auxiliary function as
\begin{align*}
g(y) := L(y) - \frac{L(t)}{\phi(t)} \, \phi(y),
\qquad y \in [t_{i},t_{i+1}].
\end{align*}
We observe that
\begin{align*}
g(t_{i}) = g(t_{i+1}) = g(t) = 0.
\end{align*}
Thus by Mean Value Theorem we know that, $\exists \xi_1 \in (t_{i},x)$ and $\exists \xi_2 \in (t,t_{i+1})$ such that
\begin{align*}
g'(\xi_1) = g'(\xi_2) = 0.
\end{align*}
Applying Mean Value Theorem theorem again, we get that $\exists \xi \in (t_{i},t_{i+1})$ such that 
\begin{align*}
g''(\xi) = 0.
\end{align*}
Since $p_{1}(t)$ is linear, $p_{1}''(t)=0$, and $\phi''(y) = 2$. We thus get 
\begin{align*}
g''(y) &= f''(y) - 2\frac{L(t)}{\phi(t)},\\ 
\implies &f''(\xi) - \frac{2 L(t)}{\phi(t)} = 0\\
\implies & L(t)= \frac{f''(\xi)\phi(t)}{2}\\
\implies & L(t) =\frac{f''(\xi)}{2} (t-t_{i})(t-t_{i+1}).
\end{align*}
Using this and Equations \ref{eq: thm1_eq1} and \ref{eq: thm1_eq2}, we get
\begin{align*}
f(t)
= p_1(t)
+ \frac{f''(\xi)}{2}(t-t_{i})(t-t_{i+1}),
\qquad \xi \in (t_{i},t_{i+1}),
\end{align*}
which completes the proof.
\end{proof}

% \begin{corollary_1}
\begin{corollary}
\label{cor: corollary_1}
Let $f(t)$ be a continuous and twice differentiable function in $\mathcal{C}^{2}[a,b]$, sampled at timepoints $\{t_{0}, t_{1}, \cdots, t_{n}\}$ such that $a<t_{0}<t_{1} \cdots <t_{n}<b$ with $p_1(t) = f(t_{i}) + \frac{f(t_{i+1})-f(t_{i})}{t_{i+1}-t_{i}}(t-t_{i})$ be the linear interpolant in the interval $[t_{i}, t_{i+1}]$. Then, the approximation error is given by the following inequality:
\begin{align*}
    \|f(t)-p_{1}(t)\|_{\infty} \leq \frac{1}{8}\Delta_{\max}^{2}|f''(t)|_{\infty}, 
\end{align*}
Similarly, the approximation error of the first derivatives is given by the following inequality:
\begin{align*}
    \|f'(t)-p_{1}'(t)\|_{\infty} \leq \Delta_{\max}|f''(t)|_{\infty}, 
\end{align*}
where $\Delta_{i}= t_{i+1}-t_{i}$ and $\Delta_{\max}= \max_{i}\Delta_{i}$.
\end{corollary}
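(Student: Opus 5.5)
The function bound will come directly from Lemma~\ref{thm: lemma_1}, applied on each subinterval, followed by a supremum over all subintervals. The derivative bound will use a separate mean-value argument on the same subinterval.

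\textbf{Function error.} Fix $t \in [t_i, t_{i+1}]$. Lemma~\ref{thm: lemma_1} gives a $\xi \in [t_i,t_{i+1}]$ with
\[
f(t)-p_1(t) = \tfrac{f''(\xi)}{2}(t-t_i)(t-t_{i+1}),
\]
so $|f(t)-p_1(t)| \le \tfrac12 \|f''\|_\infty \, |(t-t_i)(t-t_{i+1})|$. The quadratic $|(t-t_i)(t-t_{i+1})|$ is maximized at the midpoint $t=(t_i+t_{i+1})/2$, where it equals $\Delta_i^2/4$. This gives $|f(t)-p_1(t)|\le \tfrac18 \Delta_i^2\|f''\|_\infty \le \tfrac18\Delta_{\max}^2\|f''\|_\infty$. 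Taking the supremum over $t$ and over all subintervals yields the first inequality. The points of $[a,b]$ outside $[t_0,t_n]$ are ignored, since $p_1$ is only defined on the subintervals.

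\textbf{Derivative error.} On the open interval $(t_i,t_{i+1})$ the interpolant has constant slope $p_1'(t) = \frac{f(t_{i+1})-f(t_i)}{t_{i+1}-t_i}$. By the mean value theorem this equals $f'(\eta)$ for some $\eta\in(t_i,t_{i+1})$. Hence
\[
|f'(t)-p_1'(t)| = |f'(t)-f'(\eta)| \le \|f''\|_\infty\,|t-\eta| \le \|f''\|_\infty\,\Delta_i,
\]
where the middle step applies the mean value theorem to $f'$, which is allowed because $f\in\mathcal{C}^2$. Taking the supremum gives the second inequality.

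The only point needing care is that $p_1'$ is undefined at the knots $t_i$. I would handle this by interpreting $\|\cdot\|_\infty$ as the essential supremum, or equivalently by using the one-sided derivatives there. Either way the same bound holds, because each one-sided slope is still the slope of the secant on an adjacent subinterval. No other step is a real obstacle. One could sharpen the constant to $\Delta_{\max}/2$ by Taylor expanding about $t$, but the stated constant is all that is needed.
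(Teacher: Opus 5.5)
Your proof is correct. The function-error part is the same as the paper's: Lemma~\ref{thm: lemma_1} combined with the midpoint bound $|(t-t_i)(t-t_{i+1})|\le \Delta_i^2/4$.

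For the derivative bound you take a genuinely different route. The paper differentiates the identity $f(t)=p_1(t)+\tfrac{f''(\xi)}{2}(t-t_i)(t-t_{i+1})$ term by term, treating $\xi$ as a constant. This gives $f'(t)-p_1'(t)=\tfrac{f''(\xi)}{2}\bigl((t-t_i)+(t-t_{i+1})\bigr)$. But $\xi=\xi(t)$ depends on $t$, and $f''\circ\xi$ need not even be differentiable when $f$ is only $\mathcal{C}^2$. So this intermediate identity is unjustified, and it is false in general. For $f(t)=t^3$ on $[0,1]$ one has $\xi(t)=(t+1)/3$. The paper's formula then predicts zero derivative error at $t=1/2$, whereas $f'(1/2)-p_1'(1/2)=-1/4$. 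The paper's final inequality is still true, but its derivation does not establish it.

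Your argument avoids this. You identify the secant slope with $f'(\eta)$ by the mean value theorem and then use that $f'$ is $\|f''\|_\infty$-Lipschitz. This is rigorous, needs only that $f'$ be Lipschitz, and handles the knots cleanly through one-sided slopes. The paper's route, had it been valid, would have had the economy of getting both bounds from the single error formula of Lemma~\ref{thm: lemma_1]}. Your Taylor-expansion remark is also right. Expanding $f(t_i)$ and $f(t_{i+1})$ about $t$ and using $(t_{i+1}-t)^2+(t-t_i)^2\le\Delta_i^2$ rigorously gives the sharper constant $\Delta_{\max}/2$. That constant is attained by $f(t)=t^2/2$ at an endpoint.
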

% \end{corollary_1}

\begin{proof}
From Lemma \ref{thm: lemma_1}, we know that for $t \in[t_{i}, t_{i+1}]$, $\exists\xi \in[t_{i}, t_{i+1}]$ such that $f(t)$ could be written as
\begin{align*}
f(t)= f(t_{i})+\Big(\frac{f(t_{i+1})-f(t_{i})}{t_{i+1}-t_{i}}\Big)(t-t_{i})+\frac{f''(\xi)}{2}(t-t_{i})(t-t_{i+1}),
\end{align*}
 which could be rewritten as
\begin{align}
\label{eq:1st_order_approx}
f(t)= p_{1}(t)+\frac{f''(\xi)}{2}(t-t_{i})(t-t_{i+1}).
\end{align}
The approximation error then becomes 
\begin{align*}
f(t)-p_{1}(t) = \frac{f''(\xi)}{2}(t-t_{i})(t-t_{i+1}), \quad
\|f(t)-p_{1}(t)\|_{\infty} \leq \frac{f''(\xi)}{8}(t_{i+1}-t_{i})^{2}.
\end{align*}
Taking a supremum over all the intervals gives us the approximation error bound
\begin{align*}
\|f(t)-p_{1}(t)\|_{\infty}& \leq \Delta_{\max}^{2}\frac{\|f''(t)\|_{\infty}}{8}.
\end{align*}

Similarly, differentiating Equation ~\ref{eq:1st_order_approx} gives us,
\begin{align*}
f'(t)-p_{1}'(t)=&\frac{f''(\xi)}{2}((t-t_{i})+(t-t_{i+1}))\\
&\leq \frac{f''(\xi)}{2}(\|t-t_{i}\|+\|t-t_{i+1}\|)\\
&\leq f''(\xi)\Delta_{\max}
\end{align*}

Giving us the required bound on the first derivatives,

\begin{align*}
    \|f'(t)-p_{1}'(t)\|_{\infty} \leq \Delta_{\max}|f''(t)|_{\infty}, 
\end{align*}

\end{proof}

% \begin{lemma_3}
\begin{lemma}
\label{thm: lemma_3}
Let $t=(t_i)_{i=1}^{n+k}$ be a knot sequence with
$t_1=\cdots=t_k=a<\cdots<b=t_{n+1}=\cdots=t_{n+k}$ and mesh size
$\Delta_{\max}:=\max_i(t_{i+1}-t_i)$.
Let $S_{k,t}$ denote the spline space of degree $k$ with knots $t$.
Assume that the interpolation operator
$I:C([a,b])\to S_{k,t}$ is well-defined and bounded, with interpolation sites at $\tau_{0}<\tau_{1}<\cdots<\tau_{n}$ for $g\in C^k([a,b])$, such that $Ig(\tau_{i})= g(\tau_{i})$.
Then there exists a constant $C_k>0$, depending only on $k$, such that
\begin{align*}
\|g-Ig\|_\infty
\le
(1+\|I\|)\,C_k\,\Delta_{\max}^k\,\|g^{(k)}\|_\infty .
\end{align*}
\end{lemma}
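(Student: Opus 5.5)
The proof follows the standard Lebesgue-type argument: a bounded projector onto a space is near-optimal, so the estimate reduces to the best-approximation error of the spline space. The first step is to check that $I$ is a projector onto $S_{k,t}$. Since $I$ is well-defined, the interpolation problem at the sites $\tau_0<\cdots<\tau_n$ is uniquely solvable in $S_{k,t}$. For any $s\in S_{k,t}$, both $s$ and $Is$ interpolate the same data, so $Is=s$.

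Next, fix an arbitrary $s\in S_{k,t}$ and write
\begin{align*}
g-Ig=(g-s)-I(g-s).
\end{align*}
Linearity of $I$ and the projector property give this identity directly. Taking sup norms and using the operator norm yields
\begin{align*}
\|g-Ig\|_\infty\le(1+\|I\|)\,\|g-s\|_\infty .
\end{align*}
Taking the infimum over $s$ then gives $\|g-Ig\|_\infty\le(1+\|I\|)\,\mathrm{dist}(g,S_{k,t})$.

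The remaining and main step is the Jackson-type estimate
\begin{align*}
\mathrm{dist}(g,S_{k,t})\le C_k\,\Delta_{\max}^k\,\|g^{(k)}\|_\infty ,
\end{align*}
with $C_k$ independent of the knot placement. I would prove it by constructing a local quasi-interpolant, following de Boor: $Qg=\sum_j \lambda_j(g)\,\mathcal{B}_j$. Each functional $\lambda_j$ depends only on values of $g$ in one knot interval inside $\mathrm{supp}\,\mathcal{B}_j$, for example via the de Boor--Fix dual functionals applied to a local polynomial. The construction requires two properties.

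\begin{itemize}
\item $Q$ reproduces polynomials of order $k$ (degree below $k$, matching $g\in C^k$), so $Qp=p$ for every such $p$.
\item $|\lambda_j(g)|\le D_k\,\|g\|_{\infty,J_j}$ on the relevant local interval $J_j$, with $D_k$ depending only on $k$. This bound is de Boor's stability theorem for the B-spline basis, and it is the crux of the whole argument.
\end{itemize}

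With these in hand, fix a knot interval $[t_i,t_{i+1}]$ and let $p$ be the Taylor polynomial of $g$ of degree $k-1$ at $t_i$. Then
\begin{align*}
g-Qg=(g-p)-Q(g-p)
\end{align*}
holds on that interval. Only $k$ B-splines are active there, they are nonnegative, and they sum to one. Hence the bound is
\begin{align*}
\|g-Qg\|_{[t_i,t_{i+1}]}\le(1+D_k)\,\|g-p\|_{\widetilde J_i}.
\end{align*}
Here $\widetilde J_i$ is the union of the supports of the active B-splines, which spans at most $2k-1$ knot intervals and so has length at most $(2k-1)\Delta_{\max}$. Taylor's remainder gives
\begin{align*}
\|g-p\|_{\widetilde J_i}\le\frac{((2k-1)\Delta_{\max})^k}{k!}\,\|g^{(k)}\|_\infty .
\end{align*}
Taking the maximum over $i$ produces the Jackson bound with $C_k=(1+D_k)(2k-1)^k/k!$. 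Combined with the Lebesgue inequality, this gives the claim.

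The hard part is the uniform bound $D_k$ on the dual functionals, i.e. de Boor's stability result. Its constant must depend only on $k$ and not on the mesh ratio. I would cite it from \citet{15_boor_splinesbook}, and mention \citet{24_boor_irregularbound} for the irregular-mesh setting, rather than reprove it.

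One caveat belongs in the write-up. The statement mixes "degree $k$" with $k$-fold end knots and a $\Delta^k$ rate. I would read $k$ as the spline order, so that polynomials of degree $k-1$ are reproduced and the rate $\Delta_{\max}^k$ is consistent with $g\in C^k$.
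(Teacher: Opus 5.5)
Your proof is correct and follows the paper's route. The paper likewise uses linearity and $Is=s$ to write $g-Ig=(g-s)-I(g-s)$, takes the infimum to get $\|g-Ig\|_\infty\le(1+\|I\|)\operatorname{dist}(g,S_{k,t})$, and then invokes the Jackson bound of Lemma~\ref{thm: lemma_2}. The only difference is that the paper simply cites that bound from de Boor (Ch.~XII), whereas you sketch its standard quasi-interpolant proof; there, for $D_k$ to be independent of the mesh ratio, $J_j$ must be a largest knot interval in the relevant part of $\mathrm{supp}\,\mathcal{B}_j$, as in de Boor's construction.
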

% \end{lemma_3}

\begin{proof}

We know that the interpolant operator $I$ is linear, and for any $s\in S_{k,t}$ since the interpolant is well defined by assumption we get that $Is=s$. We can thus write
\begin{align*}
g-Ig=(g-s)-I(g-s).
\end{align*}
Hence, we get
\begin{align*}
\|g-Ig\|_\infty
\le
\|g-s\|_\infty+\|I(g-s)\|_\infty
\le
(1+\|I\|)\|g-s\|_\infty.
\end{align*}
Taking the infimum over $s\in S_{k,t}$ yields
\begin{align*}
\|g-Ig\|_\infty
\le
(1+\|I\|)\operatorname{dist}(g,S_{k,t}).
\end{align*}
From Lemma \ref{thm: lemma_2}, we know that, if $g\in C^k([a,b])$ then
$\operatorname{dist}(g,S_{k,t})\le C_k \Delta_{\max}^k \|g^{(k)}\|_\infty$.
Combining the two inequalities proves the result.
\end{proof}

\begin{lemma}
\label{thm: lemma_2}
Let $t=(t_i)_{i=1}^{n+k}$ satisfy $t_1=\cdots=t_k=a<\cdots<b=t_{n+1}=\cdots=t_{n+k}$
and let $\Delta_{\max}:=\max_i(t_{i+1}-t_i)$. Let $S_{k,t}$ be the spline space of degree $k$
(degree $\le k-1$) with knots $t$, and define
\begin{align*}
\operatorname{dist}(g,S_{k,t}) := \inf_{s\in S_{k,t}}\|g-s\|_\infty .
\end{align*}
If $g\in C^k([a,b])$, then there exists a constant $C_k$ (depending only on $k$)
such that
\begin{align*}
\operatorname{dist}(g,S_{k,t}) \le C_k\,\Delta_{\max}^k\,\|g^{(k)}\|_\infty .
\end{align*}
\end{lemma}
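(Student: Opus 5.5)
The plan is the standard quasi-interpolant argument: build an explicit local spline approximant $s = Qg \in S_{k,t}$ and bound $\|g - Qg\|_\infty$ one knot interval at a time. Here $S_{k,t}$ has order $k$, i.e.\ piecewise polynomials of degree $\le k-1$, and $g\in C^k([a,b])$. For $Q$ I would take the de Boor--Fix quasi-interpolant $Qg=\sum_j \lambda_j(g)\,\mathcal{B}_{j}$. Each functional $\lambda_j$ is supported on a single knot interval $I_j=[t_l,t_{l+1}]\subset\operatorname{supp}\mathcal{B}_j$, for instance the largest such interval. The functionals are built from the dual polynomials (or from a local Lagrange interpolation at $k$ points of $I_j$), so two properties hold:
\begin{enumerate}
\item[(i)] $Q p = p$ for every polynomial $p$ of degree $\le k-1$;
\item[(ii)] $|\lambda_j(g)|\le D_k \|g\|_{\infty, I_j}$ with $D_k$ depending only on $k$.
\end{enumerate}
Property (ii) is the de Boor stability estimate for the dual functionals. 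Because $\lambda_j$ is normalized to the chosen interval $I_j$, the constant is independent of the mesh ratio.

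Next, fix $x\in[t_l,t_{l+1}]$ and let $J=\bigcup\{\operatorname{supp}\mathcal{B}_j : \mathcal{B}_j(x)\neq 0\}$, which is a union of at most $2k-1$ knot intervals. Let $p$ be the Taylor polynomial of $g$ of degree $k-1$ about $x$. By Taylor's theorem with Lagrange remainder,
\[
\|g-p\|_{\infty,J}\le \frac{|J|^k}{k!}\|g^{(k)}\|_\infty \le \frac{((2k-1)\Delta_{\max})^k}{k!}\|g^{(k)}\|_\infty.
\]
Using (i) and then the partition of unity together with nonnegativity of the B-splines,
\[
|g(x)-Qg(x)| = |(g-p)(x) - Q(g-p)(x)| \le \|g-p\|_{\infty,J} + \max_{j:\mathcal{B}_j(x)\ne0}|\lambda_j(g-p)| \le (1+D_k)\|g-p\|_{\infty,J}.
\]
Combining the two displays and taking the supremum over $x$ gives
\[
\operatorname{dist}(g,S_{k,t})\le\|g-Qg\|_\infty\le C_k\Delta_{\max}^k\|g^{(k)}\|_\infty,
\qquad C_k=(1+D_k)(2k-1)^k/k!.
\]

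The main obstacle is property (ii), specifically obtaining $D_k$ independent of the knot spacing. If each $\lambda_j$ is defined by Lagrange interpolation on equispaced points of $I_j$ followed by the de Boor--Fix formula, then on $I_j$ the B-spline coefficients of a polynomial are bounded by a $k$-dependent multiple of its sup norm on $I_j$. This is the B-spline condition number result of de Boor. Rather than reprove it, I would cite \citet{15_boor_splinesbook}. The clamped end knots cause no difficulty, since every $\mathcal{B}_j$ still has a nondegenerate interval in its support.
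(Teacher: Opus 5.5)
Your proposal is correct and is essentially the argument the paper relies on. The paper's proof is only a pointer to Chapter XII of de Boor's book, and that chapter proves the distance bound in the same way you do: a local quasi-interpolant built from de Boor--Fix functionals, polynomial reproduction, and a Taylor estimate over the at most $2k-1$ knot intervals covered by the active supports.

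One small sharpening. Choosing $I_j$ as a largest knot interval in $\operatorname{supp}\mathcal{B}_j$ is not just one option (``for instance''); it is what makes $D_k$ independent of the mesh ratio. The $\nu$-th derivative term in the de Boor--Fix functional carries a weight of size $O(|\operatorname{supp}\mathcal{B}_j|^{\nu})$, while Markov's inequality on $I_j$ only gives $|p^{(\nu)}|\le C_k|I_j|^{-\nu}\|p\|_{\infty,I_j}$. The product therefore depends only on $k$ precisely when $|I_j|\ge|\operatorname{supp}\mathcal{B}_j|/k$, which the largest-interval choice guarantees.
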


\begin{proof}
We refer readers to Chapter 12 from \cite{15_boor_splinesbook} or \cite{22_boor_splineapproximation}.
\end{proof}

\begin{lemma}
\label{thm: lemma_5}
Let $t=(t_i)_{i=1}^{n+k}$ satisfy $t_1=\cdots=t_k=a<\cdots<b=t_{n+1}=\cdots=t_{n+k}$
and let $\Delta_{\max}:=\max_i(t_{i+1}-t_i)$. Let $S_{k,t}$ be the spline space of order $k$
(degree $\le k-1$) with knots $t$. If $g\in C^k([a,b])$, then there exist B-Spline
interpolation operators $Q\colon C([a,b])\to S_{k,t}$ and constants
$C_{k,j}$ (depending only on $k$ and $j$) such that
\begin{align*}
\|g^{(j)} - (Qg)^{(j)}\|_\infty \le C_{k,j}\,\Delta_{\max}^{k-j}\,\|g^{(k)}\|_\infty,
\qquad j=0,1,\ldots,k-1,
\end{align*}
provided $j\le \lfloor k/2\rfloor$. For $j>\lfloor k/2\rfloor$, the same bound holds
with $C_{k,j}$ replaced by a constant that depends additionally on the
local mesh ratio $\max_i(\Delta t_i/\Delta t_{i-1})$.
\end{lemma}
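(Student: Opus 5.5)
The plan is to build an explicit quasi-interpolant $Q$ of de Boor--Fix type, show that it reproduces polynomials, and then get the derivative bounds from local polynomial approximation. This is the classical route in Chapter 12 of \cite{15_boor_splinesbook}. For each index $i$ we pick a point $\tau_i^\ast$ inside the support $[t_i,t_{i+k}]$ of $\mathcal{B}_{i,k}$, and we choose it in a subinterval of maximal length within that support. We then set
\begin{align*}
Qg=\sum_i \lambda_i(g)\,\mathcal{B}_{i,k},
\end{align*}
where $\lambda_i$ is a bounded linear functional supported on $[t_i,t_{i+k}]$. The functionals $\lambda_i$ are chosen so that $\lambda_i(\mathcal{B}_{j,k})=\delta_{ij}$. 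Concretely, $\lambda_i$ is the dual functional evaluated on the local Lagrange polynomial interpolant of $g$ at $k$ points in that subinterval. With this choice $Q$ reproduces the whole spline space $S_{k,t}$, and in particular all polynomials of degree $\le k-1$. The de Boor stability result gives $|\lambda_i(g)|\le D_k\|g\|_{\infty,[t_i,t_{i+k}]}$ with $D_k$ depending only on $k$. Consequently
\begin{align*}
\|Qg\|_{\infty,I_l}\le D_k\|g\|_{\infty,\tilde I_l}
\end{align*}
on each knot interval $I_l=[t_l,t_{l+1}]$, where $\tilde I_l$ is the union of the $k$ neighbouring intervals.

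\textbf{Step 1: local error.} Fix $I_l$ and let $p$ be the Taylor polynomial of $g$ of degree $k-1$ at a point of $I_l$. Polynomial reproduction gives
\begin{align*}
g-Qg=(g-p)-Q(g-p)\quad\text{on } I_l.
\end{align*}
Hence
\begin{align*}
\|g^{(j)}-(Qg)^{(j)}\|_{I_l}\le \|(g-p)^{(j)}\|_{\tilde I_l}+\|(Q(g-p))^{(j)}\|_{I_l}.
\end{align*}
The first term is at most $C\,|\tilde I_l|^{k-j}\|g^{(k)}\|_\infty$, and $|\tilde I_l|\le k\Delta_{\max}$, so it is of order $\Delta_{\max}^{k-j}\|g^{(k)}\|_\infty$.

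\textbf{Step 2: derivative of the spline part.} For the second term we use the Markov-type inverse inequality for polynomials on $I_l$:
\begin{align*}
\|s^{(j)}\|_{I_l}\le C\,|I_l|^{-j}\|s\|_{I_l}.
\end{align*}
Combining this with the stability bound from the construction gives
\begin{align*}
\|(Q(g-p))^{(j)}\|_{I_l}\le C|I_l|^{-j}D_k\,|\tilde I_l|^{k}\|g^{(k)}\|_\infty.
\end{align*}
This is $O(\Delta_{\max}^{k-j})$ only if $|\tilde I_l|/|I_l|$ is bounded. That ratio is exactly where the local mesh ratio enters. To handle it we can differentiate the spline directly instead of using the inverse inequality. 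Lemma~\ref{thm: lemma_4}-type differentiation, the formula underlying Theorem~\ref{thm: theorem_3}, writes $(Qg)^{(j)}$ in B-spline form with coefficients given by divided differences $\Delta^j\lambda_i(g)$ with denominators $(t_{i+k-r}-t_i)/(k-r)$. These coefficients are in turn divided differences of $g$ taken over spans of about $j$ to $k$ intervals.

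For $j\le\lfloor k/2\rfloor$, each such span contains at least $k-j\ge j$ intervals. Because the $\tau_i^\ast$ are placed in maximal subintervals, the denominators are comparable to the local support length with constants depending only on $k$. This yields mesh-independent constants $C_{k,j}$. For larger $j$ the span can consist of just a few short intervals next to long ones, and the constant then picks up a power of $\max_i(\Delta t_i/\Delta t_{i-1})$.

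Taking the supremum over $l$ finishes the proof. The main obstacle is Step 2, namely showing that the divided-difference denominators are bounded below by a fixed fraction of $\Delta_{\max}$-scale local lengths independently of the mesh when $j\le\lfloor k/2\rfloor$. This is the delicate counting argument in de Boor's treatment; I would follow his choice of $\tau_i^\ast$ and cite \cite{24_boor_irregularbound} for that bound.
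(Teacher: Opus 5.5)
Your Step~1 combined with the Markov inequality is a sound proof of the mesh-ratio-dependent half of the lemma. For every $j$ it yields a constant involving $(|\tilde I_l|/|I_l|)^j$, and that ratio is bounded by a power of the two-sided local mesh ratio; note that $\tilde I_l=[t_{l-k+1},t_{l+k}]$ consists of $2k-1$ intervals, not $k$.

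\textbf{The gap is the mesh-independent claim for $j\le\lfloor k/2\rfloor$.} Your justification is that placing $\tau_i^\ast$ in maximal subintervals makes the denominators $t_{i+k-r}-t_i$ comparable to local support lengths. This cannot be right. Those denominators are knot spans and do not involve the $\tau$'s at all, and a span of $k-j\ge j$ intervals can be arbitrarily short compared with $\Delta_{\max}$. What must be shown is that the numerators, the iterated differences of $\lambda_{i-j}g,\dots,\lambda_i g$, shrink in proportion. For your operator they do not.

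\textbf{Counterexample ($k=3$, $j=1$).} Take knot spacings $H,\epsilon,\epsilon,H$ starting at $t_{i-1}$, and write $\alpha_l=\lambda_l g$. The maximal subinterval of $[t_{i-1},t_{i+2}]$ is the left long interval, and that of $[t_i,t_{i+3}]$ is the right one. So $\lambda_{i-1}$ and $\lambda_i$ come from two different quadratics $P_L$, $P_R$, and
\[
(Qg)'(t_{i+1})=\frac{\alpha_i-\alpha_{i-1}}{\epsilon}=P_L'(t_{i+1})+\frac{\beta}{\epsilon},
\]
where $\beta$ is the blossom of $P_R-P_L$ at $(t_{i+1},t_{i+2})$. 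Let $g=x^3/6$, with nodes $a_r$, $b_r$ interior to the two long intervals. As $\epsilon\to0$,
\[
\beta\to(P_R-P_L)(t_i)=\tfrac16\bigl(\prod_r|t_i-a_r|+\prod_r|b_r-t_i|\bigr)\asymp H^3 .
\]
So the error in $(Qg)'$ is of order $H^3/\epsilon$ while $\Delta_{\max}=H$ stays fixed.

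Using the interval endpoints as nodes repairs this instance but not the general case. Take $k=4$, $j=2$, spacings $1.1,1,\epsilon,\epsilon,1,1.1$ starting at $t_{i-2}$, $g=(x-t_{i+1})^4/24$, and equispaced nodes including endpoints. Then $(Qg)''(t_{i+1})$ is of order $1/\epsilon$ although $g''(t_{i+1})=0$. The same happens with de Boor's stability-oriented choice of the largest interval in $[t_{i+1},t_{i+k-1}]$. Maximal-interval rules serve $\|\cdot\|_\infty$-stability of $\lambda_i$, i.e.\ the case $j=0$, and they work against derivative bounds.

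\textbf{The missing idea.} Put the evaluation site $\tau_l$ of $\lambda_l$ in the central part $[t_{l+j},t_{l+k-j}]$ of its support. This is possible exactly when $2j\le k$, and that, rather than an interval count, is where $\lfloor k/2\rfloor$ comes from. With this placement, all sites entering the coefficient of $\mathcal{B}_{i,k-j}$, and every intermediate difference, lie inside the knot span you divide by. For Taylor-type functionals (the blossom of $T_{\tau_l}g$ at $(t_{l+1},\dots,t_{l+k-1})$), the identity
\[
T_ag-T_bg=\int_b^a g^{(k)}(c)\,(\cdot-c)^{k-1}/(k-1)!\,dc
\]
supplies the matching small factor. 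For $j=1$ that factor is $|\tau_l-\tau_{l-1}|\le t_{l+k-1}-t_l$.

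As written, your final step cites a mesh-independent bound for an operator that violates it. The paper itself gives no argument here, only citations to de Boor (Theorem~XII.26) and Lyche et al.\ (Theorems~5.7--5.8). Your Markov-inequality argument genuinely proves the mesh-ratio-dependent part, but your proposal does not establish the $j\le\lfloor k/2\rfloor$ part.
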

\begin{proof}
The derivative bounds follow from Theorem~(26) in Chapter~XII of
\cite{15_boor_splinesbook}; see also Theorems~5.7 and~5.8 of
\cite{51_lyche_derivative_error_bounds}.

\end{proof}

% \begin{lemma_4}
\begin{lemma}
\label{thm: lemma_4}
Let $\mu(t)$ be a well defined B-Spline interpolant that interpolates $f(t) \in \mathcal{C}^{m}[a,b]$ with observation sites at $\{t_{i}\}_{i=0}^{n}$,  $ \mu(t)= \sum_{i=0}^{n} c_{i,m} \mathcal{B}_{i,m} (t)$, where $\mathcal{B}_{i,m}$ is the B-Spline polynomial of degree $m$ as defined in Equation \ref{eq: Cox-De Boor Recursion}. Then, the derivative of the interpolant function can be written as:
\begin{align*}
\frac{d\mu(t)}{dt}= \sum_{i=1}^{n} m \times c_{i,m} \Big\{ \frac{\mathcal{B}_{i,m-1}(t)}{t_{i+m}-t_{i}}-\frac{\mathcal{B}_{i+1,m-1}(t)}{t_{i+m+1}-t_{i+1}}\Big\}.
\end{align*}
\end{lemma}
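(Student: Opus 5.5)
The plan is to first prove the classical derivative identity for a single basis function,
\begin{align*}
\frac{d}{dt}\mathcal{B}_{i,m}(t) = m\left(\frac{\mathcal{B}_{i,m-1}(t)}{t_{i+m}-t_i}-\frac{\mathcal{B}_{i+1,m-1}(t)}{t_{i+m+1}-t_{i+1}}\right),
\end{align*}
and then obtain the formula for $\mu'(t)$ by linearity of differentiation. Since $\mu(t)=\sum_i c_{i,m}\mathcal{B}_{i,m}(t)$ is a finite linear combination with constant coefficients (the $c_{i,m}$ are fixed once the interpolation constraints are solved), differentiating termwise and substituting the identity yields exactly the stated sum. Throughout I adopt the standard convention that any quotient whose denominator vanishes (repeated knots, as in the clamped knot sequence used in the proof of Theorem~\ref{thm: theorem_2}) is taken to be zero. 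The derivative is understood away from the knots, or as a one-sided derivative at them; for $m\ge 2$ it is genuinely continuous at simple knots.

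The identity for $\mathcal{B}_{i,m}$ is proved by induction on $m$. For the base case $m=1$, $\mathcal{B}_{i,1}$ is the hat function computed explicitly in the proof of Theorem~\ref{thm: theorem_2} (Equation~\ref{eq: linear_eq_proof_s_i}). Its derivative is $1/(t_{i+1}-t_i)$ on $[t_i,t_{i+1})$ and $-1/(t_{i+2}-t_{i+1})$ on $[t_{i+1},t_{i+2})$, which is precisely $\mathcal{B}_{i,0}/(t_{i+1}-t_i)-\mathcal{B}_{i+1,0}/(t_{i+2}-t_{i+1})$.

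For the inductive step, write the Cox--de Boor recursion (Equation~\ref{eq: Cox-De Boor Recursion}) as $\mathcal{B}_{i,m}=\omega_{i,m}\mathcal{B}_{i,m-1}+(1-\omega_{i+1,m})\mathcal{B}_{i+1,m-1}$, where $\omega_{i,m}(t)=(t-t_i)/(t_{i+m}-t_i)$. The product rule then gives
\begin{align*}
\mathcal{B}_{i,m}' = \frac{\mathcal{B}_{i,m-1}}{t_{i+m}-t_i}-\frac{\mathcal{B}_{i+1,m-1}}{t_{i+m+1}-t_{i+1}} + \omega_{i,m}\mathcal{B}_{i,m-1}' + (1-\omega_{i+1,m})\mathcal{B}_{i+1,m-1}'.
\end{align*}
The first two terms already supply one copy of the target expression. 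It therefore suffices to show that the remaining two terms equal $(m-1)$ times the target. To do this, apply the induction hypothesis to $\mathcal{B}_{i,m-1}'$ and $\mathcal{B}_{i+1,m-1}'$, which expresses these terms through $\mathcal{B}_{i,m-2}$, $\mathcal{B}_{i+1,m-2}$ and $\mathcal{B}_{i+2,m-2}$ with linear-in-$t$ coefficients. Then use the recursion once more at degree $m-1$ to recognise $\omega_{i,m-1}\mathcal{B}_{i,m-2}+(1-\omega_{i+1,m-1})\mathcal{B}_{i+1,m-2}=\mathcal{B}_{i,m-1}$, and similarly for $\mathcal{B}_{i+1,m-1}$.

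The main obstacle is this regrouping, and in particular checking the coefficient of the middle term $\mathcal{B}_{i+1,m-2}$. There the two contributions $-\omega_{i,m}/(t_{i+m}-t_{i+1})$ and $(1-\omega_{i+1,m})/(t_{i+m}-t_{i+1})$ must recombine into the correct weights $(1-\omega_{i+1,m-1})/(t_{i+m}-t_i)$ and $-\omega_{i+1,m-1}/(t_{i+m+1}-t_{i+1})$. I would verify this by direct algebra: clear denominators and compare the coefficients of $t$ and of the constant term, using identities such as $(t-t_i)/(t_{i+m}-t_i)\cdot 1/(t_{i+m-1}-t_i)$. The outer terms $\mathcal{B}_{i,m-2}$ and $\mathcal{B}_{i+2,m-2}$ are simpler and match immediately.

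If this bookkeeping becomes unwieldy, the fallback is de Boor's divided-difference representation $\mathcal{B}_{i,m}(t)=(t_{i+m+1}-t_i)[t_i,\dots,t_{i+m+1}](\cdot-t)_+^m$. Differentiating under the divided difference and applying the Leibniz/recurrence rule for divided differences produces the identity in one line. Repeated-knot cases follow from the zero-quotient convention, or by continuity in the knots.
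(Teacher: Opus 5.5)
Your proposal is correct and follows the paper's route: the single-basis identity for $\frac{d}{dt}\mathcal{B}_{i,m}$, then termwise differentiation. The only difference is that you prove the identity by induction through the Cox--de Boor recursion (the middle-coefficient check is immediate once you write $1-\omega_{i,m}=(t_{i+m}-t)/(t_{i+m}-t_i)$), whereas the paper simply cites Butterfield and de Boor and points to divided differences.

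One correction. Differentiating $\sum_{i=0}^{n}c_{i,m}\mathcal{B}_{i,m}$ termwise gives a sum starting at $i=0$, and the $i=0$ term $m\,c_{0,m}\bigl(\mathcal{B}_{0,m-1}/(t_m-t_0)-\mathcal{B}_{1,m-1}/(t_{m+1}-t_1)\bigr)$ does not vanish in general; for clamped knots it equals $-m\,c_{0,m}\mathcal{B}_{1,m-1}/(t_{m+1}-t_1)$. So what you establish is the formula with lower limit $i=0$, not ``exactly the stated sum''. The lower limit $i=1$ is an indexing slip in the statement, and the paper's proof, with its unindexed $\sum_i$, glosses over it as well.
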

% \end{lemma_4}

\begin{proof}
The result follows from the following classical result in spline theory, which says that let
\begin{align*}
\mathcal{B}_{i,0}=
\begin{cases}
    1 & \text{if} \quad t\in[t_{i}, t_{i+1}), \\
    0 & \text{otherwise}.
\end{cases}
\end{align*}
With a higher degree B-Spline basis defined as:
\begin{align*}
\mathcal{B}_{i,m}(t)
=
\frac{t - t_i}{t_{i+m} - t_i}\, \mathcal{B}_{i,m-1}(t)
+
\frac{t_{i+m+1} - t}{t_{i+m+1} - t_{i+1}}\, \mathcal{B}_{i+1,m-1}(t).
\end{align*}
Then, the derivative of $\mathcal{B}_{i,m}$ is given by
\begin{align}
\label{eq: derivative_spline_basis}
\frac{\mathcal{B}_{i,m}(t)}{dt}
= m \times \Big\{\frac{\mathcal{B}_{i,m-1}(t)}{t_{i+m}-t_{i}}-\frac{\mathcal{B}_{i+1,m-1}(t)}{t_{i+m+1}-t_{i+1}} \Big\}.
\end{align}
This is a classical result from \citet{23_butterfield_bsplinederivative}, also presented in \citet{15_boor_splinesbook}, and can be derived using the method of divided differences.
We can thus write
\begin{align*}
\frac{d\mu(t)}{dt}= \sum_{i} c_{i,m}\frac{d\mathcal{B}_{i,m}(t)}{dt}.
\end{align*}
Utilizing Equation \ref{eq: derivative_spline_basis}, the result follows.
\end{proof}

\clearpage
\newpage

\section{Details of SplineFlow}

\subsection{Algorithm Pseudocode}
\label{apdx:SFAlgorithm}

\begin{algorithm}[H]
\caption{SplineFlow}
\label{alg:splineflow}
\begin{algorithmic}[1]
% \small
\State \textbf{Input:} $N$ training trajectories $\{X_i\}_{i=0}^{N-1}$, spline degree $m$, learning rate $\eta$, base variance $\sigma^2$, \texttt{variance}, \texttt{dynamics}
\State \textbf{Output:} velocity network $u_{\theta}$ and score network $s_\phi$
\State Initialize parameters $\theta$ and $\phi$
\While{not converged}
    \State Sample mini-batch $\mathcal{B}\subset\{X_i\}_{i=0}^{N-1}$
    \For{each $X\in\mathcal{B}$}
        \State Fit a B-spline interpolant of degree $m$ to $X$: $\mu(t)=\sum_j c_{j,m}\,\mathcal{B}_{j,m}(t)$ 
        \State Sample $t_j \sim \mathcal{U}(t_0, t_{n})$, $\ \epsilon\sim\mathcal{N}(0,I)$
        \State Sample $x_{t_j}\sim \mathcal{N}(\mu(t_i),\sigma^2(t_j)I)$
        \State Compute $u(t_j)\gets \mu'(t_j)$  \Comment{via Theorem \ref{thm: theorem_3}}
        \If{\texttt{variance} = \texttt{quadratic}}
            \State Compute $\sigma'(t_j)$ \Comment{via Equation \ref{eq:quadratic_variance}}
        \Else
            \State Set $\sigma'(t_j)\gets 0$
        \EndIf
    \EndFor

    \If{\texttt{dynamics} = \texttt{ODE}}
        \State $\mathcal{L}(\theta)\gets \frac{1}{|\mathcal{B}|}\sum_{X\in\mathcal{B}}
        \|u_{\theta}(t_j,x_{t_j})-u(t_j)\|_2^2$
        \State $\theta \gets \theta - \eta\,\nabla_\theta \mathcal{L}(\theta)$
    \Else \Comment{\texttt{dynamics} = \texttt{SDE}}
        \State $\mathcal{L}(\theta, \phi)\gets \frac{1}{|\mathcal{B}|}\sum_{X\in\mathcal{B}}\big(
        \|u_{\theta}(t_j,x_{t_j})-(u(t_j)+\epsilon\,\sigma'(t_j))\|_2^2
        +\|\lambda(t_j)\,s_\phi(t_j,x_{t_j})+\epsilon\|_2^2\big)$
        \State $\theta \gets \theta - \eta\,\nabla_\theta \mathcal{L}(\theta, \phi)$, \: $\phi \gets \phi - \eta\,\nabla_\phi \mathcal{L}(\theta, \phi)$
    \EndIf
\EndWhile
\end{algorithmic}
\end{algorithm}

\subsection{Runtime Complexity}
\label{apdx:runtime_complexity}

While using SplineFlow, the total runtime depends on (a) creating B-splines, (b) sampling values and derivatives from B-splines, and (c) forward-backward passes for each epoch. Consider a total of $N$ trajectories of dimensionality $D$, each with $n$ observations in total, interpolated with B-splines of order $m$, and let $Q$ be the number of sampling queries made to calculate conditional velocities and values. Let total epochs be $T$ and let $F$ be the average time taken for each forward-backward pass while training on the regression objective for a batch size $B$. 

\begin{table}[b]
\centering
\vspace{0.01in}
\caption{Time taken for processing of conditional paths and velocities before training epochs begin}
% \vspace{-0.05in}
\label{tab:preprocessing_time}
% \small
\setlength{\tabcolsep}{8pt}
\begin{tabular}{l l c}
\toprule
\textbf{Dataset} & \textbf{Method} & \textbf{Time (s)} \\
\midrule
\multirow{2}{*}{Exponential Decay} 
    & SplineFlow & $1.116 $\\
    & TFM        & $0.934$ \\
\midrule
\multirow{2}{*}{HopperPhysics} 
    & SplineFlow & $5.558 $\\
    & TFM        & $2.407$ \\
\bottomrule
\end{tabular}
\end{table}

\begin{figure}[t]
    \centering
    \includegraphics[width=0.9\linewidth]{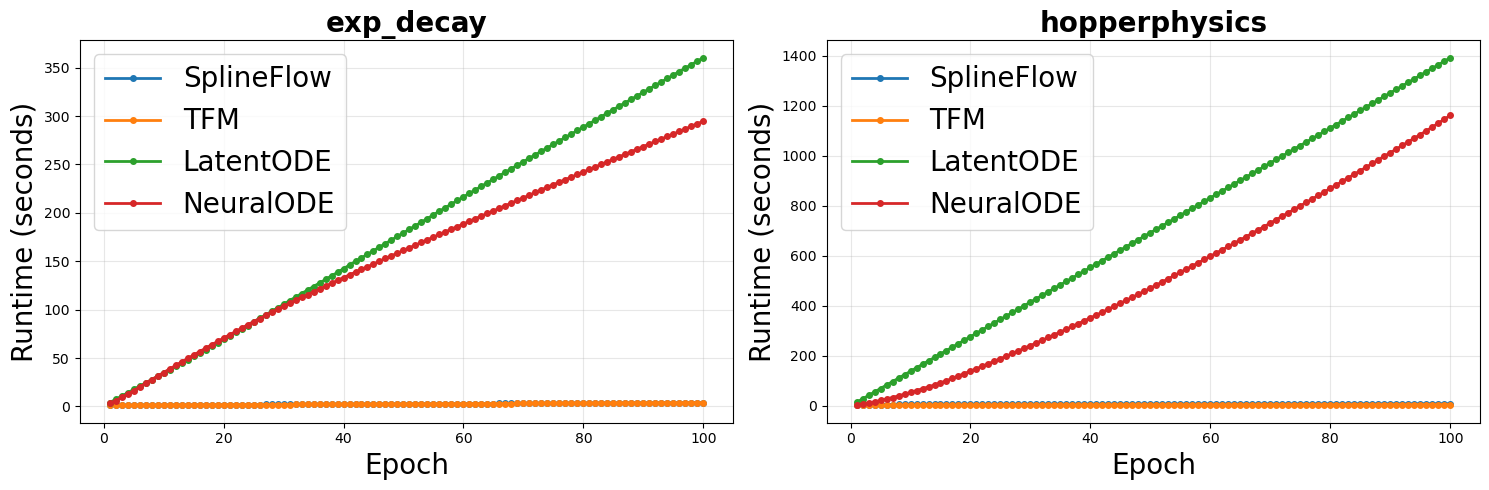}
    \vspace{-0.05in}
    \caption{Runtime of adjoint ODE-based and simulation-free flow matching methods.}
    \vspace{-0.1in}
    \label{fig:runtime_comparison_all}
\end{figure}

\begin{figure}[t]
    \centering
    \includegraphics[width=0.9\linewidth]{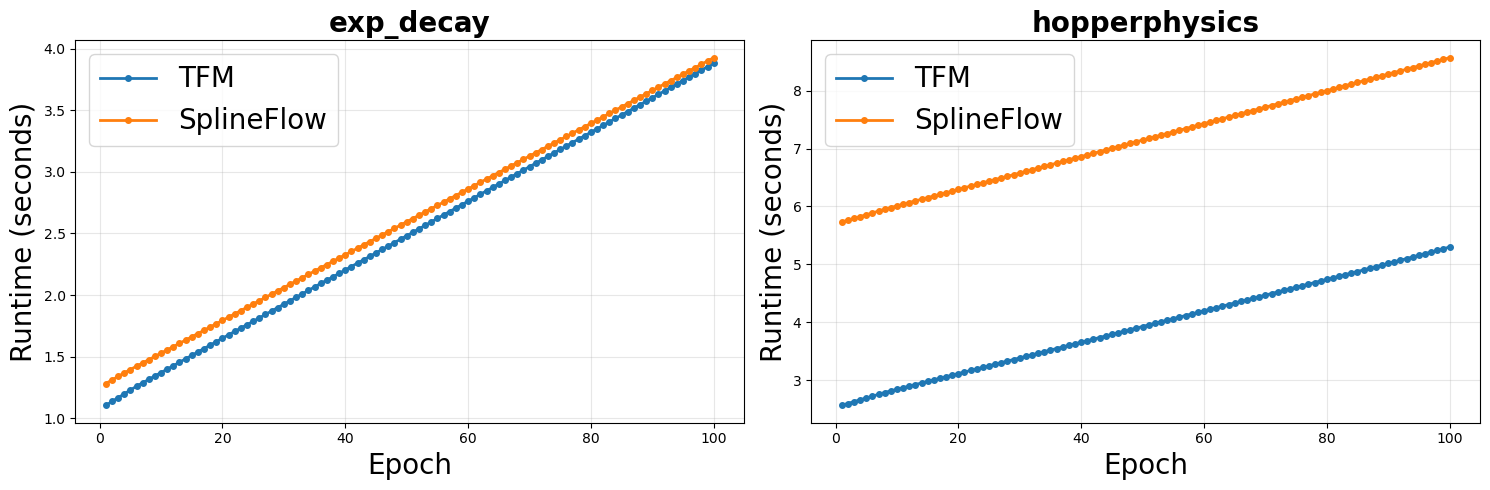}
    \vspace{-0.05in}
    \caption{Runtime comparison between flow matching methods.}
    \vspace{-0.1in}
    \label{fig:runtime_comparison_fm_methods}
\end{figure}

For every trajectory, for each observation $i$, we need to recursively create $m$ spline bases, and for each interval $t \in[t_{i}, t_{i+1})$, we have $m$ bases from neighboring observations to add. And since we do this separately for each dimension $D$, the cost comes out as $\mathcal{O}(NDnm^{2})$. 
For every $Q$ queries we evaluate from the B-spline construction (for conditional entities), for every trajectory, we again sum over $m$ bases for each query, thus bringing the cost to $\mathcal{O}(2 \times NQDm)$, together for $x_{t}$ and $v_{t}$.
The complexity for calculating precomputed conditional paths and velocities is thus $\mathcal{O}(NDm(nm+Q))$. These precomputed entities are then passed along the training algorithm, with a total time of $\mathcal{O}(TF)$ per batch $B$ and $N/B$ dataloader passes per epoch. Thus, the total complexity of SplineFlow comes out to be $\mathcal{O}(NDm(nm+Q)+TF \times (N/B))$.

We also run experiments to compare SplineFlow ($m=3$) with baselines on the smallest Exponential Decay dataset ($d=1$) and the largest HopperPhysics dataset ($d=14$).  
Figure~\ref{fig:runtime_comparison_all} demonstrates significant advantages of simulation-free methods in terms of total runtime. LatentODE, owing to its encoder-decoder architecture, has a higher runtime than NeuralODE. Whereas from Figure~\ref{fig:runtime_comparison_fm_methods}, we can see that for Exponential Decay $d=1$, both TFM and SplineFlow have almost identical runtime. For HopperPhysics, SplineFlow incurs about $3$ seconds of B-spline calculation overhead compared to the linear-interpolant calculation, as shown in Table~\ref{tab:preprocessing_time} below. However, it's important to note that this overhead is a one-time cost incurred before training and thus does not affect the overall training time.

\begin{figure}[t]
    \centering
    \includegraphics[width=0.6\linewidth]{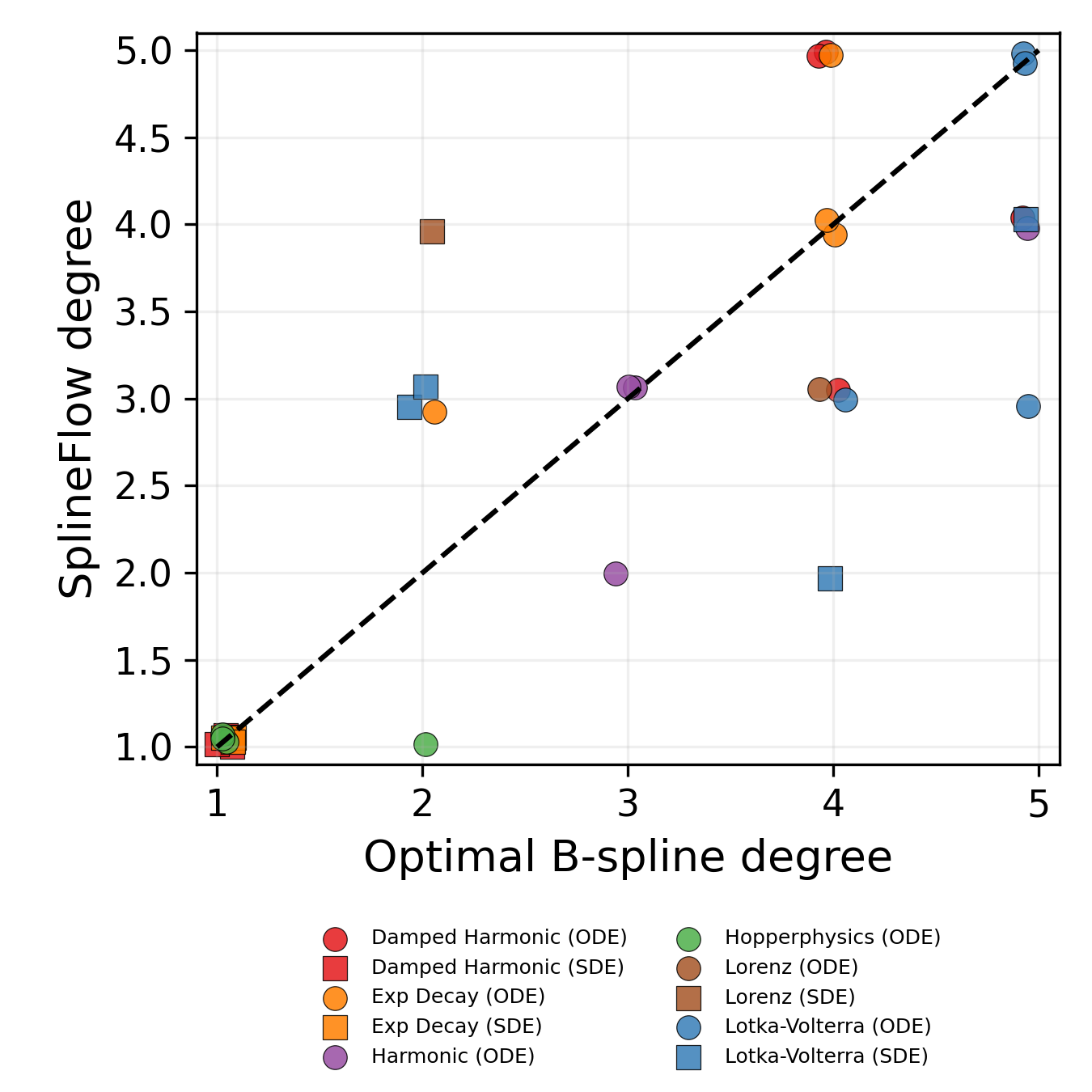}
    \caption{Visualizations of the B-spline degree selected via cross-validation vs. the optimal degree, which corresponds to a Spearman correlation coefficient of $\rho = 0.85$.}
    \vspace{-0.1in}
    \label{fig:degree_cv_vs_sf}
\end{figure}

\section{Hyperparameter Selection}
\label{apdx:hyperparameter_selection}

\shortsection{Degree of B-Spline Interpolant $m$}
The degree $m$ could be selected by performing a simple cross-validation on training data. Let $\{X^{i}\}_{i\in[1\cdots N]}$ be observed trajectories as defined in Section~\ref{sec:splineflow modeling dynamical systems}. For each trajectory $X^{i}$ create a fit $X^{i}_{\text{fit}}$ and test $X^{i}_{\text{test}}$ subsets such that $X^{i}_{\text{fit}} \cup X^{i}_{\text{test}}=X^{i}$. Fit a B-spline interpolant as defined in Equation~\ref{eq: B-spline interpolant function} ($\mathcal{I}_{m}$) of degree $m$ such that the interpolant satisfies observations at fit subset $\mathcal{I}_{m}(X^{i}_{\text{fit}})=X^{i}_{\text{fit}}$, and calculate the error on $X^{i}_{\text{test}}$ as $\sum_{i \in \text{training}}\|\mathcal{I}_{m}(X^{i}_{\text{test}})-X^{i}_{\text{test}}\|^{2}$. Then, we can select the degree that minimizes the error.

\shortsection{Constant Diffusion Schedule $\sigma$} Consider trajectory $X^{i}$, $i\in \{1,\cdots,N\}$, sampled from the SDE evolution with constant diffusion schedule mentioned in Equation~\ref{eq: SDE}: $dx_{t}= u_{t}(x_{t})dt + \sigma dw_{t}$, with $x \in \mathbb{R}^{d}$. Let trajectory $i$ contain $n_{i}$ observations $\{x_{i}({t^{i}_{0}}), \cdots, x_{i}({t^{i}_{n_{i}}})\}$, then the constant diffusion term $\sigma$ can then be estimated using a discrete estimator below:
\begin{equation}
\label{eq:pooled_sigma_est}
\hat{\sigma}^{2}
=
\frac{1}{d \sum_{i=1}^{N} T_i}
\sum_{i=1}^{N}
\sum_{k=0}^{n_i-1}
\bigl\lVert
x_{i}({t^{i}_{k+1}}) - x_{i}({t^{i}_{k}})
\bigr\rVert_2^{2},
\end{equation}
where $T_{i}=\sum_{k=1}^{n_{i}}(t_{k+1}-t_{k})$ is the sum of time differences.
This estimator follows from classical results on quadratic variation of Ito diffusions, which imply that the sum of squared increments converges to the integrated diffusion coefficient \citep{43_sigmaestimate_sorenson,44_sigmaestimate_karatzas}.
Note that the estimator is meant to provide a guide for selecting the hyperparameter $\sigma$; however, the input ($\sigma$) to Algorithm~\ref{alg:splineflow} should still be at the discretion of the user. 

\section{Training Details}
\label{apdx:training}
  
\subsection{Hyperparameters}
 
We use similar neural network architectures and training configurations across SplineFlow and all baselines.
Specifically, we set:
\begin{itemize}
    \item Learning Rate (LR): $0.0005$
    \item LR scheduler: Cosine
    \item Epochs: $10,000$
    \item NN: MLP
    \item Depth: $4$
    \item Width for ODE experiments: $256$ ($2048$ for Lorenz and HopperPhysics owing to dimensionality and complexity)
    \item Width for SDE experiments: $2 \times 256$ ($2 \times 2048$ for Lorenz owing to complexity) for both velocity field ($u_\theta$) and score network ($s_\phi$)
    \item Width for cellular trajectory experiments: $64$
\end{itemize}
 
\subsection{Loss Curves}
\label{apdx:loss_curves}
 
Figures~\ref{fig:loss_ode} show training loss curves for training SplineFlow on ODE dynamics. SplineFlow consistently converges to a near-zero training loss
validating that more sophisticated supervision signals do not incur additional training overhead.

\begin{figure}[t]
    \centering
    % ---- Row 1: two ODE systems where SplineFlow wins ----
    \begin{subfigure}[t]{0.32\linewidth}
        \centering
        \includegraphics[width=\linewidth]{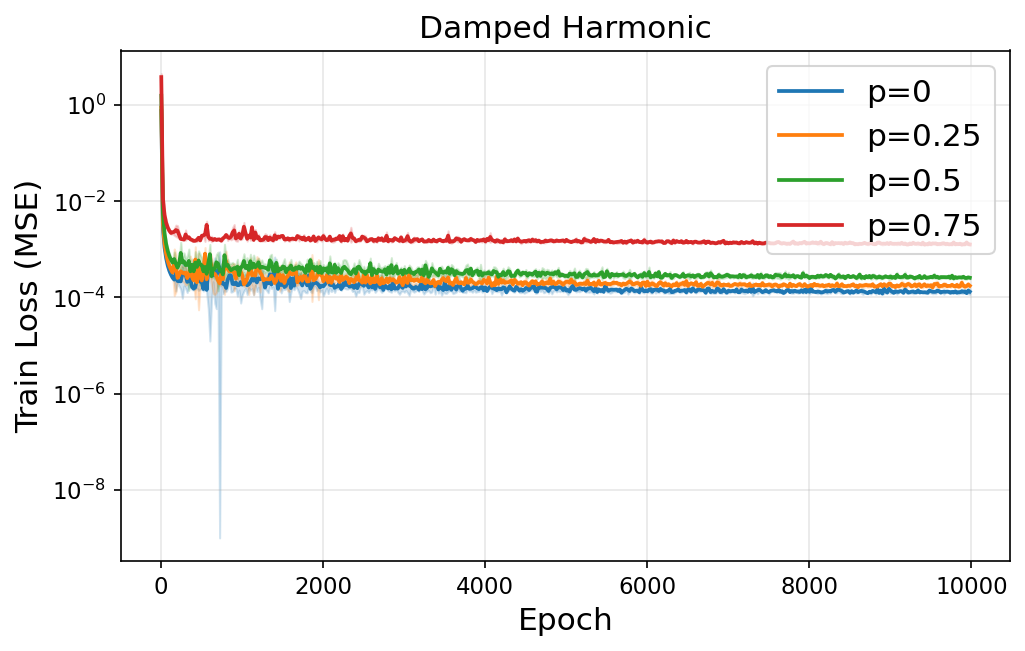}
        \caption{Damped Harmonic}
    \end{subfigure}
    \hfill
    \begin{subfigure}[t]{0.32\linewidth}
        \centering
        \includegraphics[width=\linewidth]{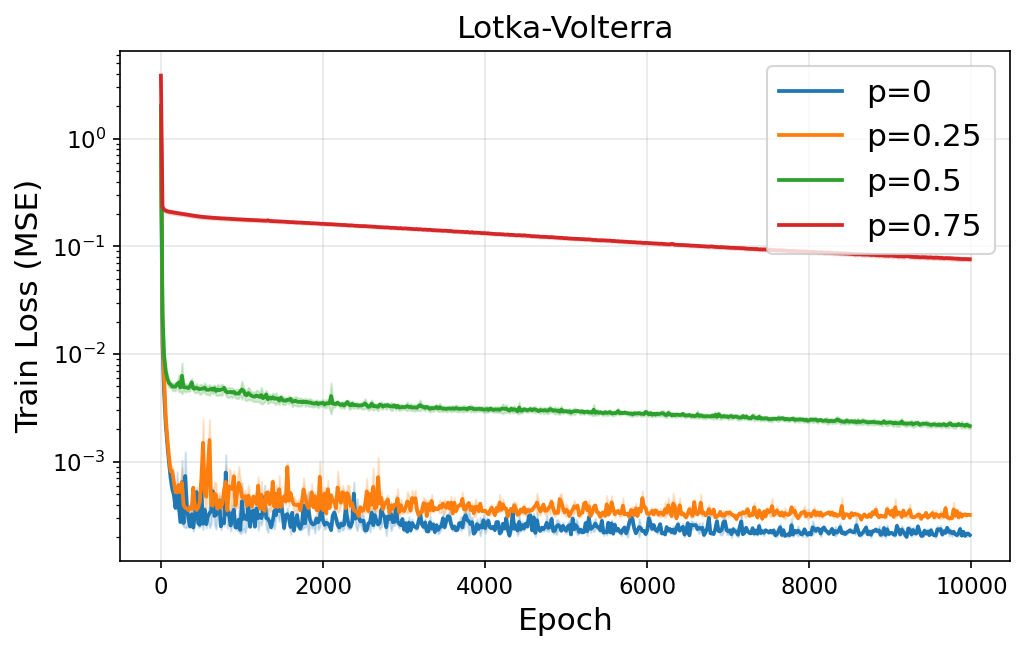}
        \caption{Lotka--Volterra}
    \end{subfigure}
    \hfill
    \begin{subfigure}[t]{0.32\linewidth}
        \centering
        \includegraphics[width=\linewidth]{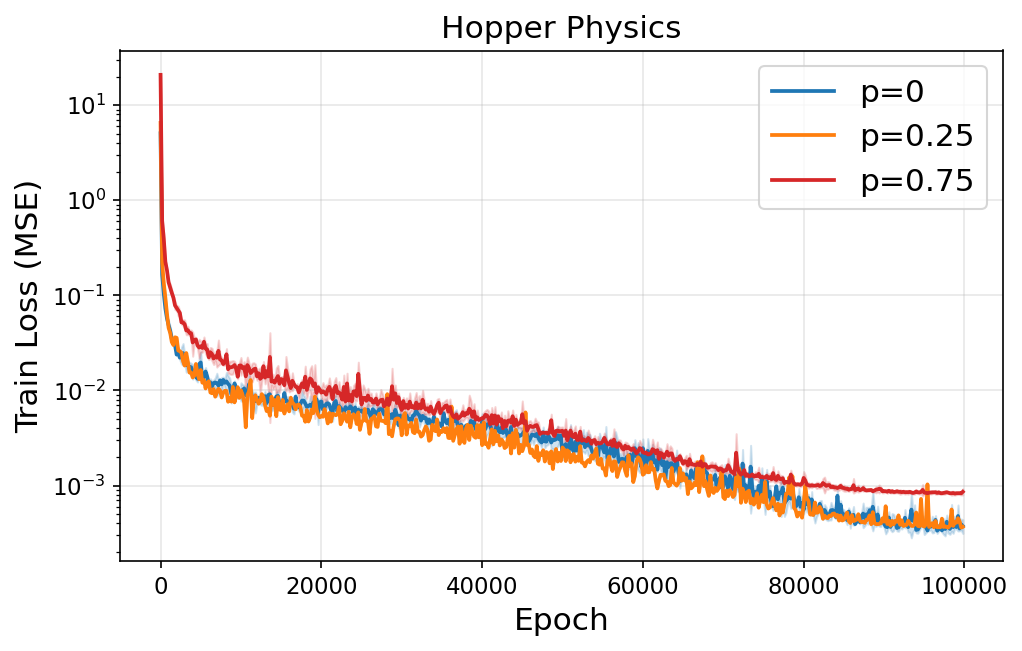}
        \caption{HopperPhysics}
    \end{subfigure}
    % ---- Row 2: Lorenz ODE ----
    \vspace{0.5em}
    \begin{subfigure}[t]{0.32\linewidth}
        \centering
        \includegraphics[width=\linewidth]{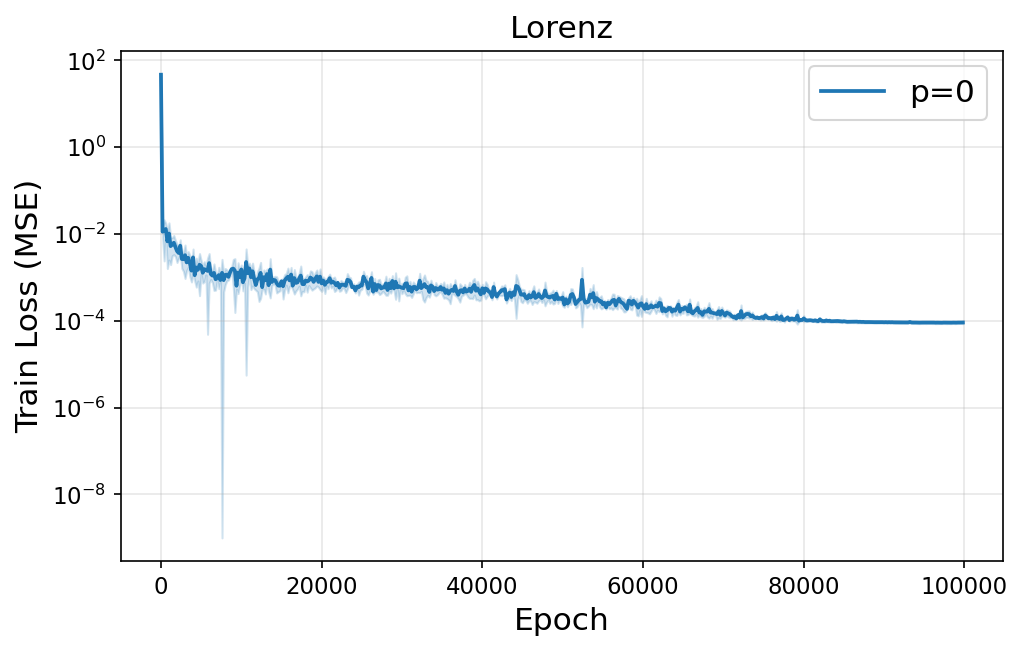}
        \caption{Lorenz}
    \end{subfigure}
    \hfill
    \begin{subfigure}[t]{0.32\linewidth}
        \centering
        \includegraphics[width=\linewidth]{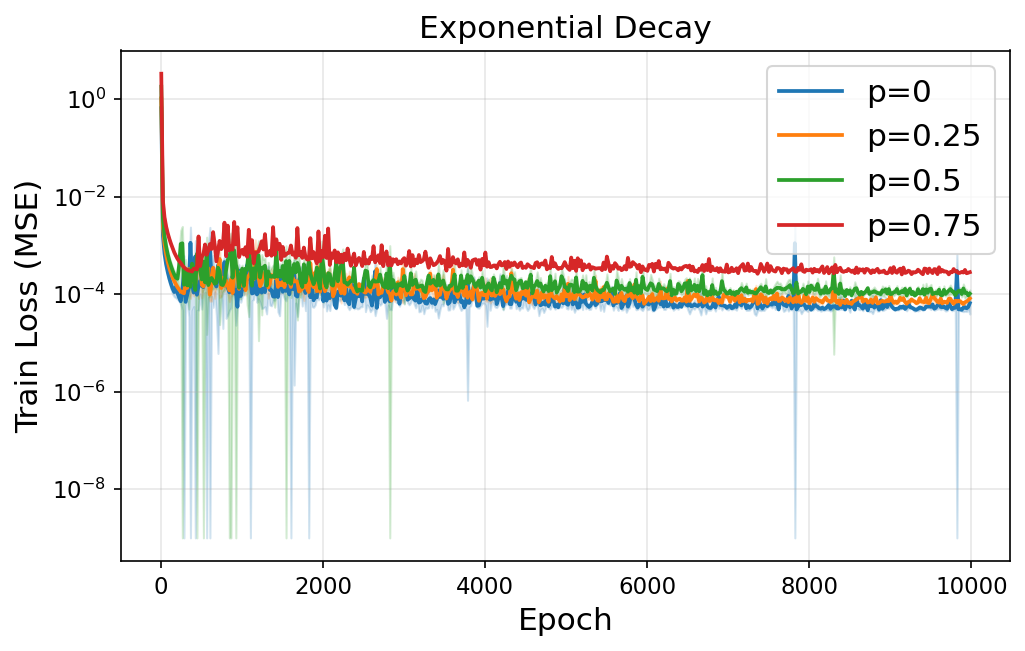}
        \caption{Exponential Decay}
    \end{subfigure}
    \hfill
    \begin{subfigure}[t]{0.32\linewidth}
        \centering
        \includegraphics[width=\linewidth]{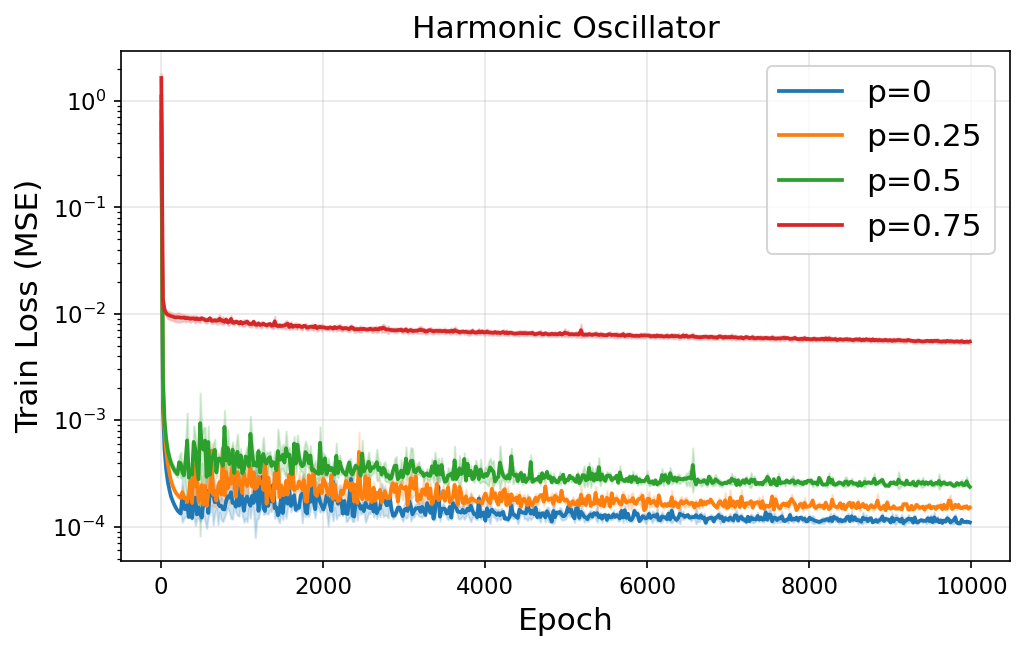}
        \caption{Harmonic Oscillator}
    \end{subfigure}
    \vspace{-0.1in}
    \caption{%
        Training loss (log scale) vs.\ epoch for SplineFlow and TFM on ODE
        systems ($p=0$, regular sampling).  Shaded bands show the standard
        deviation across five runs.
    }
    \label{fig:loss_ode}
\end{figure}

\section{Full Experimental Results}
\label{apdx:expanded_results}

In this section, we present the full experimental results and summarized heatmap visualizations for SplineFlow with the best-selected spline degree parameter $m$, benchmarked against baseline methods across all evaluated datasets and configurations.

\begin{table}[H]
\centering
\caption{Comparisons of different methods for modeling ODE dynamics in MSE metric across $5$ deterministic dynamical systems and varying sampling irregularity $p\in\{0, 0.25, 0.5, 0.75\}$.}
\label{tab: apdx_ode_combined_all_datasets}
\vspace{0.01in}
% \small
\setlength{\tabcolsep}{8pt}
\renewcommand{\arraystretch}{1.05}
\resizebox{\linewidth}{!}{
\begin{tabular}{llcccccc}
\toprule
$p$ & 
Model &
Exp-Decay &
Harmonic  &
Damped Harmonic  &
Lotka--Volterra &
HopperPhysics & Lorenz \\

\midrule
\multirow{5}{*}{$0$} &
NeuralODE &
$0.999 \pm 0.126$ &
$6.542 \pm 5.965$ &
$3.474 \pm 0.366$ &
$1.98\mathrm{e}{4} \pm 1.24\mathrm{e}{4}$ &
$2.947 \pm 0.183$ &
$1.491 \pm 0.359$ \\

& LatentODE &
$9.8\mathrm{e}{-5} \pm 4.7\mathrm{e}{-5}$ &
$8.2\mathrm{e}{-4} \pm 4.2\mathrm{e}{-4}$ &
$1.2\mathrm{e}{-4} \pm 6.3\mathrm{e}{-4}$ &
$\mathbf{1.6\mathrm{e}{-3} \pm 1.2\mathrm{e}{-3}}$ &
$1.894 \pm 0.523$ &
$1.157 \pm 0.204$ \\

& MMFM &
$1.4\mathrm{e}{-3} \pm 1.0\mathrm{e}{-4}$ &
$0.014 \pm 0.009$ &
$0.002 \pm 0.002$ &
$0.487 \pm 0.384$ &
$18.179 \pm 4.049$ &
$0.802 \pm 0.008$ \\

& TFM &
$6.6\mathrm{e}{-4} \pm 8.3\mathrm{e}{-5}$ &
$0.181 \pm 3.8\mathrm{e}{-4}$ &
$0.015 \pm 2.7\mathrm{e}{-4}$ &
$0.184 \pm 0.104$ &
$1.554 \pm 0.297$ &
$2.442 \pm 0.076$ \\

& SplineFlow &
$\mathbf{8.9\mathrm{e}{-5} \pm 3.3\mathrm{e}{-5}}$ &
$\mathbf{3.7\mathrm{e}{-4} \pm 3.1\mathrm{e}{-4}}$ &
$\mathbf{9.5\mathrm{e}{-5} \pm 1.0\mathrm{e}{-4}}$ &
$0.086 \pm 0.043$ &
$\mathbf{1.410 \pm 0.172}$ &
$\mathbf{0.639 \pm 0.004}$ \\

\midrule
\multirow{4}{*}{$0.25$} 

& LatentODE &
$9.8\mathrm{e}{-5} \pm 3.7\mathrm{e}{-5}$ &
$0.001 \pm 0.001$ &
$3.4\mathrm{e}{-4} \pm 5.8\mathrm{e}{-4}$ &
$\mathbf{0.09 \pm 3.3\mathrm{e}{-4}}$ &
$1.605 \pm 0.019$ &
-- \\

& MMFM &
$2.0\mathrm{e}{-3} \pm 8.0\mathrm{e}{-4}$ &
$0.014 \pm 0.006$ &
-- &
$0.372 \pm 0.213$ &
$31.294 \pm 8.263$ &
-- \\

& TFM &
$6.9\mathrm{e}{-4} \pm 9.6\mathrm{e}{-5}$ &
$0.174 \pm 0.008$ &
$0.015 \pm 2.6\mathrm{e}{-4}$ &
$0.259 \pm 0.172$ &
$1.512 \pm 0.030$ &
-- \\

& SplineFlow &
$\mathbf{8.2\mathrm{e}{-5} \pm 2.9\mathrm{e}{-5}}$ &
$\mathbf{6.5\mathrm{e}{-4} \pm 2.1\mathrm{e}{-4}}$ &
$\mathbf{3.0\mathrm{e}{-5} \pm 4.8\mathrm{e}{-6}}$ &
$0.100 \pm 0.057$ &
$\mathbf{1.433 \pm 0.030}$ &
-- \\

\midrule
\multirow{4}{*}{$0.5$} 

& LatentODE &
$1.6\mathrm{e}{-4} \pm 4.5\mathrm{e}{-5}$ &
$7.6\mathrm{e}{-4} \pm 6.7\mathrm{e}{-5}$ &
$7.3\mathrm{e}{-4} \pm 3.6\mathrm{e}{-4}$ &
$\mathbf{0.2 \pm 0.001}$ &
$\mathbf{1.649 \pm 0.073}$ &
-- \\

& MMFM &
$1.7\mathrm{e}{-3} \pm 3.0\mathrm{e}{-4}$ &
$0.012 \pm 0.002$ &
$0.003 \pm 0.001$ &
$4.41 \pm 0.223$ &
$47.836 \pm 4.026$ &
-- \\

& TFM &
$9.4\mathrm{e}{-4} \pm 1.7\mathrm{e}{-4}$ &
$0.177 \pm 0.018$ &
$0.016 \pm 0.001$ &
$4.259 \pm 5.690$ &
$1.718 \pm 0.092$ &
-- \\

& SplineFlow &
$\mathbf{1.1\mathrm{e}{-4} \pm 6.8\mathrm{e}{-5}}$ &
$\mathbf{5.5\mathrm{e}{-4} \pm 2.2\mathrm{e}{-4}}$ &
$\mathbf{4.7\mathrm{e}{-5} \pm 2.1\mathrm{e}{-5}}$ &
$0.705 \pm 1.210$ &
$1.825 \pm 0.602$ &
-- \\

\midrule
\multirow{4}{*}{$0.75$} 

& LatentODE &
$9.6\mathrm{e}{-5} \pm 5.0\mathrm{e}{-6}$ &
$\mathbf{7.9\mathrm{e}{-4} \pm 2.5\mathrm{e}{-4}}$ &
$9.7\mathrm{e}{-4} \pm 3.7\mathrm{e}{-4}$ &
$\mathbf{0.3 \pm 0.001}$ &
$3.563 \pm 0.148$ &
-- \\

& MMFM &
$2.4\mathrm{e}{-3} \pm 4.0\mathrm{e}{-4}$ &
$0.010 \pm 0.002$ &
$0.004 \pm 0.0005$ &
$8.62 \pm 0.038$ &
$61.110 \pm 19.287$ &
-- \\

& TFM &
$1.4\mathrm{e}{-3} \pm 2.0\mathrm{e}{-4}$ &
$0.205$ &
$0.031 \pm 0.002$ &
$1.976 \pm 1.810$ &
$\mathbf{2.810 \pm 0.165}$ &
-- \\

& SplineFlow &
$\mathbf{8.6\mathrm{e}{-5} \pm 6.4\mathrm{e}{-6}}$ &
$0.001 \pm 0.001$ &
$\mathbf{4.8\mathrm{e}{-5} \pm 2.3\mathrm{e}{-5}}$ &
$1.783 \pm 1.430$ &
$3.336 \pm 0.430$ &
-- \\

\bottomrule
\end{tabular}
}
\vspace{-0.05in}
\end{table}

\begin{table}[H]
\centering
\caption{Performance across SDE dynamics datasets for different metrics under regular sampling}
\vspace{0.01in}
\label{tab:apdx_sde_regular}
\renewcommand{\arraystretch}{1.1}
\resizebox{\linewidth}{!}{
\begin{tabular}{llccccc}
\toprule
\textbf{Dataset} & \textbf{Model} & \textbf{MSE (P-ODE)} & \textbf{MSE (SDE)} & \textbf{Wasserstein} & \textbf{MMD} & \textbf{Energy} \\
\midrule
\multirow{2}{*}{Exp.}
& SF2M
& $3.91\mathrm{e}{-01} \pm 2.26\mathrm{e}{-03}$
& $4.77\mathrm{e}{-01} \pm 9.90\mathrm{e}{-03}$
& $\mathbf{3.05\mathrm{e}{-01} \pm 4.85\mathrm{e}{-03}}$
& $\mathbf{2.40\mathrm{e}{-02} \pm 1.00\mathrm{e}{-03}}$
& $\mathbf{6.10\mathrm{e}{-02} \pm 2.62\mathrm{e}{-03}}$ \\
& SplineFlow
& $\mathbf{3.90\mathrm{e}{-01} \pm 4.23\mathrm{e}{-03}}$
& $\mathbf{4.55\mathrm{e}{-01} \pm 8.27\mathrm{e}{-03}}$
& $3.30\mathrm{e}{-01} \pm 2.83\mathrm{e}{-03}$
& $2.70\mathrm{e}{-02} \pm 2.48\mathrm{e}{-03}$
& $6.94\mathrm{e}{-02} \pm 1.08\mathrm{e}{-03}$ \\

\midrule
\multirow{2}{*}{Damped.}
& SF2M
& $\mathbf{8.44\mathrm{e}{-01} \pm 1.54\mathrm{e}{-02}}$
& $\mathbf{9.66\mathrm{e}{-01} \pm 3.29\mathrm{e}{-02}}$
& $4.29\mathrm{e}{-01} \pm 7.40\mathrm{e}{-03}$
& $4.84\mathrm{e}{-02} \pm 2.96\mathrm{e}{-03}$
& $1.25\mathrm{e}{-01} \pm 3.74\mathrm{e}{-03}$ \\
& SplineFlow
& $8.94\mathrm{e}{-01} \pm 5.12\mathrm{e}{-03}$
& $9.69\mathrm{e}{-01} \pm 9.41\mathrm{e}{-03}$
& $\mathbf{4.10\mathrm{e}{-01} \pm 4.41\mathrm{e}{-03}}$
& $\mathbf{4.19\mathrm{e}{-02} \pm 1.54\mathrm{e}{-03}}$
& $\mathbf{1.12\mathrm{e}{-01} \pm 7.12\mathrm{e}{-03}}$ \\

\midrule
\multirow{2}{*}{LV}
& SF2M
& $4.58\mathrm{e}{-01} \pm 9.74\mathrm{e}{-02}$
& $4.60\mathrm{e}{-01} \pm 9.67\mathrm{e}{-02}$
& $4.43\mathrm{e}{-01} \pm 4.99\mathrm{e}{-02}$
& $1.25\mathrm{e}{-01} \pm 2.35\mathrm{e}{-02}$
& $4.65\mathrm{e}{-01} \pm 9.44\mathrm{e}{-02}$ \\
& SplineFlow
& $\mathbf{2.68\mathrm{e}{-01} \pm 1.44\mathrm{e}{-02}}$
& $\mathbf{2.73\mathrm{e}{-01} \pm 1.37\mathrm{e}{-02}}$
& $\mathbf{2.93\mathrm{e}{-01} \pm 3.16\mathrm{e}{-03}}$
& $\mathbf{4.21\mathrm{e}{-02} \pm 3.45\mathrm{e}{-03}}$
& $\mathbf{1.56\mathrm{e}{-01} \pm 1.33\mathrm{e}{-02}}$ \\

\midrule
\multirow{2}{*}{Lorenz}
& SF2M
& $2.17\mathrm{e}{+00} \pm 7.94\mathrm{e}{-03}$
& $2.15\mathrm{e}{+00} \pm 8.35\mathrm{e}{-03}$
& $6.32\mathrm{e}{-01} \pm 7.61\mathrm{e}{-03}$
& $1.29\mathrm{e}{-01} \pm 2.56\mathrm{e}{-03}$
& $4.04\mathrm{e}{-01} \pm 1.08\mathrm{e}{-02}$ \\
& SplineFlow
& $\mathbf{1.23\mathrm{e}{+00} \pm 4.01\mathrm{e}{-02}}$
& $\mathbf{1.22\mathrm{e}{+00} \pm 5.62\mathrm{e}{-02}}$
& $\mathbf{1.79\mathrm{e}{-01} \pm 1.47\mathrm{e}{-02}}$
& $\mathbf{5.00\mathrm{e}{-03} \pm 1.79\mathrm{e}{-03}}$
& $\mathbf{3.66\mathrm{e}{-02} \pm 7.65\mathrm{e}{-03}}$ \\
\bottomrule

\end{tabular}
}
\end{table}

\begin{table}[H]
\centering
\caption{Performance across SDE dynamics datasets for different metrics under irregular sampling.}
\vspace{0.01in}
\renewcommand{\arraystretch}{1.1}

\label{tab:apdx_sde_irregular}
\resizebox{\linewidth}{!}{
\begin{tabular}{lllccccc}
\toprule
\textbf{Dataset} & $\bm{p}$ & \textbf{Model} &
\textbf{MSE (P-ODE)} &
\textbf{MSE (SDE)} &
\textbf{Wasserstein} &
\textbf{MMD} &
\textbf{Energy} \\
\midrule

\multirow{6}{*}{Exp.}
& \multirow{2}{*}{$0.25$}
& SF2M
& $\mathbf{3.64\mathrm{e}{-01} \pm 1.27\mathrm{e}{-03}}$
& $\mathbf{4.65\mathrm{e}{-01} \pm 6.30\mathrm{e}{-03}}$
& $\mathbf{2.45\mathrm{e}{-01} \pm 7.37\mathrm{e}{-03}}$
& $\mathbf{1.46\mathrm{e}{-02} \pm 1.15\mathrm{e}{-03}}$
& $\mathbf{4.01\mathrm{e}{-02} \pm 2.76\mathrm{e}{-03}}$ \\
&& SplineFlow
& $3.66\mathrm{e}{-01} \pm 6.08\mathrm{e}{-03}$
& $4.69\mathrm{e}{-01} \pm 5.90\mathrm{e}{-03}$
& $2.50\mathrm{e}{-01} \pm 5.53\mathrm{e}{-03}$
& $1.51\mathrm{e}{-02} \pm 1.03\mathrm{e}{-03}$
& $4.12\mathrm{e}{-02} \pm 3.14\mathrm{e}{-03}$ \\
\cmidrule{2-8}

& \multirow{2}{*}{$0.5$}
& SF2M
& $3.77\mathrm{e}{-01} \pm 5.88\mathrm{e}{-03}$
& $\mathbf{5.23\mathrm{e}{-01} \pm 1.07\mathrm{e}{-02}}$
& $2.04\mathrm{e}{-01} \pm 1.16\mathrm{e}{-02}$
& $9.09\mathrm{e}{-03} \pm 1.93\mathrm{e}{-03}$
& $2.83\mathrm{e}{-02} \pm 4.68\mathrm{e}{-03}$ \\
&& SplineFlow
& $\mathbf{3.76\mathrm{e}{-01} \pm 3.30\mathrm{e}{-03}}$
& $5.30\mathrm{e}{-01} \pm 6.59\mathrm{e}{-03}$
& $\mathbf{1.94\mathrm{e}{-01} \pm 1.42\mathrm{e}{-03}}$
& $\mathbf{7.42\mathrm{e}{-03} \pm 3.31\mathrm{e}{-04}}$
& $\mathbf{2.39\mathrm{e}{-02} \pm 1.36\mathrm{e}{-03}}$ \\
\cmidrule{2-8}

& \multirow{2}{*}{$0.75$}
& SF2M
& $\mathbf{4.05\mathrm{e}{-01} \pm 6.73\mathrm{e}{-03}}$
& $\mathbf{6.46\mathrm{e}{-01} \pm 1.07\mathrm{e}{-02}}$
& $\mathbf{1.36\mathrm{e}{-01} \pm 1.20\mathrm{e}{-02}}$
& $\mathbf{1.47\mathrm{e}{-03} \pm 9.50\mathrm{e}{-04}}$
& $\mathbf{1.27\mathrm{e}{-02} \pm 3.13\mathrm{e}{-03}}$ \\
&& SplineFlow
& $4.18\mathrm{e}{-01} \pm 5.87\mathrm{e}{-03}$
& $6.76\mathrm{e}{-01} \pm 8.23\mathrm{e}{-03}$
& $1.42\mathrm{e}{-01} \pm 1.72\mathrm{e}{-02}$
& $2.05\mathrm{e}{-03} \pm 1.55\mathrm{e}{-03}$
& $1.51\mathrm{e}{-02} \pm 5.15\mathrm{e}{-03}$ \\

\midrule
\multirow{6}{*}{Damped.}
& \multirow{2}{*}{$0.25$}
& SF2M
& $8.35\mathrm{e}{-01} \pm 3.23\mathrm{e}{-02}$
& $9.50\mathrm{e}{-01} \pm 2.01\mathrm{e}{-02}$
& $3.71\mathrm{e}{-01} \pm 3.46\mathrm{e}{-02}$
& $\mathbf{3.29\mathrm{e}{-02} \pm 7.30\mathrm{e}{-03}}$
& $9.31\mathrm{e}{-02} \pm 1.77\mathrm{e}{-02}$ \\
&& SplineFlow
& $\mathbf{7.98\mathrm{e}{-01} \pm 1.57\mathrm{e}{-02}}$
& $\mathbf{9.13\mathrm{e}{-01} \pm 2.28\mathrm{e}{-02}}$
& $\mathbf{3.64\mathrm{e}{-01} \pm 2.11\mathrm{e}{-02}}$
& $3.42\mathrm{e}{-02} \pm 4.34\mathrm{e}{-03}$
& $\mathbf{9.16\mathrm{e}{-02} \pm 1.07\mathrm{e}{-02}}$ \\
\cmidrule{2-8}

& \multirow{2}{*}{$0.5$}
& SF2M
& $8.28\mathrm{e}{-01} \pm 3.29\mathrm{e}{-02}$
& $\mathbf{9.72\mathrm{e}{-01} \pm 2.54\mathrm{e}{-02}}$
& $\mathbf{3.19\mathrm{e}{-01} \pm 7.43\mathrm{e}{-03}}$
& $2.20\mathrm{e}{-02} \pm 2.68\mathrm{e}{-03}$
& $\mathbf{6.56\mathrm{e}{-02} \pm 5.04\mathrm{e}{-03}}$ \\
&& SplineFlow
& $\mathbf{8.05\mathrm{e}{-01} \pm 8.73\mathrm{e}{-03}}$
& $9.81\mathrm{e}{-01} \pm 3.99\mathrm{e}{-02}$
& $3.21\mathrm{e}{-01} \pm 2.84\mathrm{e}{-02}$
& $\mathbf{2.17\mathrm{e}{-02} \pm 5.59\mathrm{e}{-03}}$
& $6.87\mathrm{e}{-02} \pm 1.21\mathrm{e}{-02}$ \\
\cmidrule{2-8}

& \multirow{2}{*}{$0.75$}
& SF2M
& $\mathbf{9.12\mathrm{e}{-01} \pm 2.70\mathrm{e}{-02}}$
& $\mathbf{1.15\mathrm{e}{+00} \pm 3.93\mathrm{e}{-02}}$
& $\mathbf{2.75\mathrm{e}{-01} \pm 2.17\mathrm{e}{-02}}$
& $\mathbf{1.09\mathrm{e}{-02} \pm 2.14\mathrm{e}{-03}}$
& $\mathbf{4.65\mathrm{e}{-02} \pm 6.44\mathrm{e}{-03}}$ \\
&& SplineFlow
& $9.37\mathrm{e}{-01} \pm 2.89\mathrm{e}{-02}$
& $1.17\mathrm{e}{+00} \pm 1.17\mathrm{e}{-02}$
& $2.85\mathrm{e}{-01} \pm 1.10\mathrm{e}{-02}$
& $1.24\mathrm{e}{-02} \pm 7.65\mathrm{e}{-04}$
& $5.27\mathrm{e}{-02} \pm 4.89\mathrm{e}{-03}$ \\

\midrule
\multirow{6}{*}{LV}
& \multirow{2}{*}{$0.25$}
& SF2M
& $4.07\mathrm{e}{-01} \pm 6.92\mathrm{e}{-02}$
& $4.04\mathrm{e}{-01} \pm 7.33\mathrm{e}{-02}$
& $4.00\mathrm{e}{-01} \pm 2.56\mathrm{e}{-02}$
& $9.24\mathrm{e}{-02} \pm 8.00\mathrm{e}{-03}$
& $3.45\mathrm{e}{-01} \pm 3.23\mathrm{e}{-02}$ \\
&& SplineFlow
& $\mathbf{3.18\mathrm{e}{-01} \pm 3.58\mathrm{e}{-02}}$
& $\mathbf{3.21\mathrm{e}{-01} \pm 3.21\mathrm{e}{-02}}$
& $\mathbf{3.18\mathrm{e}{-01} \pm 1.21\mathrm{e}{-02}}$
& $\mathbf{4.44\mathrm{e}{-02} \pm 5.03\mathrm{e}{-03}}$
& $\mathbf{1.74\mathrm{e}{-01} \pm 2.60\mathrm{e}{-02}}$ \\
\cmidrule{2-8}

& \multirow{2}{*}{$0.5$}
& SF2M
& $3.50\mathrm{e}{-01} \pm 2.88\mathrm{e}{-02}$
& $3.54\mathrm{e}{-01} \pm 3.12\mathrm{e}{-02}$
& $3.74\mathrm{e}{-01} \pm 3.69\mathrm{e}{-02}$
& $8.03\mathrm{e}{-02} \pm 2.88\mathrm{e}{-02}$
& $2.93\mathrm{e}{-01} \pm 9.23\mathrm{e}{-02}$ \\
&& SplineFlow
& $\mathbf{2.87\mathrm{e}{-01} \pm 2.32\mathrm{e}{-02}}$
& $\mathbf{2.85\mathrm{e}{-01} \pm 2.54\mathrm{e}{-02}}$
& $\mathbf{3.10\mathrm{e}{-01} \pm 1.16\mathrm{e}{-02}}$
& $\mathbf{4.41\mathrm{e}{-02} \pm 3.80\mathrm{e}{-03}}$
& $\mathbf{1.63\mathrm{e}{-01} \pm 2.32\mathrm{e}{-02}}$ \\
\cmidrule{2-8}

& \multirow{2}{*}{$0.75$}
& SF2M
& $5.24\mathrm{e}{-01} \pm 1.24\mathrm{e}{-01}$
& $5.26\mathrm{e}{-01} \pm 1.21\mathrm{e}{-01}$
& $4.46\mathrm{e}{-01} \pm 9.53\mathrm{e}{-02}$
& $1.06\mathrm{e}{-01} \pm 3.98\mathrm{e}{-02}$
& $4.15\mathrm{e}{-01} \pm 1.58\mathrm{e}{-01}$ \\
&& SplineFlow
& $\mathbf{3.77\mathrm{e}{-01} \pm 5.73\mathrm{e}{-02}}$
& $\mathbf{3.78\mathrm{e}{-01} \pm 5.48\mathrm{e}{-02}}$
& $\mathbf{3.60\mathrm{e}{-01} \pm 4.04\mathrm{e}{-02}}$
& $\mathbf{5.57\mathrm{e}{-02} \pm 9.43\mathrm{e}{-03}}$
& $\mathbf{2.22\mathrm{e}{-01} \pm 4.77\mathrm{e}{-02}}$ \\

\bottomrule
\end{tabular}
}
\end{table}

\begin{table}[H]
\caption{Performance of models on holdout interpolation and extrapolation settings for the Brain Regeneration dataset.}
\vspace{0.01in}
\label{tab:apdx_stereoseq_condensed}
\centering
% \small
\setlength{\tabcolsep}{5pt}
\resizebox{0.95\textwidth}{!}{
\begin{tabular}{lll ccc}
\toprule
\textbf{Embedding} & \textbf{Task} & \textbf{Model}
& \textbf{Wasserstein} & \textbf{MMD} & \textbf{Energy} \\
\midrule

\multirow{6}{*}{PHATE} 
&
\multirow{3}{*}{Interpolation}
& MOTFM
& $7.94e{-1}\!\pm\!2.64e{-2}$
& $5.03e{-1}\!\pm\!1.60e{-2}$
& $1.44e{+0}\!\pm\!6.21e{-2}$ \\
& & SF2M
& $7.53e{-1}\!\pm\!2.58e{-2}$
& $4.57e{-1}\!\pm\!8.30e{-3}$
& $1.31e{+0}\!\pm\!3.55e{-2}$ \\
& & SplineFlow
& $\mathbf{6.07e{-1}\!\pm\!2.71e{-2}}$
& $\mathbf{2.85e{-1}\!\pm\!6.60e{-3}}$
& $\mathbf{8.15e{-1}\!\pm\!2.14e{-2}}$ \\

\cmidrule{2-6}
& \multirow{3}{*}{Extrapolation}
& MOTFM
& $1.57e{+0}\!\pm\!4.50e{-3}$
& $1.01e{+0}\!\pm\!1.15e{-2}$
& $3.89e{+0}\!\pm\!5.07e{-2}$ \\
& & SF2M
& $\mathbf{1.53e{+0}\!\pm\!4.72e{-2}}$
& $9.72e{-1}\!\pm\!9.20e{-3}$
& $3.77e{+0}\!\pm\!7.24e{-2}$ \\
& & SplineFlow
& $1.59e{+0}\!\pm\!1.92e{-2}$
& $\mathbf{9.55e{-1}\!\pm\!8.70e{-3}}$
& $\mathbf{3.74e{+0}\!\pm\!4.24e{-2}}$ \\

\midrule
\multirow{6}{*}{PCA} & 
\multirow{3}{*}{Interpolation}
& MOTFM
& $9.01e{-1}\!\pm\!2.16e{-2}$
& $7.12e{-2}\!\pm\!4.40e{-3}$
& $1.16e{+0}\!\pm\!5.24e{-2}$ \\
& & SF2M
& $8.88e{-1}\!\pm\!7.40e{-3}$
& $6.68e{-2}\!\pm\!7.00e{-4}$
& $1.14e{+0}\!\pm\!1.24e{-2}$ \\
& & SplineFlow
& $\mathbf{8.38e{-1}\!\pm\!1.37e{-2}}$
& $\mathbf{5.82e{-2}\!\pm\!2.50e{-3}}$
& $\mathbf{1.02e{+0}\!\pm\!3.93e{-2}}$ \\

\cmidrule{2-6}
& \multirow{3}{*}{Extrapolation}
& MOTFM
& $\mathbf{7.99e{-1}\!\pm\!3.10e{-3}}$
& $1.35e{-1}\!\pm\!6.90e{-3}$
& $1.65e{+0}\!\pm\!4.28e{-2}$ \\
& & SF2M
& $8.65e{-1}\!\pm\!1.54e{-2}$
& $1.77e{-1}\!\pm\!2.50e{-3}$
& $1.92e{+0}\!\pm\!1.27e{-2}$ \\
& & SplineFlow
& $8.04e{-1}\!\pm\!3.72e{-2}$
& $\mathbf{1.29e{-1}\!\pm\!6.80e{-3}}$
& $\mathbf{1.65e{+0}\!\pm\!5.66e{-2}}$ \\

\bottomrule
\end{tabular}
}
\end{table}

\begin{table}[H]
\caption{Performance of models across holdout interpolation and extrapolation settings for the Embryoid Evolution dataset.}
\vspace{0.01in}
\label{tab:apdx_citeseq_condensed}
\centering
% \small
\setlength{\tabcolsep}{5pt}
\renewcommand{\arraystretch}{1.05}
\resizebox{0.95\textwidth}{!}{
\begin{tabular}{lll ccc}
\toprule
\textbf{Embedding} & \textbf{Task} & \textbf{Model}
& \textbf{Wasserstein} & \textbf{MMD} & \textbf{Energy} \\
\midrule

\multirow{6}{*}{PHATE} &
\multirow{3}{*}{Interpolation}
& MOTFM
& $3.78e{-1}\!\pm\!1.11e{-2}$
& $6.01e{-2}\!\pm\!2.52e{-3}$
& $1.75e{-1}\!\pm\!4.98e{-3}$ \\
& & SF2M
& $2.72e{-1}\!\pm\!2.36e{-2}$
& $2.46e{-2}\!\pm\!4.03e{-3}$
& $8.63e{-2}\!\pm\!1.33e{-2}$ \\
& & SplineFlow
& $\mathbf{2.07e{-1}\!\pm\!1.69e{-2}}$
& $\mathbf{1.26e{-2}\!\pm\!5.87e{-4}}$
& $\mathbf{3.96e{-2}\!\pm\!1.40e{-3}}$ \\

\cmidrule{2-6}
& \multirow{3}{*}{Extrapolation}
& MOTFM
& $4.48e{-1}\!\pm\!5.05e{-2}$
& $8.25e{-2}\!\pm\!9.05e{-3}$
& $2.39e{-1}\!\pm\!7.18e{-3}$ \\
& & SF2M
& $\mathbf{4.39e{-1}\!\pm\!4.40e{-2}}$
& $\mathbf{5.79e{-2}\!\pm\!1.08e{-2}}$
& $\mathbf{1.94e{-1}\!\pm\!1.57e{-2}}$ \\
& & SplineFlow
& $4.48e{-1}\!\pm\!4.41e{-2}$
& $6.90e{-2}\!\pm\!1.70e{-3}$
& $2.07e{-1}\!\pm\!2.96e{-2}$ \\

\midrule
\multirow{6}{*}{PCA} &
\multirow{3}{*}{Interpolation}
& MOTFM
& $2.86e{-1}\!\pm\!1.37e{-2}$
& $1.14e{-2}\!\pm\!9.33e{-4}$
& $1.65e{-1}\!\pm\!1.23e{-2}$ \\
& & SF2M
& $\mathbf{2.74e{-1}\!\pm\!1.05e{-2}}$
& $\mathbf{9.12e{-3}\!\pm\!8.42e{-4}}$
& $\mathbf{1.55e{-1}\!\pm\!1.25e{-2}}$ \\
& & SplineFlow
& $2.96e{-1}\!\pm\!1.46e{-2}$
& $1.27e{-2}\!\pm\!3.11e{-4}$
& $1.99e{-1}\!\pm\!9.34e{-3}$ \\

\cmidrule{2-6}
& \multirow{3}{*}{Extrapolation}
& MOTFM
& $7.89e{-1}\!\pm\!1.29e{-1}$
& $3.75e{-2}\!\pm\!2.11e{-3}$
& $7.21e{-1}\!\pm\!5.09e{-2}$ \\
& & SF2M
& $5.92e{-1}\!\pm\!2.02e{-2}$
& $3.77e{-2}\!\pm\!3.98e{-4}$
& $7.47e{-1}\!\pm\!1.18e{-2}$ \\
& & SplineFlow
& $\mathbf{5.39e{-1}\!\pm\!3.35e{-2}}$
& $\mathbf{3.58e{-2}\!\pm\!4.42e{-3}}$
& $\mathbf{6.89e{-1}\!\pm\!1.01e{-1}}$ \\

\bottomrule
\end{tabular}
}
\end{table}

\begin{figure}[H]
\centering
\includegraphics[width=\linewidth]{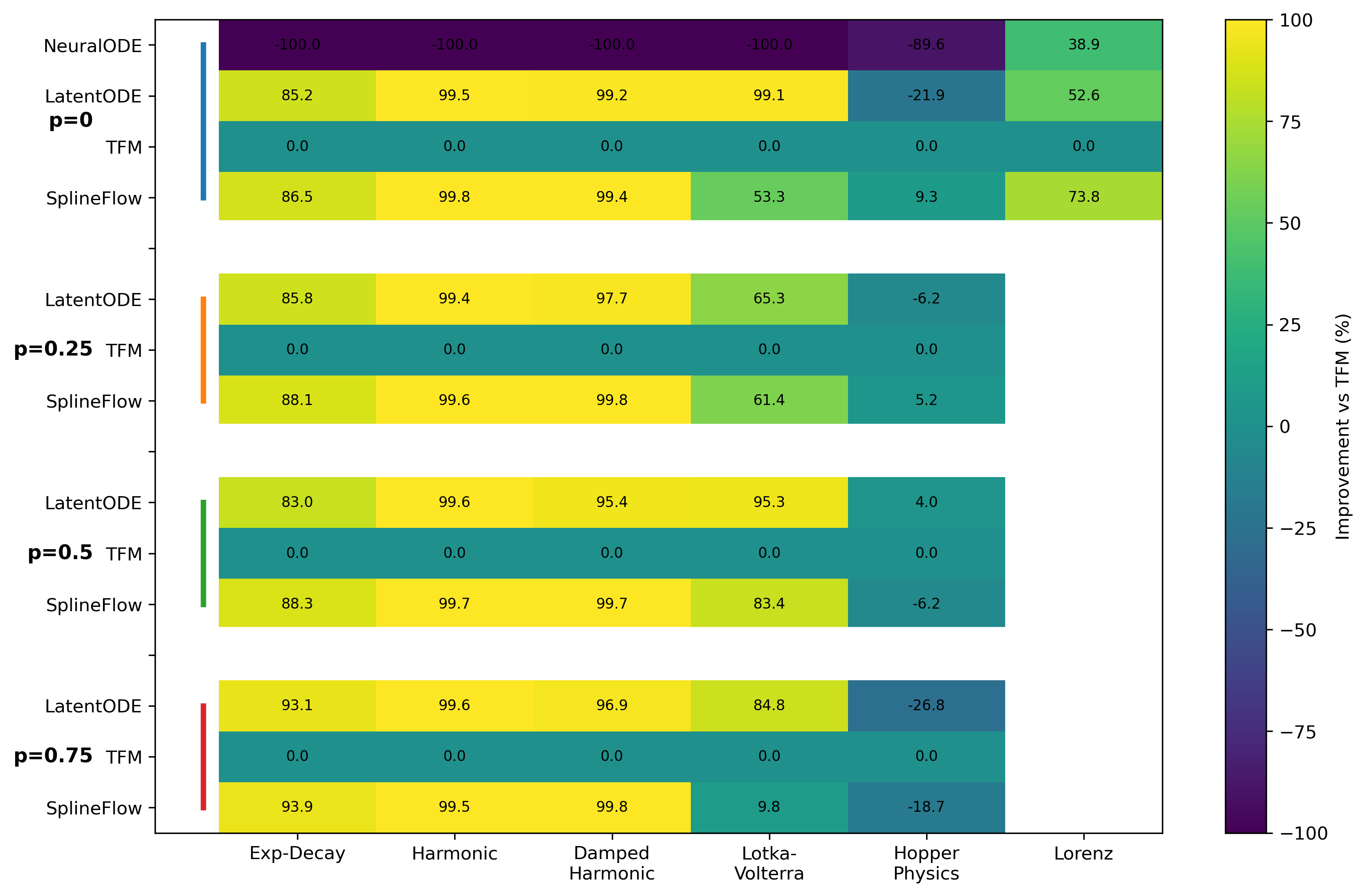}
\vspace{-0.1in}
\caption{Heatmap visualization of relative improvements of LatentODE and SplineFlow in MSE with reference to TFM, summarized from the ODE Table~\ref{tab: apdx_ode_combined_all_datasets}. 
% ($\dfrac{\text{TFM}- \text{model}}{\text{TFM}}\times100$)
}
\vspace{-0.1in}
\label{fig: heatmap_ode}
\end{figure}

\section{Visualizations of Predicted Dynamics}
\label{apdx: viz_dynamics}

In this section, we visualize the predicted trajectories alongside the ground truth in different spaces to further examine the learning performance of different methods (apart from quantitative metric performance). For irregularly sampled settings, we choose $p=0.75$ to reflect the most challenging scenarios. The trajectories predicted by our SplineFlow method can recover the ground-truth dynamics well, particularly outperforming other methods for the chaotic Lorenz system. 

\subsection{ODE Dynamics}

\subsubsection{Harmonic Oscillator ($p=0.75$)}

\begin{figure}[H]
    \centering
    \includegraphics[width=0.95\linewidth]{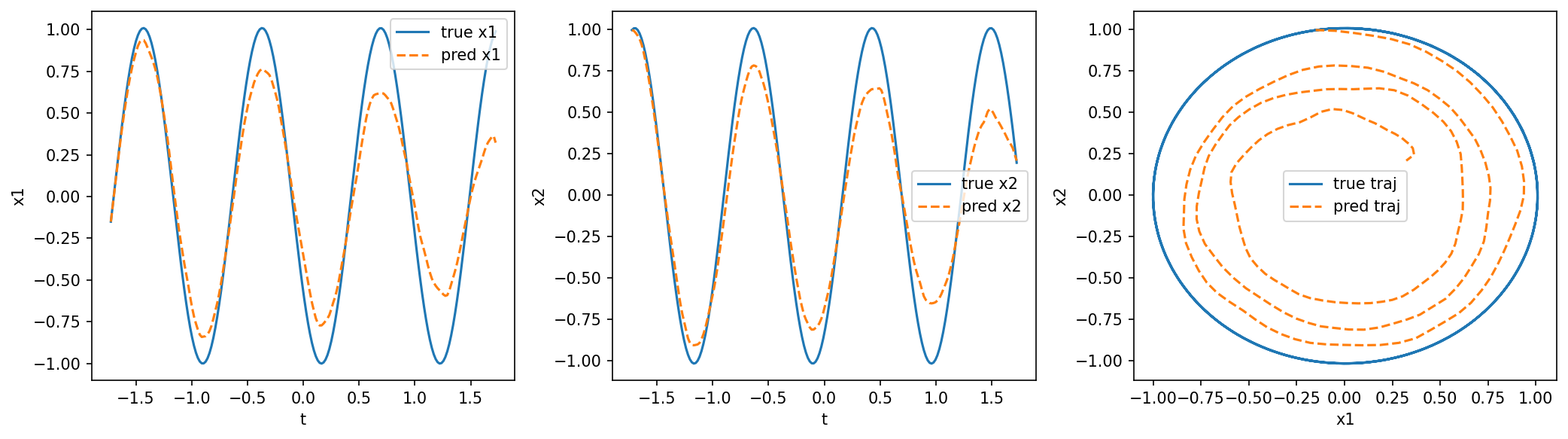}
    \vspace{-0.05in}
    \caption{TFM evaluated path for Harmonic Oscillator.}
    \vspace{-0.1in}
    \label{fig:harmonic_ode_tfm_p75}
\end{figure}

\begin{figure}[H]
    \centering
    \includegraphics[width=0.95\linewidth]{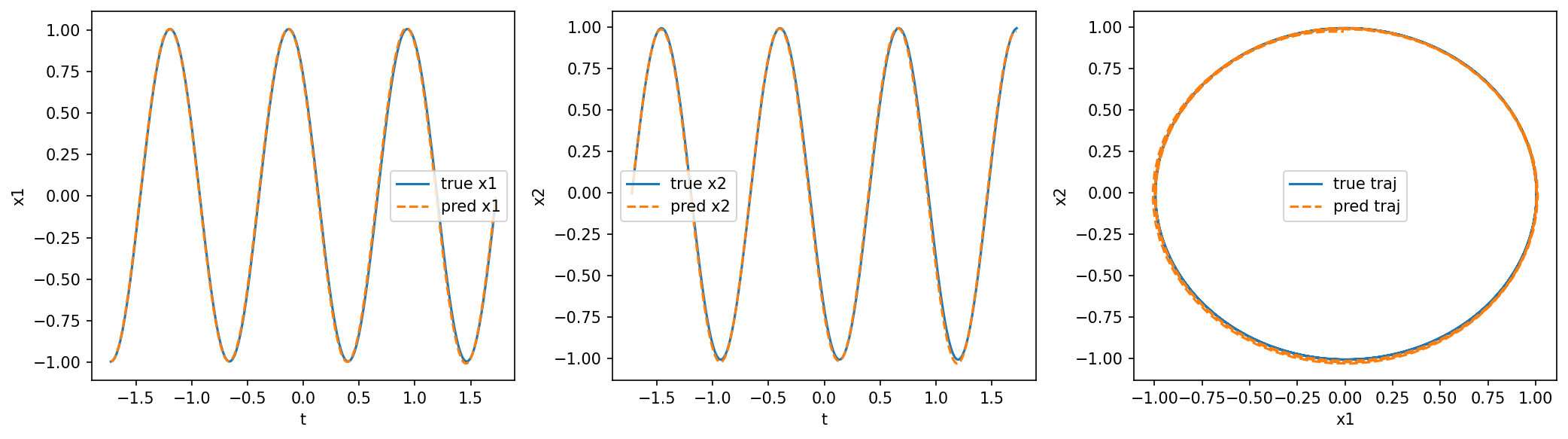}
    \vspace{-0.05in}
    \caption{SplineFlow evaluated path for Harmonic Oscillator.}
    \vspace{-0.1in}
    \label{fig:harmonic_ode_splineflow_p75}
\end{figure}

\begin{figure}[H]
    \centering
    \includegraphics[width=0.8\linewidth]{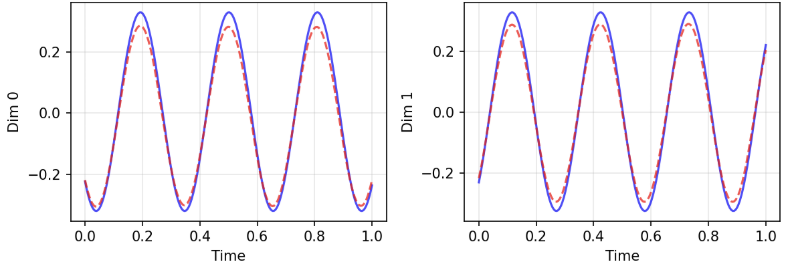}
    \vspace{-0.05in}
    \caption{LatentODE evaluated path for Harmonic Oscillator.}
    \vspace{-0.1in}
    \label{fig:harmonic_ode_latentode_p75}
\end{figure}

\subsubsection{Damped Harmonic Oscillator ($p=0.75$)}

\begin{figure}[H]
    \centering
    \includegraphics[width=0.95\linewidth]{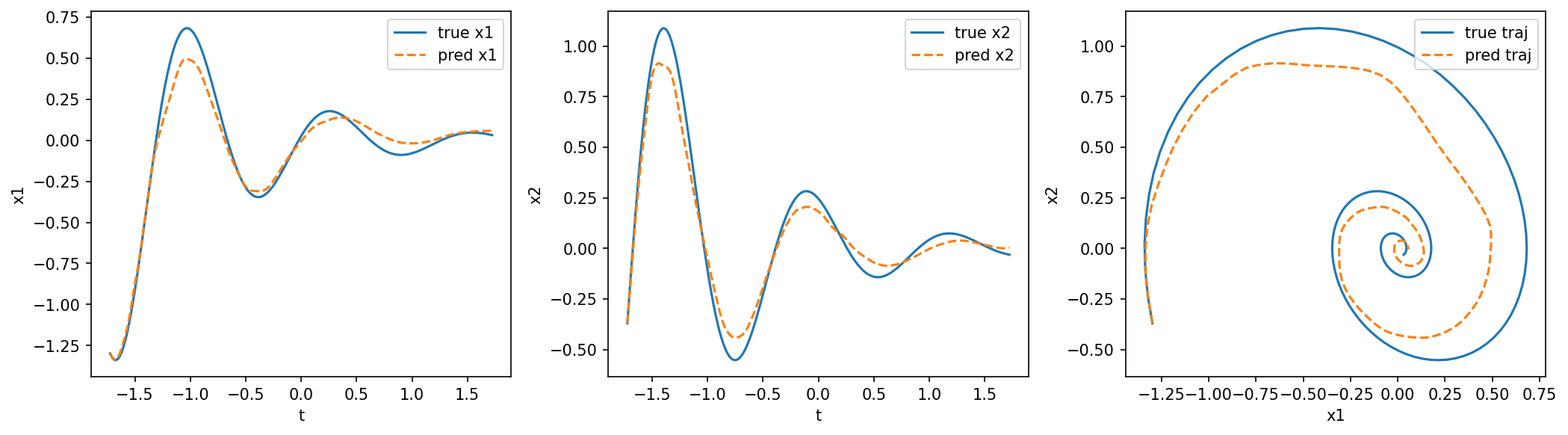}
    \vspace{-0.05in}
    \caption{TFM evaluated path for Damped Harmonic Oscillator.}
    \vspace{-0.1in}
    \label{fig:damped_harmonic_ode_tfm_p75}
\end{figure}

\begin{figure}[H]
    \centering
    \includegraphics[width=0.95\linewidth]{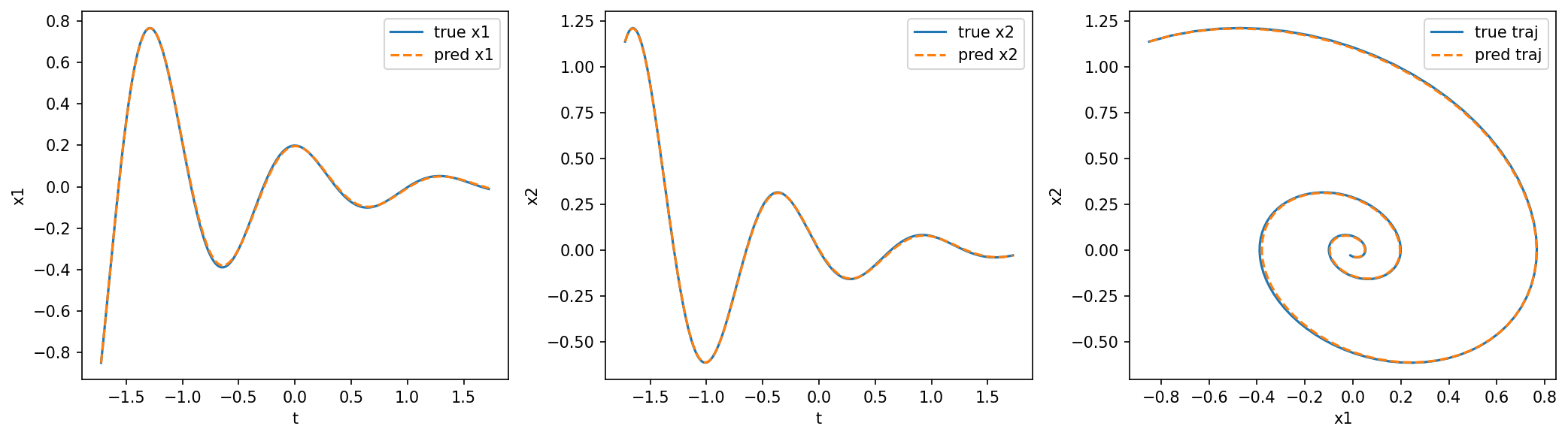}
    \vspace{-0.05in}
    \caption{SplineFlow evaluated path for Damped Harmonic Oscillator.}
    \vspace{-0.1in}
    \label{fig:damped_harmonic_ode_splineflow_p75}
\end{figure}

\begin{figure}[H]
    \centering
    \includegraphics[width=0.85\linewidth]{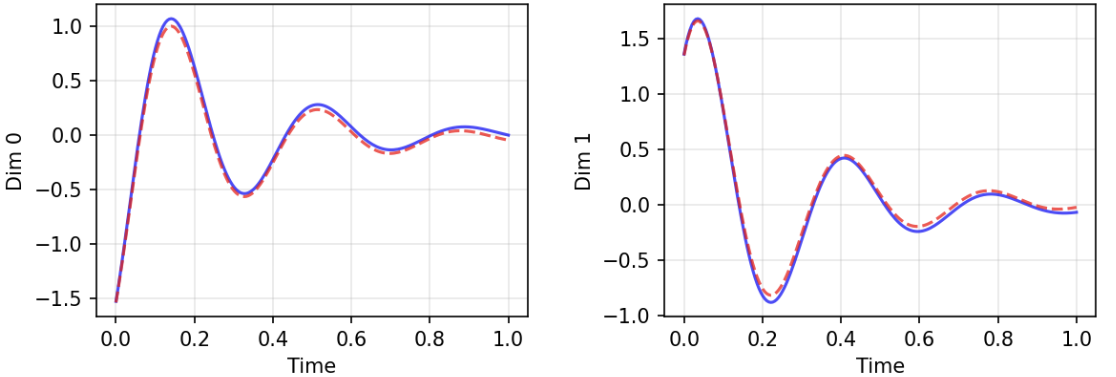}
    \vspace{-0.05in}
    \caption{LatentODE evaluated path for Damped Harmonic Oscillator.}
    \vspace{-0.1in}
    \label{fig:harmonic_ode_latentode_p75}
\end{figure}

\subsubsection{Lotka--Volterra ($p=0.75$)}

\begin{figure}[H]
    \centering
    \includegraphics[width=0.95\linewidth]{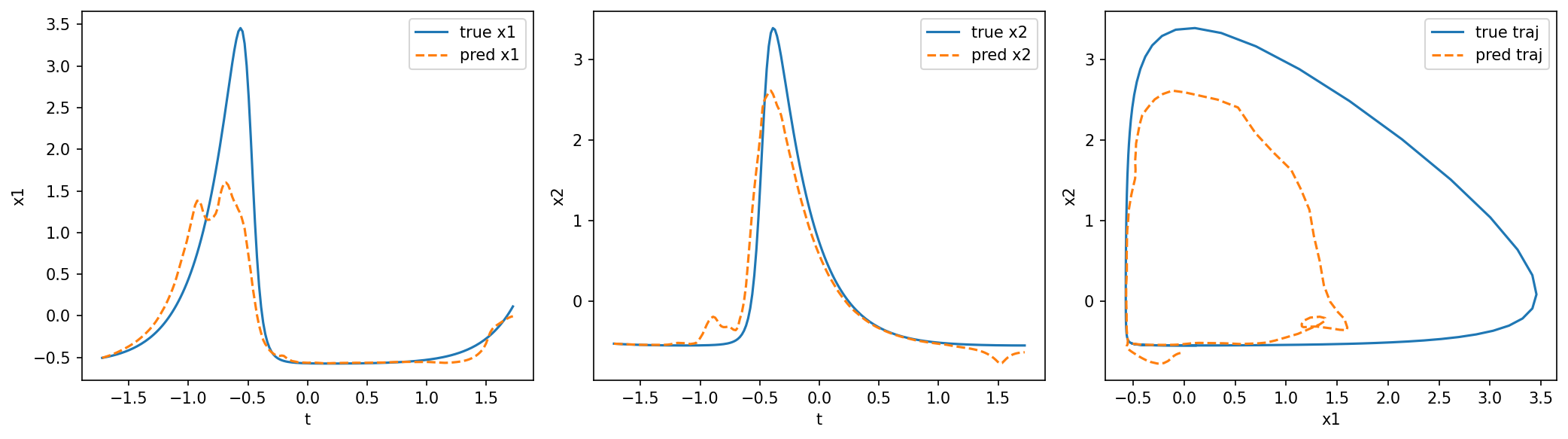}
    \vspace{-0.05in}
    \caption{TFM evaluated path for Lotka--Volterra System.}
    \vspace{-0.1in}
    \label{fig:lv_ode_tfm_p75}
\end{figure}

\begin{figure}[H]
    \centering
    \includegraphics[width=0.95\linewidth]{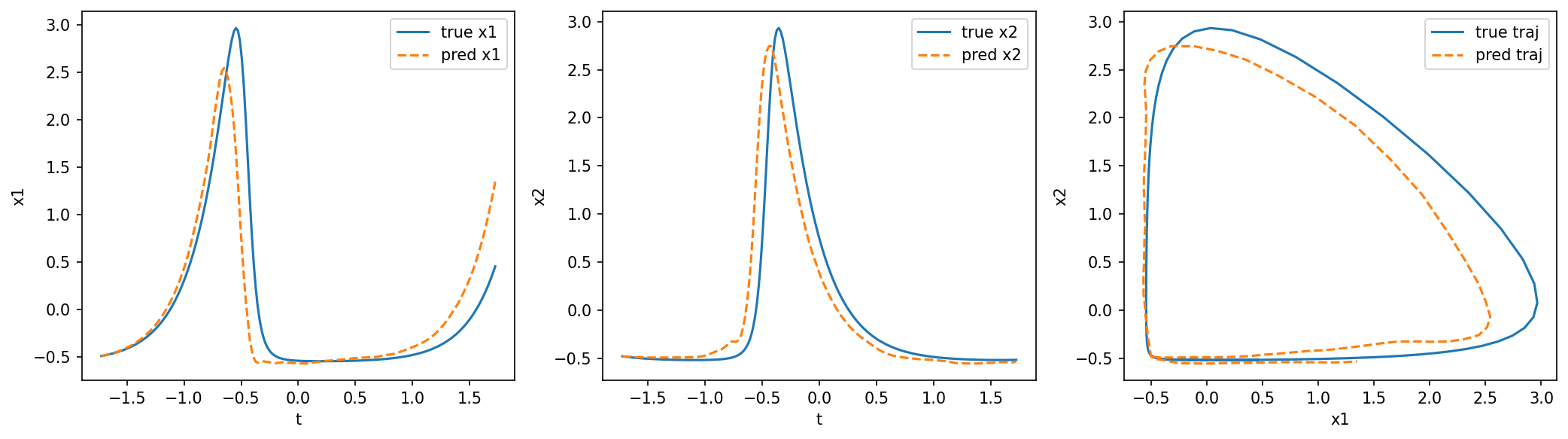}
    \vspace{-0.05in}
    \caption{SplineFlow evaluated path for Lotka--Volterra System.}
    \vspace{-0.1in}
    \label{fig:lv_ode_splineflow_p75}
\end{figure}

\subsection{Chaotic Systems (Lorenz)}

\subsubsection{Adjoint-Methods ODE Dynamics}
\begin{figure}[H]
    \centering
    \includegraphics[width=0.95\linewidth]{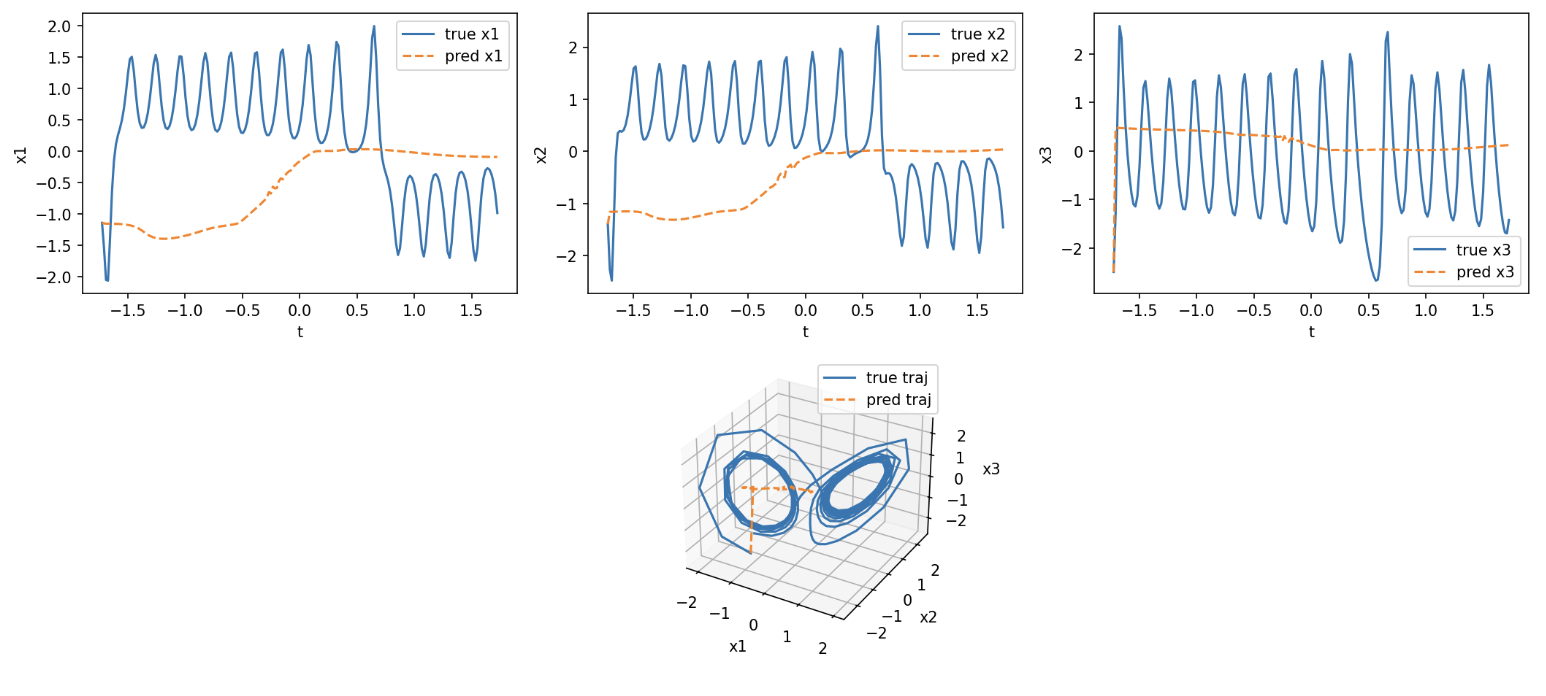}
    \vspace{-0.05in}
    \caption{NeuralODE evaluated path for Lorenz System.}
    \vspace{-0.1in}
    \label{fig:lorenz_ode_neuralode_p0}
\end{figure}

\begin{figure}[H]
    \centering
    \includegraphics[width=0.95\linewidth]{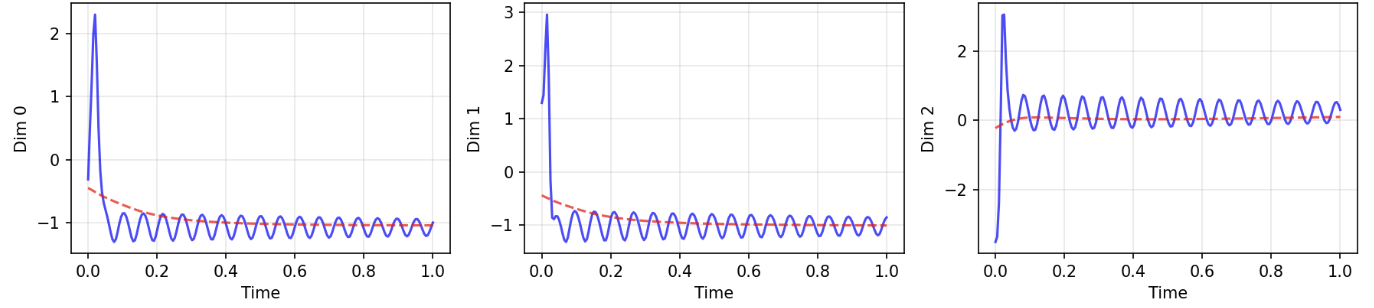}
    \vspace{-0.05in}
    \caption{LatentODE evaluated path for Lorenz System.}
    \vspace{-0.1in}
    \label{fig:lorenz_ode_latentode_p0}
\end{figure}

\subsubsection{Trajectory Flow Matching ODE Dynamics}

\begin{figure}[H]
    \centering
    \includegraphics[width=0.95\linewidth]{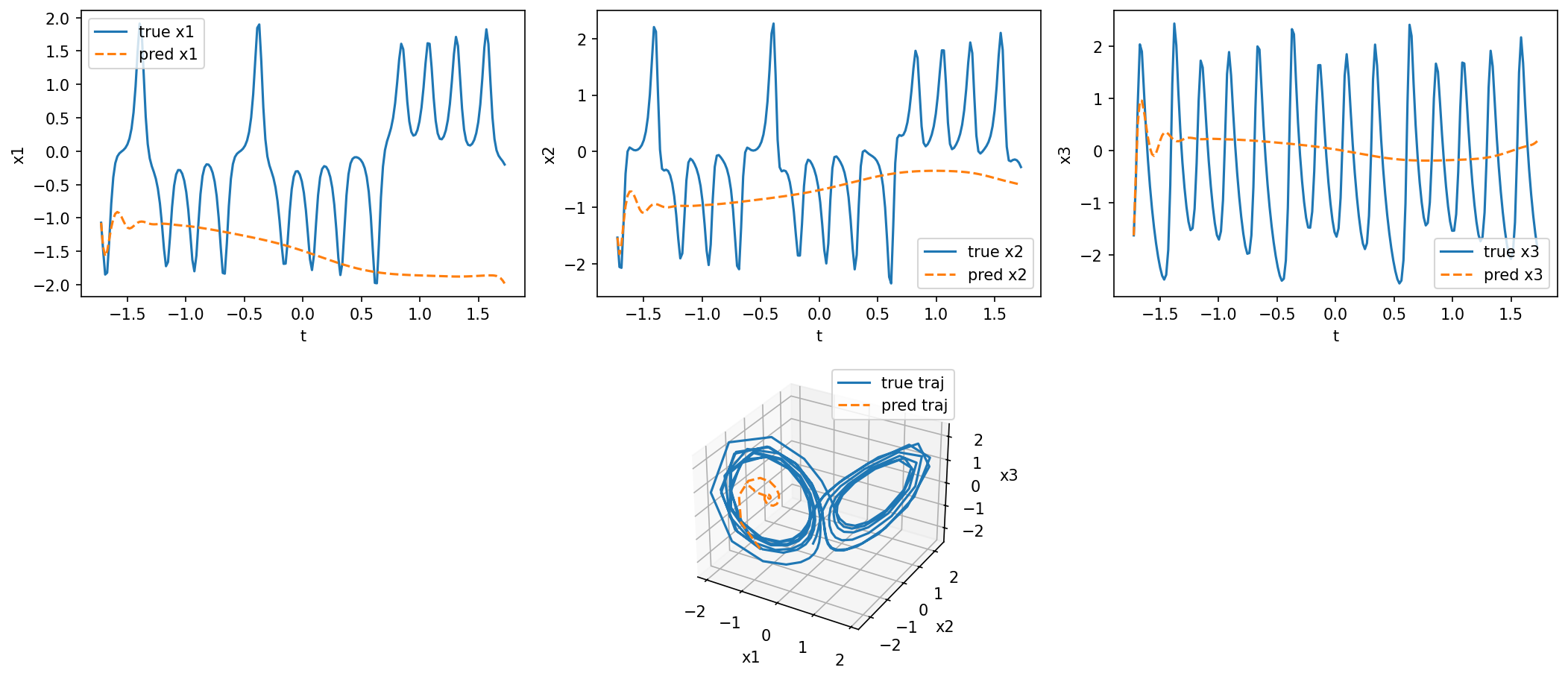}
    \vspace{-0.05in}
    \caption{TFM evaluated path for Lorenz System.}
    \vspace{-0.1in}
    \label{fig:lorenz_ode_tfm_img2}
\end{figure}

\begin{figure}[H]
    \centering
    \includegraphics[width=0.95\linewidth]{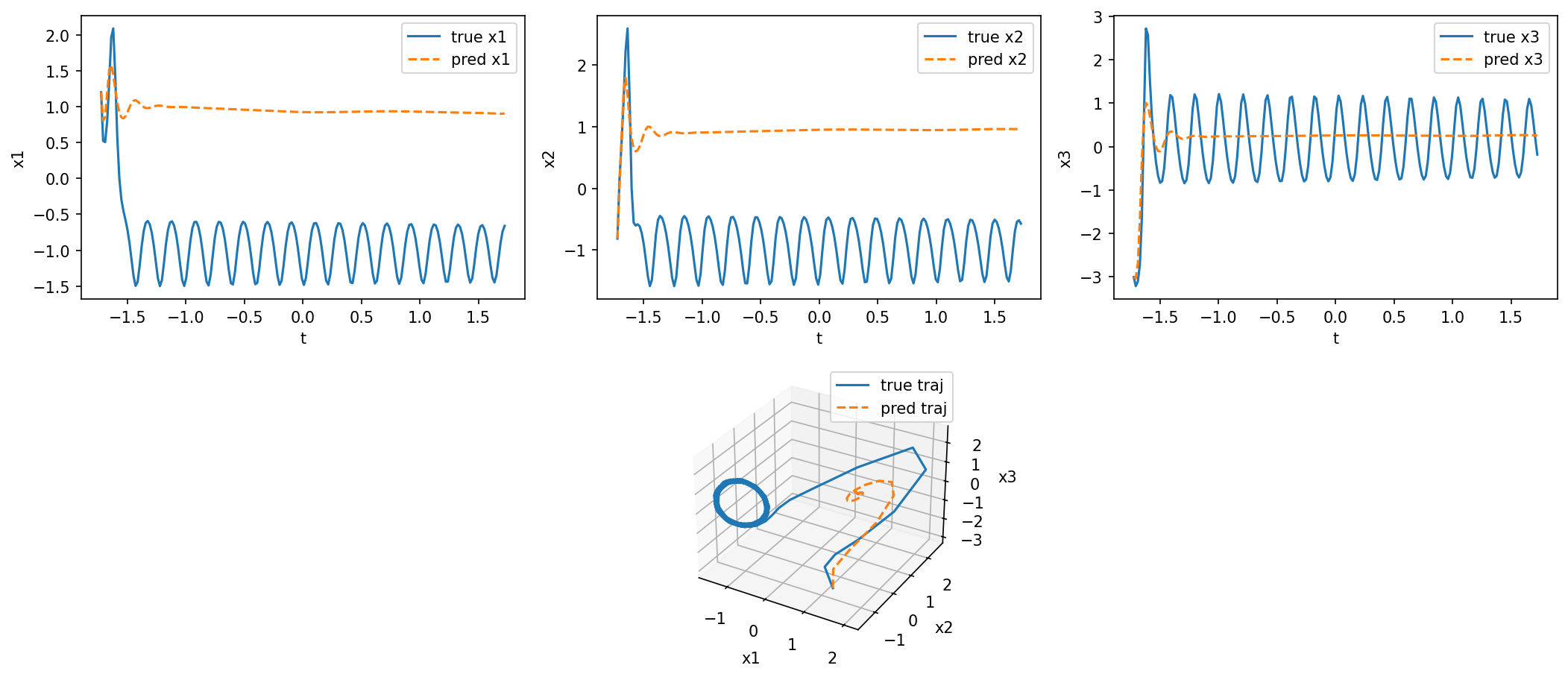}
    \vspace{-0.05in}
    \caption{TFM evaluated path for Lorenz System.}
    \vspace{-0.1in}
    \label{fig:lorenz_ode_tfm_img3}
\end{figure}

\subsubsection{SplineFlow ODE Dynamics}

\begin{figure}[H]
    \centering
    \includegraphics[width=0.95\linewidth]{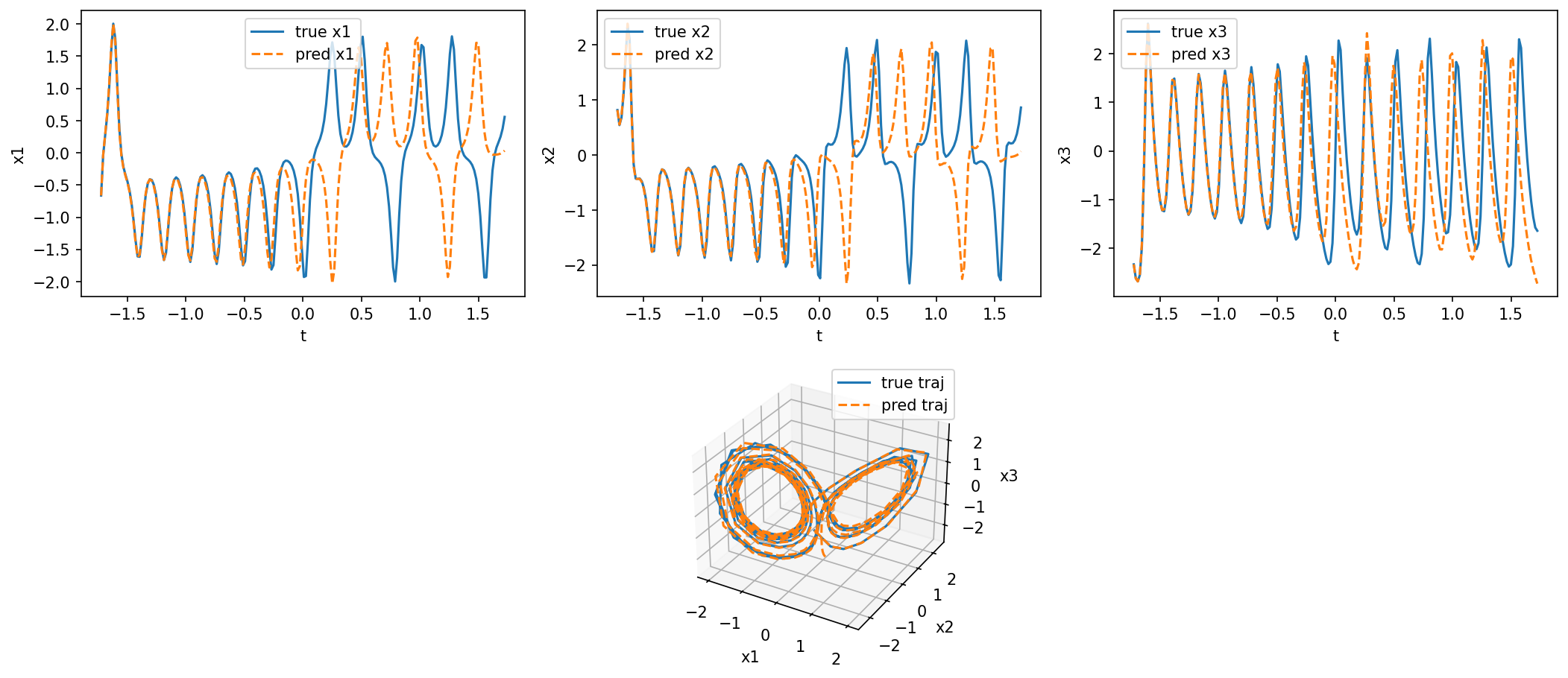}
    \vspace{-0.05in}
    \caption{SplineFlow evaluated path for Lorenz System.}
    \vspace{-0.1in}
    \label{fig:lorenz_ode_splineflow_img2}
\end{figure}

\begin{figure}[H]
    \centering
    \includegraphics[width=0.95\linewidth]{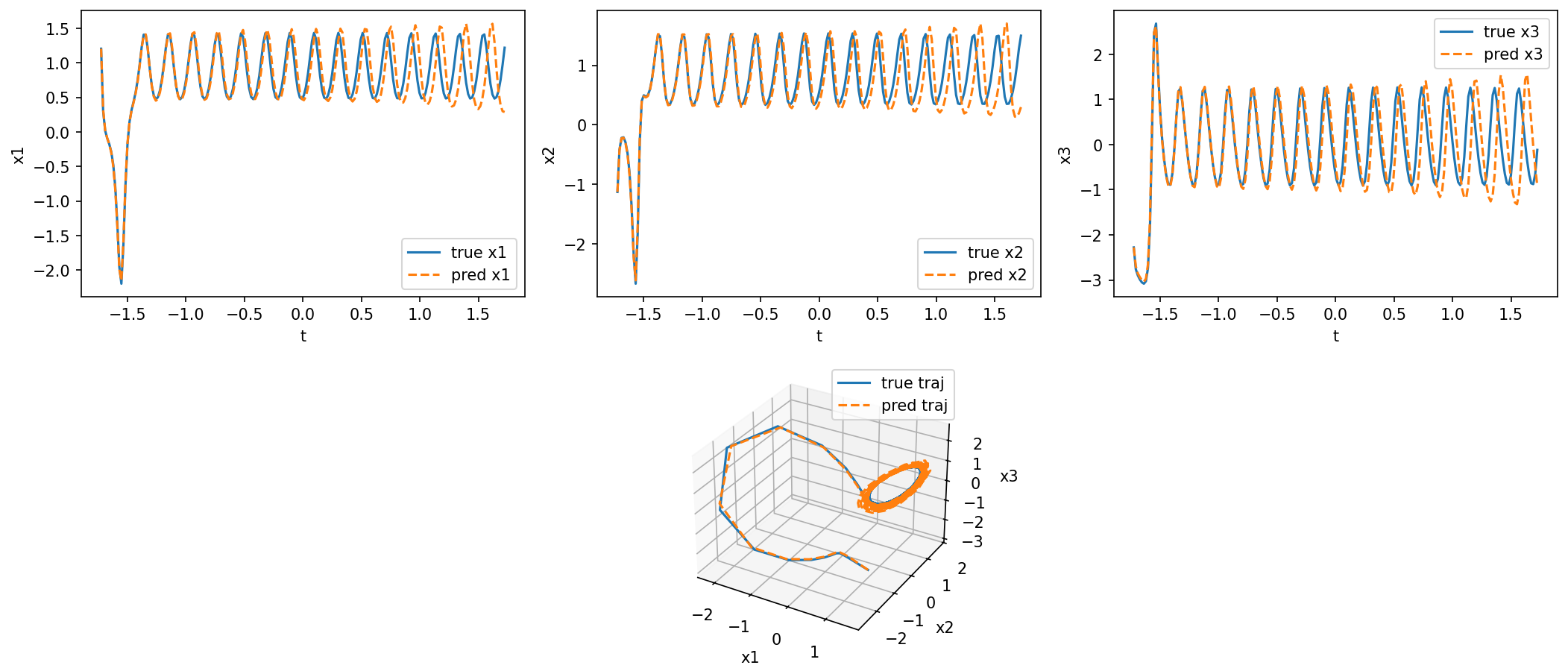}
    \vspace{-0.05in}
    \caption{SplineFlow evaluated path for Lorenz System.}
    \vspace{-0.1in}
    \label{fig:lorenz_ode_splineflow_img3}
\end{figure}

\subsubsection{$\text{SF2M}$ SDE Dynamics}

\begin{figure}[H]
    \centering
    \includegraphics[width=0.95\linewidth]{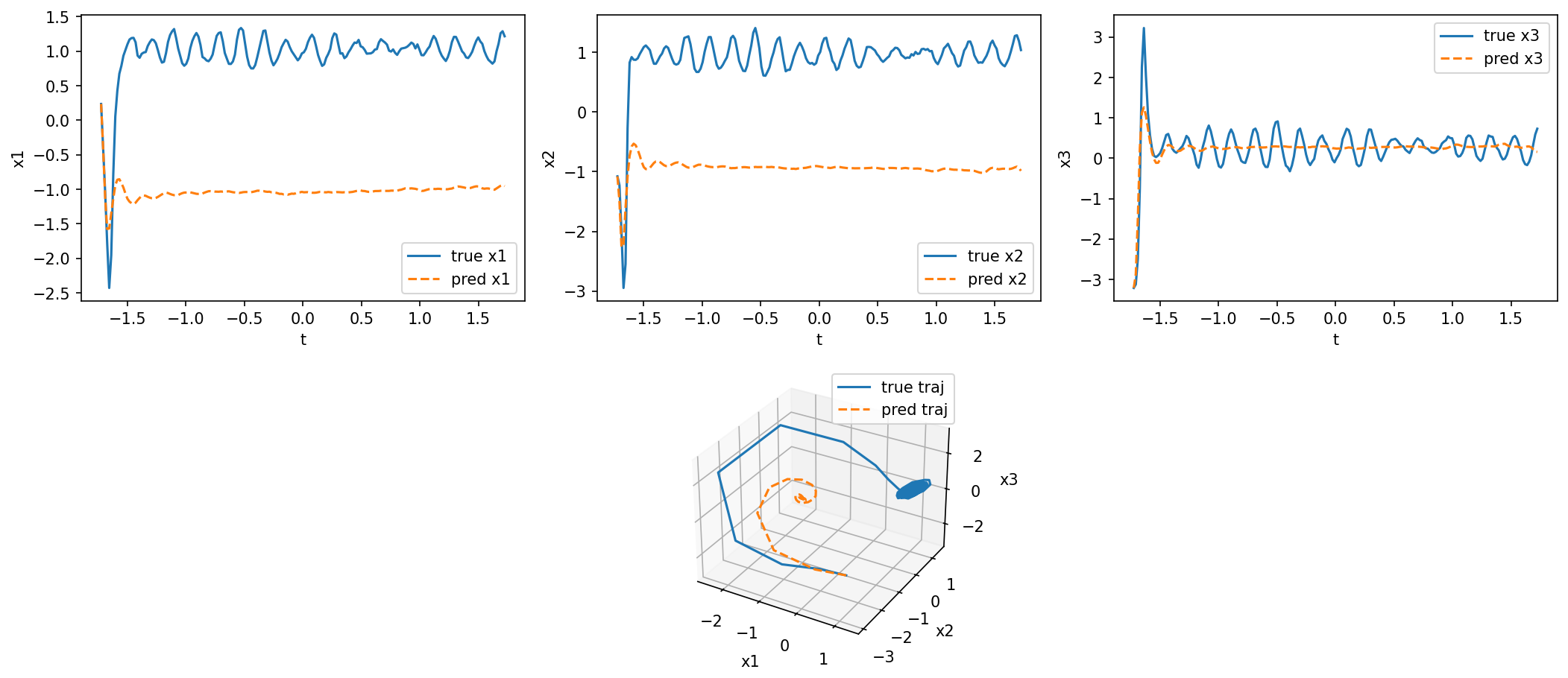}
    \vspace{-0.05in}
    \caption{$\text{SF2M}$ evaluated path for Lorenz System.}
    \vspace{-0.1in}
    \label{fig:lorenz_sde_sf2m_img1}
\end{figure}

\begin{figure}[H]
    \centering
    \includegraphics[width=0.95\linewidth]{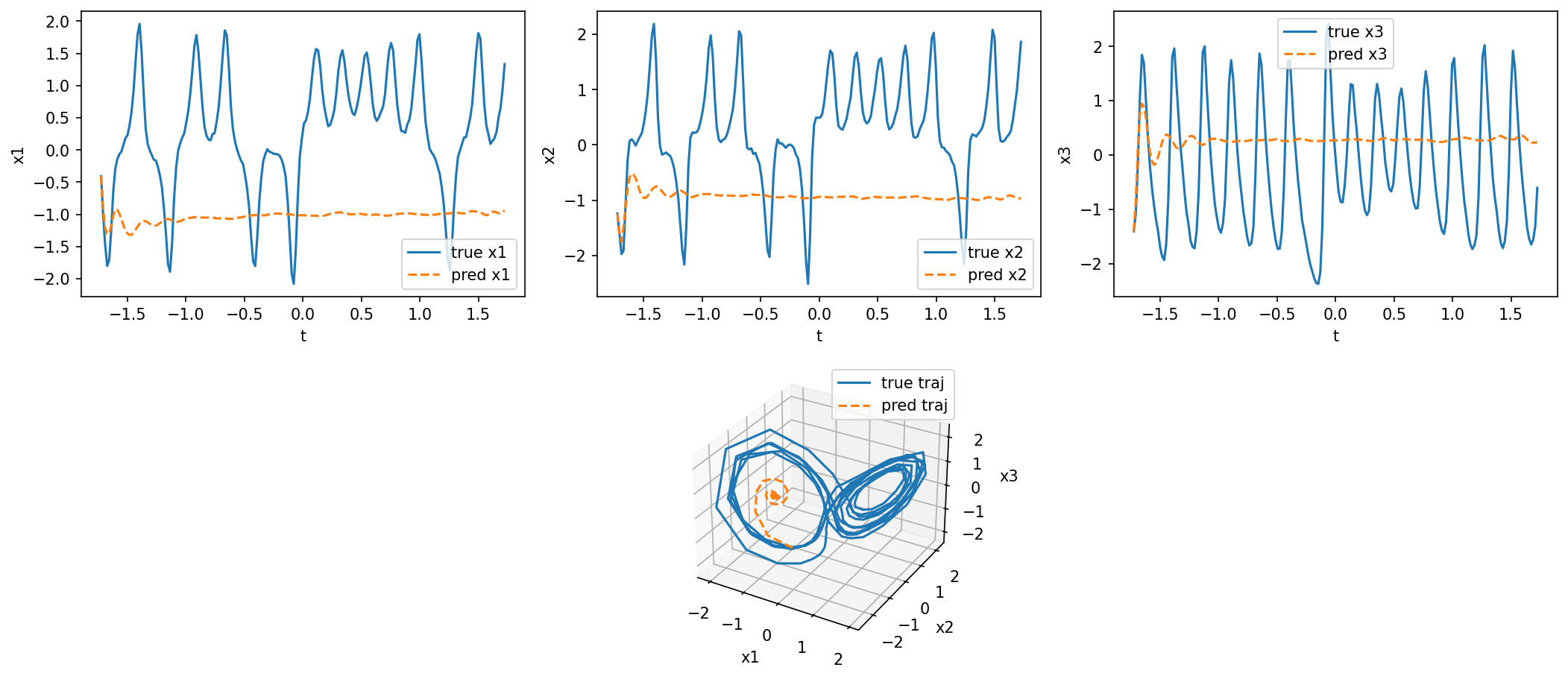}
    \vspace{-0.05in}
    \caption{$\text{SF2M}$ evaluated path for Lorenz System.}
    \vspace{-0.1in}
    \label{fig:lorenz_sde_sf2m_img2}
\end{figure}

\subsubsection{SplineFlow SDE Dynamics}

\begin{figure}[H]
    \centering
    \includegraphics[width=0.95\linewidth]{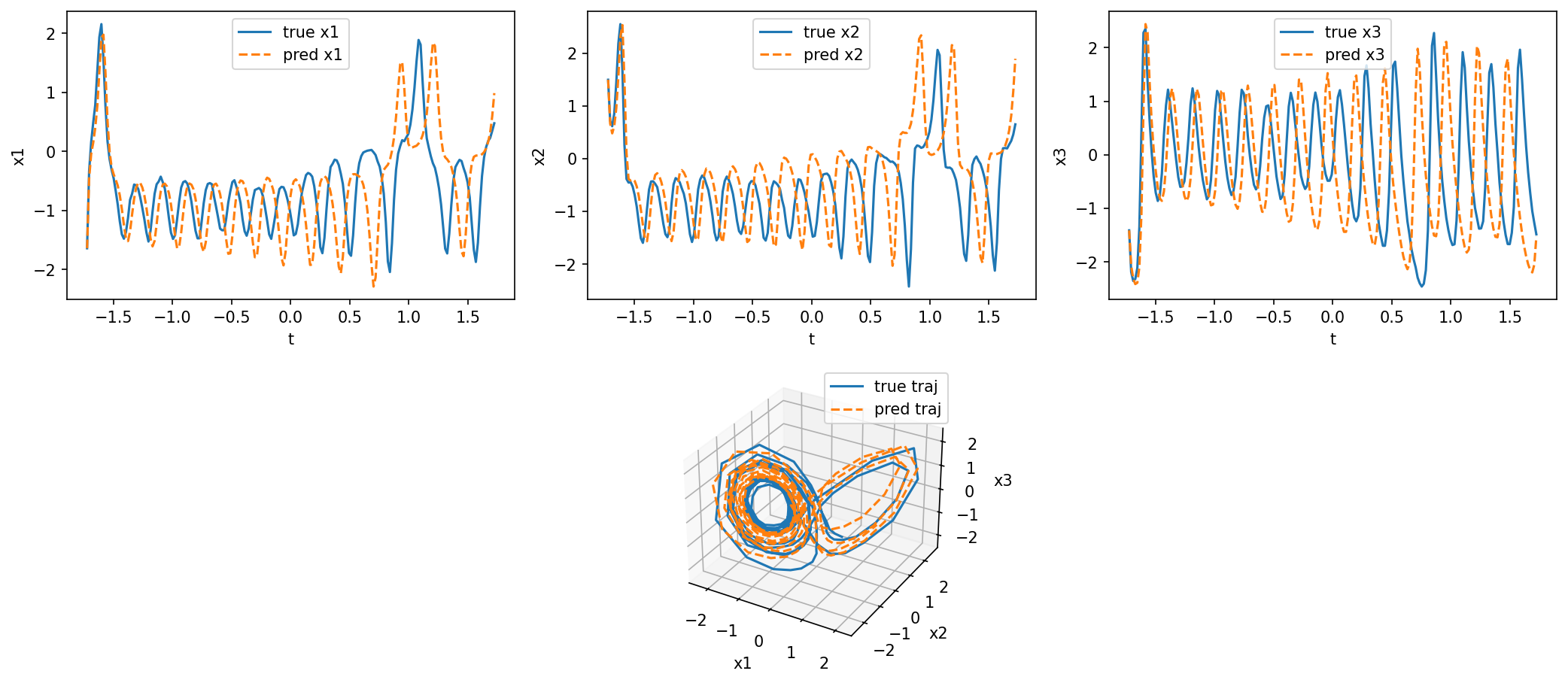}
    \vspace{-0.05in}
    \caption{SplineFlow evaluated path for Lorenz System.}
    \vspace{-0.1in}
    \label{fig:lorenz_sde_splineflow_img1}
\end{figure}

\begin{figure}[H]
    \centering
    \includegraphics[width=0.95\linewidth]{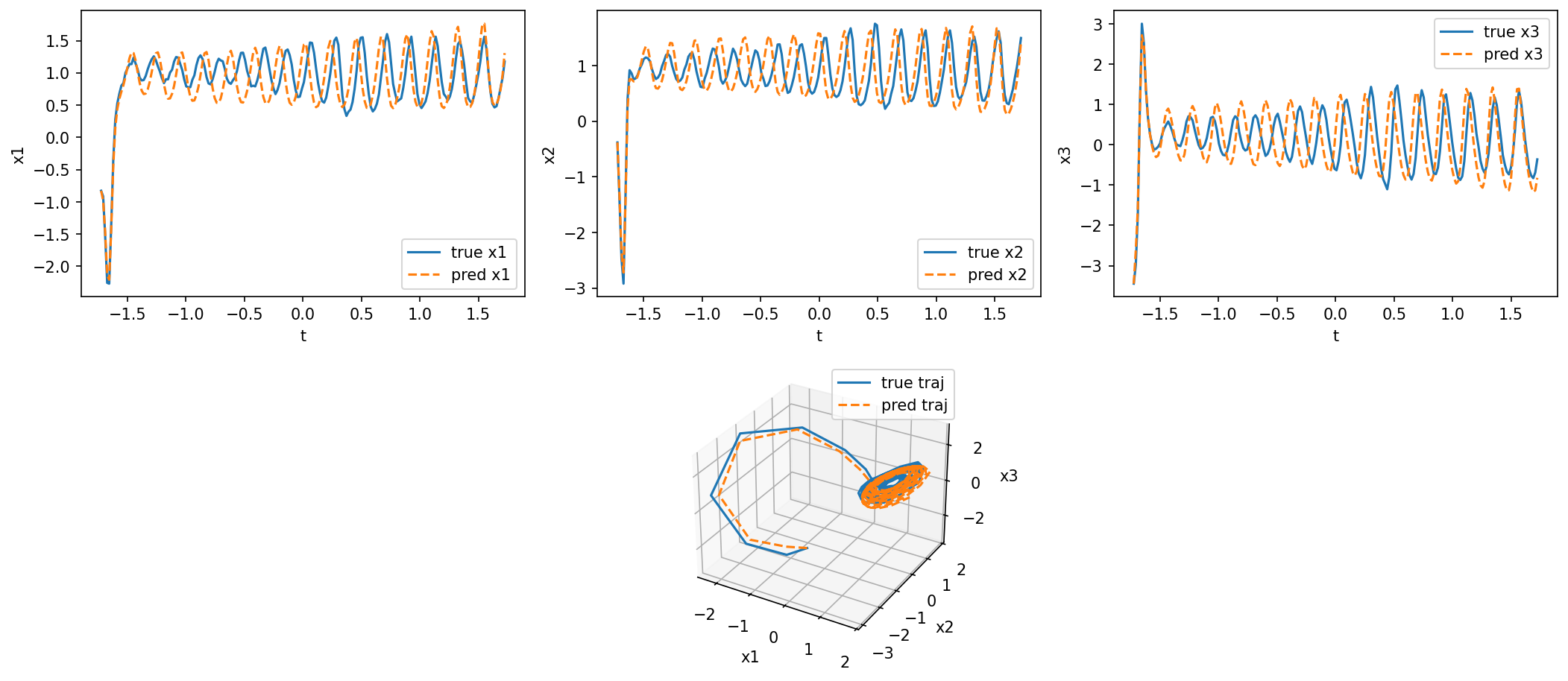}
    \vspace{-0.05in}
    \caption{SplineFlow evaluated path for Lorenz System.}
    \vspace{-0.1in}
    \label{fig:lorenz_sde_splineflow_img2}
\end{figure}

% \subsection{Cellular State Evolution}

% \subsubsection{Brain Regeneration}

% \begin{figure}[H]
%   \centering
%     \includegraphics[width=\linewidth]{figures/cellular_trajectories/temporal_progression_stereoseq.png} % replace with your file
%   \caption{Temporal progression of spatial distribution of different cell types for Brain Regeneration.}
%   \label{fig:temporal_progression_stereoseq}
% \end{figure}

\section{Full Ablations on Irregularity and Spline Degree}
\label{apdx:ablations_all}

\subsection{ODE Dynamics (ablations with order $k$) for non-linear and oscillatory systems}

From the ablation studies, we observe that higher degree B-splines are the most useful when modeling at high irregularity ($p>0$) regimes, for example, Damped Harmonic and Lotka--Volterra. For systems with linear dynamics, such as HopperPhysics, linear degrees perform best. 

\subsubsection{Damped Harmonic Oscillator}

\begin{table}[H]
\caption{Effect of spline degree on MSE performance of SplineFlow on damped harmonic system across missingness probabilities $p$.}
%\vspace{-0.05in}
\centering
\small
\setlength{\tabcolsep}{6pt}
\renewcommand{\arraystretch}{1.05}
\begin{tabular}{lcccc}
\toprule
\textbf{Degree} & $\bm{p=0}$ & $\bm{p=0.25}$ & $\bm{p=0.5}$ & $\bm{p=0.75}$ \\
\midrule
1 & $1.97e{-2}\!\pm\!4.83e{-4}$ & $2.05e{-2}\!\pm\!2.69e{-3}$ & $3.19e{-2}\!\pm\!4.18e{-3}$ & $1.71e{-1}\!\pm\!3.71e{-2}$ \\
2 & $6.5e{-5}\!\pm\!3.47e{-5}$ & $6.2e{-5}\!\pm\!2.69e{-5}$ & $5.4e{-5}\!\pm\!3.44e{-5}$ & $4.15e{-4}\!\pm\!8.13e{-5}$ \\
3 & $3.1e{-5}\!\pm\!8.47e{-6}$ & $3.4e{-5}\!\pm\!9.69e{-6}$ & $1.00e{-4}\!\pm\!8.68e{-5}$ & $1.84e{-4}\!\pm\!4.29e{-5}$ \\
4 & $7.2e{-5}\!\pm\!3.02e{-5}$ & $5.1e{-5}\!\pm\!3.44e{-5}$ & $4.7e{-5}\!\pm\!3.37e{-5}$ & $3.4e{-5}\!\pm\!4.54e{-6}$ \\
5 & $9.5e{-5}\!\pm\!1.02e{-4}$ & $3.0e{-5}\!\pm\!4.77e{-6}$ & $4.7e{-5}\!\pm\!2.14e{-5}$ & $4.8e{-5}\!\pm\!2.30e{-5}$ \\
\bottomrule
\end{tabular}
\label{tab:order_vs_p}
\end{table}

\subsubsection{Lotka--Volterra}

\begin{table}[H]
\caption{Effect of spline degree on MSE performance of SplineFlow on Lotka--Volterra system across missingness probabilities $p$.}
%\vspace{-0.05in}
\centering
\small
\setlength{\tabcolsep}{6pt}
\renewcommand{\arraystretch}{1.05}
\begin{tabular}{lcccc}
\toprule
\textbf{Degree} & $\bm{p=0}$ & $\bm{p=0.25}$ & $\bm{p=0.5}$ & $\bm{p=0.75}$ \\
\midrule
1 & $4.19e{+0}\!\pm\!7.51e{+0}$ & $4.29e{+0}\!\pm\!6.00e{+0}$ & $1.22e{+1}\!\pm\!8.84e{+0}$ & $2.15e{+2}\!\pm\!1.28e{+2}$ \\
2 & $5.83e{-2}\!\pm\!4.14e{-2}$ & $2.56e{-1}\!\pm\!2.05e{-1}$ & $3.56e{-1}\!\pm\!2.24e{-1}$ & $6.82e{+0}\!\pm\!3.98e{+0}$ \\
3 & $4.50e{-2}\!\pm\!2.78e{-2}$ & $1.87e{-1}\!\pm\!1.33e{-1}$ & $2.75e{-1}\!\pm\!1.89e{-1}$ & $8.88e{+0}\!\pm\!8.82e{+0}$ \\
4 & $5.53e{-2}\!\pm\!2.36e{-2}$ & $1.83e{-1}\!\pm\!1.41e{-1}$ & $4.20e{+0}\!\pm\!7.96e{+0}$ & $4.31e{+0}\!\pm\!4.40e{+0}$ \\
5 & $8.55e{-2}\!\pm\!4.28e{-2}$ & $1.00e{-1}\!\pm\!5.72e{-2}$ & $7.05e{-1}\!\pm\!1.21e{+0}$ & $1.78e{+0}\!\pm\!1.43e{+0}$ \\
\bottomrule
\end{tabular}
\label{tab:lv_order_vs_p}
\end{table}

\subsubsection{HopperPhysics}

\begin{table}[H]
\caption{Effect of spline degree on MSE performance of SplineFlow on hopperphysics simulations across missingness probabilities $p$.}
%\vspace{-0.05in}
\centering
\small
\setlength{\tabcolsep}{6pt}
\renewcommand{\arraystretch}{1.05}
\begin{tabular}{lcccc}
\toprule
\textbf{Degree} & $\bm{p=0}$ & $\bm{p=0.25}$ & $\bm{p=0.5}$ & $\bm{p=0.75}$ \\
\midrule
1 & $1.41e{+0}\!\pm\!1.72e{-1}$ & $1.43e{+0}\!\pm\!2.98e{-2}$ & $1.83e{+0}\!\pm\!6.19e{-2}$ & $3.34e{+0}\!\pm\!4.30e{-1}$ \\
2 & $6.60e{+0}\!\pm\!2.35e{+0}$ & $5.39e{+1}\!\pm\!2.47e{+1}$ & $2.80e{+2}\!\pm\!5.93e{+1}$ & $1.00e{+3}\!\pm\!6.07e{+1}$ \\
3 & $1.25e{+1}\!\pm\!8.86e{+0}$ & $3.91e{+2}\!\pm\!1.80e{+2}$ & $6.38e{+4}\!\pm\!1.24e{+4}$ & $1.73e{+4}\!\pm\!1.56e{+3}$ \\
4 & $1.63e{+1}\!\pm\!1.29e{+1}$ & $1.41e{+5}\!\pm\!2.11e{+4}$ & $4.64e{+5}\!\pm\!1.50e{+5}$ & $1.20e{+5}\!\pm\!6.57e{+4}$ \\
5 & $2.26e{+1}\!\pm\!6.33e{+0}$ & $8.95e{+5}\!\pm\!1.36e{+5}$ & $6.71e{+5}\!\pm\!1.30e{+5}$ & $9.25e{+4}\!\pm\!2.44e{+4}$ \\
\bottomrule
\end{tabular}
\label{tab:hopper_order_vs_p}
\end{table}

\subsection{SDE Dynamics (ablations with order $k$) for linear, non-linear and oscillatory systems}

From the ablations in this section, we observe that higher-degree B-splines are useful for modeling nonlinear systems such as the Lotka--Volterra system in the stochastic case.

\subsubsection{Exponential Decay ($p=0$)}

\begin{table}[H]
\caption{Effect of spline order on Exp-Decay performance.}
%\vspace{-0.05in}
\centering
\small
\setlength{\tabcolsep}{4pt}
\renewcommand{\arraystretch}{1.05}
\resizebox{1.0\textwidth}{!}{
\begin{tabular}{lccccc}
\toprule
\textbf{Degree} & \textbf{MSE$_\text{PODE}$} & \textbf{MSE$_\text{SDE}$} & \textbf{Wasserstein} & \textbf{MMD} & \textbf{Energy} \\
\midrule
1 & $3.91e{-1}\!\pm\!4.23e{-3}$ & $4.56e{-1}\!\pm\!8.27e{-3}$ & $3.31e{-1}\!\pm\!2.83e{-3}$ & $2.73e{-2}\!\pm\!2.48e{-3}$ & $6.94e{-2}\!\pm\!1.08e{-3}$ \\
2 & $1.25e{+0}\!\pm\!4.04e{-1}$ & $2.38e{+0}\!\pm\!5.77e{-1}$ & $7.94e{-1}\!\pm\!1.78e{-1}$ & $3.38e{-2}\!\pm\!3.08e{-3}$ & $1.12e{-1}\!\pm\!1.67e{-2}$ \\
3 & $8.22e{-1}\!\pm\!6.78e{-2}$ & $2.03e{+0}\!\pm\!4.47e{-1}$ & $6.87e{-1}\!\pm\!1.70e{-1}$ & $3.85e{-2}\!\pm\!1.01e{-2}$ & $1.25e{-1}\!\pm\!3.65e{-2}$ \\
4 & $1.20e{+0}\!\pm\!6.28e{-1}$ & $4.74e{+0}\!\pm\!4.43e{+0}$ & $1.17e{+0}\!\pm\!8.69e{-1}$ & $3.46e{-2}\!\pm\!1.40e{-2}$ & $1.30e{-1}\!\pm\!7.66e{-2}$ \\
5 & $1.27e{+0}\!\pm\!3.89e{-1}$ & $9.89e{+0}\!\pm\!1.17e{+1}$ & $1.49e{+0}\!\pm\!1.31e{+0}$ & $6.13e{-2}\!\pm\!1.22e{-2}$ & $2.26e{-1}\!\pm\!9.57e{-2}$ \\
\bottomrule
\end{tabular}
}
\label{tab:expdecay_order_ablation}
\end{table}

\subsubsection{Exponential Decay ($p=0.25$)}

\begin{table}[H]
\caption{Effect of B-spline degree on Exp-decay performance for $p=0.25$.}
%\vspace{-0.05in}
\centering
\small
\setlength{\tabcolsep}{4pt}
\renewcommand{\arraystretch}{1.05}
\resizebox{1.0\textwidth}{!}{
\begin{tabular}{lccccc}
\toprule
\textbf{Degree} & \textbf{MSE$_\text{PODE}$} & \textbf{MSE$_\text{SDE}$} & \textbf{Wasserstein} & \textbf{MMD} & \textbf{Energy} \\
\midrule
1 & $3.66e{-1}\!\pm\!6.08e{-3}$ & $4.70e{-1}\!\pm\!5.90e{-3}$ & $2.51e{-1}\!\pm\!5.53e{-3}$ & $1.51e{-2}\!\pm\!1.03e{-3}$ & $4.12e{-2}\!\pm\!3.14e{-3}$ \\
2 & $7.30e{-1}\!\pm\!1.41e{-1}$ & $1.49e{+0}\!\pm\!3.84e{-2}$ & $5.52e{-1}\!\pm\!3.67e{-2}$ & $4.28e{-2}\!\pm\!1.87e{-2}$ & $1.43e{-1}\!\pm\!6.38e{-2}$ \\
3 & $1.37e{+0}\!\pm\!8.59e{-1}$ & $2.55e{+0}\!\pm\!9.96e{-1}$ & $7.75e{-1}\!\pm\!2.18e{-1}$ & $4.36e{-2}\!\pm\!1.19e{-2}$ & $1.47e{-1}\!\pm\!4.29e{-2}$ \\
4 & $8.48e{-1}\!\pm\!2.68e{-1}$ & $2.73e{+0}\!\pm\!1.64e{+0}$ & $8.58e{-1}\!\pm\!3.85e{-1}$ & $4.18e{-2}\!\pm\!7.33e{-3}$ & $1.38e{-1}\!\pm\!3.88e{-2}$ \\
5 & $3.12e{+0}\!\pm\!1.10e{+0}$ & $4.41e{+0}\!\pm\!8.47e{-1}$ & $1.17e{+0}\!\pm\!1.97e{-1}$ & $6.01e{-2}\!\pm\!1.21e{-2}$ & $2.60e{-1}\!\pm\!4.65e{-2}$ \\
\bottomrule
\end{tabular}
}
\label{tab:expdecay_p025_degree}
\end{table}

\subsubsection{Exponential Decay ($p=0.5$)}

\begin{table}[H]
\caption{Effect of B-spline degree on Exp-decay performance for $p=0.5$.}
%\vspace{-0.05in}
\centering
\small
\setlength{\tabcolsep}{4pt}
\renewcommand{\arraystretch}{1.05}
\resizebox{1.0\textwidth}{!}{
\begin{tabular}{lccccc}
\toprule
\textbf{Degree} & \textbf{MSE$_\text{PODE}$} & \textbf{MSE$_\text{SDE}$} & \textbf{Wasserstein} & \textbf{MMD} & \textbf{Energy} \\
\midrule
1 & $3.76e{-1}\!\pm\!3.30e{-3}$ & $5.30e{-1}\!\pm\!6.59e{-3}$ & $1.95e{-1}\!\pm\!1.42e{-3}$ & $7.42e{-3}\!\pm\!3.31e{-4}$ & $2.40e{-2}\!\pm\!1.36e{-3}$ \\
2 & $6.44e{-1}\!\pm\!5.12e{-2}$ & $1.80e{+0}\!\pm\!5.27e{-1}$ & $5.92e{-1}\!\pm\!1.49e{-1}$ & $2.92e{-2}\!\pm\!8.18e{-3}$ & $8.90e{-2}\!\pm\!1.62e{-2}$ \\
3 & $6.52e{-1}\!\pm\!9.32e{-2}$ & $1.61e{+0}\!\pm\!9.35e{-2}$ & $5.57e{-1}\!\pm\!4.34e{-2}$ & $2.34e{-2}\!\pm\!9.21e{-3}$ & $7.47e{-2}\!\pm\!2.55e{-2}$ \\
4 & $8.61e{-1}\!\pm\!3.17e{-1}$ & $2.78e{+0}\!\pm\!1.86e{+0}$ & $8.49e{-1}\!\pm\!4.06e{-1}$ & $3.21e{-2}\!\pm\!1.25e{-2}$ & $1.34e{-1}\!\pm\!5.61e{-2}$ \\
5 & $2.75e{+0}\!\pm\!2.96e{+0}$ & $3.53e{+0}\!\pm\!3.21e{+0}$ & $9.04e{-1}\!\pm\!6.04e{-1}$ & $7.77e{-2}\!\pm\!5.26e{-2}$ & $3.93e{-1}\!\pm\!3.49e{-1}$ \\
\bottomrule
\end{tabular}
}
\label{tab:expdecay_p05_degree}
\end{table}

\subsubsection{Exponential Decay ($p=0.75$)}

\begin{table}[H]
\caption{Effect of B-spline degree on Exp-decay performance for $p=0.75$.}
%\vspace{-0.05in}
\centering
\small
\setlength{\tabcolsep}{4pt}
\renewcommand{\arraystretch}{1.05}
\resizebox{1.0\textwidth}{!}{
\begin{tabular}{lccccc}
\toprule
\textbf{Degree} & \textbf{MSE$_\text{PODE}$} & \textbf{MSE$_\text{SDE}$} & \textbf{Wasserstein} & \textbf{MMD} & \textbf{Energy} \\
\midrule
1 & $4.19e{-1}\!\pm\!5.87e{-3}$ & $6.77e{-1}\!\pm\!8.23e{-3}$ & $1.42e{-1}\!\pm\!1.72e{-2}$ & $2.06e{-3}\!\pm\!1.55e{-3}$ & $1.51e{-2}\!\pm\!5.15e{-3}$ \\
2 & $7.83e{-1}\!\pm\!4.41e{-2}$ & $2.24e{+0}\!\pm\!1.00e{+0}$ & $7.53e{-1}\!\pm\!2.53e{-1}$ & $3.91e{-2}\!\pm\!9.25e{-3}$ & $1.43e{-1}\!\pm\!5.00e{-2}$ \\
3 & $9.82e{-1}\!\pm\!7.21e{-2}$ & $5.56e{+0}\!\pm\!3.84e{+0}$ & $1.39e{+0}\!\pm\!6.96e{-1}$ & $4.30e{-2}\!\pm\!1.32e{-2}$ & $1.69e{-1}\!\pm\!3.47e{-2}$ \\
4 & $1.12e{+0}\!\pm\!9.37e{-1}$ & $2.61e{+0}\!\pm\!5.93e{-1}$ & $9.23e{-1}\!\pm\!1.44e{-1}$ & $5.42e{-2}\!\pm\!2.47e{-2}$ & $2.26e{-1}\!\pm\!1.46e{-1}$ \\
5 & $3.18e{+5}\!\pm\!4.50e{+5}$ & $3.03e{+5}\!\pm\!4.23e{+5}$ & $1.99e{+2}\!\pm\!2.47e{+2}$ & $2.38e{-1}\!\pm\!2.83e{-1}$ & $2.61e{+2}\!\pm\!3.68e{+2}$ \\
\bottomrule
\end{tabular}
}
\label{tab:expdecay_p075_degree}
\end{table}

\subsubsection{Lotka--Volterra ($p=0$)}

\begin{table}[H]
\caption{Effect of spline order on Lotka–Volterra performance.}
%\vspace{-0.05in}
\centering
\small
\setlength{\tabcolsep}{4pt}
\renewcommand{\arraystretch}{1.05}
\resizebox{1.0\textwidth}{!}{
\begin{tabular}{lccccc}
\toprule
\textbf{Degree} & \textbf{MSE$_\text{PODE}$} & \textbf{MSE$_\text{SDE}$} & \textbf{Wasserstein} & \textbf{MMD} & \textbf{Energy} \\
\midrule
1 & $4.70e{-1}\!\pm\!9.54e{-2}$ & $4.69e{-1}\!\pm\!9.23e{-2}$ & $4.27e{-1}\!\pm\!7.98e{-2}$ & $1.12e{-1}\!\pm\!6.87e{-2}$ & $4.26e{-1}\!\pm\!2.38e{-1}$ \\
2 & $2.68e{-1}\!\pm\!1.44e{-2}$ & $2.73e{-1}\!\pm\!1.37e{-2}$ & $2.94e{-1}\!\pm\!3.16e{-3}$ & $4.21e{-2}\!\pm\!3.45e{-3}$ & $1.56e{-1}\!\pm\!1.33e{-2}$ \\
3 & $3.03e{-1}\!\pm\!6.56e{-3}$ & $3.03e{-1}\!\pm\!1.10e{-2}$ & $3.21e{-1}\!\pm\!8.22e{-3}$ & $4.64e{-2}\!\pm\!3.21e{-3}$ & $1.74e{-1}\!\pm\!1.36e{-2}$ \\
4 & $3.47e{-1}\!\pm\!3.19e{-2}$ & $3.47e{-1}\!\pm\!3.10e{-2}$ & $3.60e{-1}\!\pm\!2.25e{-2}$ & $6.97e{-2}\!\pm\!1.63e{-2}$ & $2.60e{-1}\!\pm\!5.59e{-2}$ \\
5 & $3.28e{-1}\!\pm\!2.62e{-2}$ & $3.30e{-1}\!\pm\!2.62e{-2}$ & $3.50e{-1}\!\pm\!3.36e{-2}$ & $6.87e{-2}\!\pm\!2.85e{-2}$ & $2.52e{-1}\!\pm\!9.45e{-2}$ \\
\bottomrule
\end{tabular}
}
\label{tab:lv_order_ablation}
\end{table}

\subsubsection{Lotka--Volterra ($p=0.25$)}

\begin{table}[H]
\caption{Effect of B-spline degree on Lotka--Volterra performance for $p=0.25$.}
%\vspace{-0.05in}
\centering
\small
\setlength{\tabcolsep}{4pt}
\renewcommand{\arraystretch}{1.05}
\resizebox{1.0\textwidth}{!}{
\begin{tabular}{lccccc}
\toprule
\textbf{Degree} & \textbf{MSE$_\text{PODE}$} & \textbf{MSE$_\text{SDE}$} & \textbf{Wasserstein} & \textbf{MMD} & \textbf{Energy} \\
\midrule
1 & $5.22e{-1}\!\pm\!1.72e{-1}$ & $5.24e{-1}\!\pm\!1.71e{-1}$ & $4.56e{-1}\!\pm\!1.33e{-1}$ & $1.32e{-1}\!\pm\!1.11e{-1}$ & $4.89e{-1}\!\pm\!3.77e{-1}$ \\
2 & $3.40e{-1}\!\pm\!5.04e{-2}$ & $3.38e{-1}\!\pm\!4.76e{-2}$ & $3.20e{-1}\!\pm\!1.01e{-2}$ & $4.57e{-2}\!\pm\!2.87e{-3}$ & $1.82e{-1}\!\pm\!1.18e{-2}$ \\
3 & $3.34e{-1}\!\pm\!5.05e{-2}$ & $3.39e{-1}\!\pm\!4.86e{-2}$ & $3.37e{-1}\!\pm\!1.39e{-2}$ & $6.22e{-2}\!\pm\!1.12e{-2}$ & $2.47e{-1}\!\pm\!5.83e{-2}$ \\
4 & $3.18e{-1}\!\pm\!3.58e{-2}$ & $3.21e{-1}\!\pm\!3.21e{-2}$ & $3.19e{-1}\!\pm\!1.21e{-2}$ & $4.45e{-2}\!\pm\!5.03e{-3}$ & $1.75e{-1}\!\pm\!2.60e{-2}$ \\
5 & $2.81e{-1}\!\pm\!1.70e{-2}$ & $2.83e{-1}\!\pm\!1.54e{-2}$ & $3.19e{-1}\!\pm\!5.33e{-3}$ & $4.83e{-2}\!\pm\!3.97e{-3}$ & $1.83e{-1}\!\pm\!1.57e{-2}$ \\
\bottomrule
\end{tabular}
}
\label{tab:lv_p025_degree}
\end{table}

\subsubsection{Lotka--Volterra ($p=0.5$)}

\begin{table}[H]
\caption{Effect of B-spline degree on Lotka--Volterra performance for $p=0.5$.}
%\vspace{-0.05in}
\centering
\small
\setlength{\tabcolsep}{4pt}
\renewcommand{\arraystretch}{1.05}
\resizebox{1.0\textwidth}{!}{
\begin{tabular}{lccccc}
\toprule
\textbf{Degree} & \textbf{MSE$_\text{PODE}$} & \textbf{MSE$_\text{SDE}$} & \textbf{Wasserstein} & \textbf{MMD} & \textbf{Energy} \\
\midrule
1 & $3.82e{-1}\!\pm\!6.10e{-2}$ & $3.83e{-1}\!\pm\!5.94e{-2}$ & $3.70e{-1}\!\pm\!1.68e{-2}$ & $7.09e{-2}\!\pm\!4.52e{-3}$ & $2.75e{-1}\!\pm\!1.38e{-2}$ \\
2 & $4.11e{-1}\!\pm\!4.68e{-2}$ & $4.15e{-1}\!\pm\!4.79e{-2}$ & $3.74e{-1}\!\pm\!3.81e{-2}$ & $6.24e{-2}\!\pm\!9.27e{-3}$ & $2.59e{-1}\!\pm\!3.66e{-2}$ \\
3 & $2.88e{-1}\!\pm\!2.32e{-2}$ & $2.85e{-1}\!\pm\!2.54e{-2}$ & $3.10e{-1}\!\pm\!1.16e{-2}$ & $4.42e{-2}\!\pm\!3.80e{-3}$ & $1.64e{-1}\!\pm\!2.32e{-2}$ \\
4 & $3.70e{-1}\!\pm\!1.10e{-1}$ & $3.72e{-1}\!\pm\!1.11e{-1}$ & $3.48e{-1}\!\pm\!4.99e{-2}$ & $5.39e{-2}\!\pm\!2.09e{-2}$ & $2.22e{-1}\!\pm\!1.02e{-1}$ \\
5 & $3.19e{-1}\!\pm\!2.96e{-2}$ & $3.23e{-1}\!\pm\!3.05e{-2}$ & $3.11e{-1}\!\pm\!1.57e{-2}$ & $4.50e{-2}\!\pm\!2.94e{-3}$ & $1.79e{-1}\!\pm\!7.45e{-3}$ \\
\bottomrule
\end{tabular}
}
\label{tab:lv_p05_degree}
\end{table}

\subsubsection{Lotka--Volterra ($p=0.75$)}

\begin{table}[H]
\caption{Effect of B-spline degree on Lotka--Volterra performance for $p=0.75$.}
%\vspace{-0.05in}
\centering
\small
\setlength{\tabcolsep}{4pt}
\renewcommand{\arraystretch}{1.05}
\resizebox{1.0\textwidth}{!}{
\begin{tabular}{lccccc}
\toprule
\textbf{Degree} & \textbf{MSE$_\text{PODE}$} & \textbf{MSE$_\text{SDE}$} & \textbf{Wasserstein} & \textbf{MMD} & \textbf{Energy} \\
\midrule
1 & $4.98e{-1}\!\pm\!1.36e{-1}$ & $4.99e{-1}\!\pm\!1.35e{-1}$ & $4.53e{-1}\!\pm\!4.31e{-2}$ & $1.21e{-1}\!\pm\!1.03e{-2}$ & $4.53e{-1}\!\pm\!7.16e{-2}$ \\
2 & $4.30e{-1}\!\pm\!7.67e{-2}$ & $4.33e{-1}\!\pm\!7.35e{-2}$ & $3.97e{-1}\!\pm\!6.21e{-2}$ & $8.44e{-2}\!\pm\!3.38e{-2}$ & $3.30e{-1}\!\pm\!1.15e{-1}$ \\
3 & $3.77e{-1}\!\pm\!5.73e{-2}$ & $3.78e{-1}\!\pm\!5.48e{-2}$ & $3.61e{-1}\!\pm\!4.04e{-2}$ & $5.57e{-2}\!\pm\!9.43e{-3}$ & $2.23e{-1}\!\pm\!4.77e{-2}$ \\
4 & $4.49e{-1}\!\pm\!1.40e{-1}$ & $4.43e{-1}\!\pm\!1.29e{-1}$ & $3.55e{-1}\!\pm\!4.84e{-2}$ & $6.56e{-2}\!\pm\!1.72e{-2}$ & $2.75e{-1}\!\pm\!9.15e{-2}$ \\
5 & $6.03e{-1}\!\pm\!2.33e{-1}$ & $5.98e{-1}\!\pm\!2.31e{-1}$ & $4.60e{-1}\!\pm\!7.29e{-2}$ & $1.26e{-1}\!\pm\!4.30e{-2}$ & $5.06e{-1}\!\pm\!1.91e{-1}$ \\
\bottomrule
\end{tabular}
}
\label{tab:lv_p075_degree}
\end{table}

\subsubsection{Damped Harmonic Oscillator ($p=0$)}

\begin{table}[H]
\caption{Effect of spline order on damped harmonic oscillator performance.}
%\vspace{-0.05in}
\centering
\small
\setlength{\tabcolsep}{4pt}
\renewcommand{\arraystretch}{1.05}
\resizebox{1.0\textwidth}{!}{
\begin{tabular}{lccccc}
\toprule
\textbf{Degree} & \textbf{MSE$_\text{PODE}$} & \textbf{MSE$_\text{SDE}$} & \textbf{Wasserstein} & \textbf{MMD} & \textbf{Energy} \\
\midrule
1 & $8.95e{-1}\!\pm\!5.12e{-3}$ & $9.70e{-1}\!\pm\!9.41e{-3}$ & $4.10e{-1}\!\pm\!4.41e{-3}$ & $4.19e{-2}\!\pm\!1.54e{-3}$ & $1.12e{-1}\!\pm\!7.12e{-3}$ \\
2 & $1.78e{+0}\!\pm\!1.96e{-1}$ & $8.02e{+0}\!\pm\!4.45e{+0}$ & $1.58e{+0}\!\pm\!6.03e{-1}$ & $7.48e{-3}\!\pm\!1.30e{-3}$ & $8.23e{-2}\!\pm\!3.24e{-2}$ \\
3 & $5.51e{+0}\!\pm\!2.89e{+0}$ & $1.30e{+1}\!\pm\!7.52e{+0}$ & $2.07e{+0}\!\pm\!6.98e{-1}$ & $2.08e{-2}\!\pm\!1.41e{-2}$ & $2.06e{-1}\!\pm\!1.72e{-1}$ \\
4 & $9.23e{+0}\!\pm\!2.11e{+0}$ & $1.56e{+1}\!\pm\!2.84e{+0}$ & $2.43e{+0}\!\pm\!2.60e{-1}$ & $1.69e{-2}\!\pm\!7.13e{-3}$ & $1.90e{-1}\!\pm\!4.45e{-2}$ \\
5 & $8.53e{+0}\!\pm\!5.52e{+0}$ & $2.21e{+1}\!\pm\!1.21e{+1}$ & $3.04e{+0}\!\pm\!7.89e{-1}$ & $2.13e{-2}\!\pm\!1.77e{-3}$ & $2.67e{-1}\!\pm\!1.09e{-1}$ \\
\bottomrule
\end{tabular}
}
\label{tab:damped_order_ablation}
\end{table}

\subsubsection{Damped Harmonic Oscillator ($p=0.25$)}

\begin{table}[H]
\caption{Effect of B-spline degree on Damped-Harmonic performance for $p=0.25$.}
%\vspace{-0.05in}
\centering
\small
\setlength{\tabcolsep}{4pt}
\renewcommand{\arraystretch}{1.05}
\resizebox{1.0\textwidth}{!}{
\begin{tabular}{lccccc}
\toprule
\textbf{Degree} & \textbf{MSE$_\text{PODE}$} & \textbf{MSE$_\text{SDE}$} & \textbf{Wasserstein} & \textbf{MMD} & \textbf{Energy} \\
\midrule
1 & $7.99e{-1}\!\pm\!1.57e{-2}$ & $9.14e{-1}\!\pm\!2.28e{-2}$ & $3.65e{-1}\!\pm\!2.11e{-2}$ & $3.42e{-2}\!\pm\!4.34e{-3}$ & $9.17e{-2}\!\pm\!1.07e{-2}$ \\
2 & $2.22e{+0}\!\pm\!8.18e{-1}$ & $3.48e{+0}\!\pm\!7.77e{-1}$ & $8.66e{-1}\!\pm\!1.83e{-1}$ & $1.43e{-2}\!\pm\!2.24e{-3}$ & $9.77e{-2}\!\pm\!7.13e{-3}$ \\
3 & $1.49e{+0}\!\pm\!3.66e{-2}$ & $6.24e{+0}\!\pm\!2.46e{+0}$ & $1.33e{+0}\!\pm\!4.36e{-1}$ & $7.18e{-3}\!\pm\!3.63e{-3}$ & $7.58e{-2}\!\pm\!2.36e{-2}$ \\
4 & $4.11e{+0}\!\pm\!8.82e{-1}$ & $9.91e{+0}\!\pm\!3.36e{+0}$ & $1.76e{+0}\!\pm\!2.72e{-1}$ & $2.62e{-2}\!\pm\!2.77e{-3}$ & $1.96e{-1}\!\pm\!2.59e{-2}$ \\
5 & $7.85e{+0}\!\pm\!3.86e{+0}$ & $1.52e{+1}\!\pm\!7.33e{+0}$ & $2.34e{+0}\!\pm\!7.83e{-1}$ & $3.32e{-2}\!\pm\!2.33e{-2}$ & $3.66e{-1}\!\pm\!1.77e{-1}$ \\
\bottomrule
\end{tabular}
}
\label{tab:damped_p025_degree}
\end{table}

\subsubsection{Damped Harmonic Oscillator ($p=0.5$)}

\begin{table}[H]
\caption{Effect of B-spline degree on Damped-Harmonic performance for $p=0.5$.}
%\vspace{-0.05in}
\centering
\small
\setlength{\tabcolsep}{4pt}
\renewcommand{\arraystretch}{1.05}
\resizebox{1.0\textwidth}{!}{
\begin{tabular}{lccccc}
\toprule
\textbf{Degree} & \textbf{MSE$_\text{PODE}$} & \textbf{MSE$_\text{SDE}$} & \textbf{Wasserstein} & \textbf{MMD} & \textbf{Energy} \\
\midrule
1 & $8.05e{-1}\!\pm\!8.73e{-3}$ & $9.81e{-1}\!\pm\!3.99e{-2}$ & $3.21e{-1}\!\pm\!2.84e{-2}$ & $2.18e{-2}\!\pm\!5.59e{-3}$ & $6.87e{-2}\!\pm\!1.21e{-2}$ \\
2 & $1.61e{+0}\!\pm\!4.11e{-1}$ & $9.25e{+0}\!\pm\!7.31e{+0}$ & $1.52e{+0}\!\pm\!9.81e{-1}$ & $1.23e{-2}\!\pm\!7.06e{-3}$ & $1.04e{-1}\!\pm\!6.55e{-2}$ \\
3 & $1.34e{+0}\!\pm\!2.35e{-1}$ & $3.23e{+0}\!\pm\!1.62e{+0}$ & $7.52e{-1}\!\pm\!3.50e{-1}$ & $1.05e{-2}\!\pm\!4.37e{-3}$ & $7.49e{-2}\!\pm\!2.28e{-2}$ \\
4 & $1.64e{+0}\!\pm\!4.51e{-1}$ & $1.38e{+1}\!\pm\!1.04e{+1}$ & $2.08e{+0}\!\pm\!1.07e{+0}$ & $1.02e{-2}\!\pm\!3.90e{-3}$ & $9.98e{-2}\!\pm\!4.27e{-2}$ \\
5 & $4.26e{+0}\!\pm\!2.48e{+0}$ & $2.05e{+1}\!\pm\!2.01e{+1}$ & $2.71e{+0}\!\pm\!1.93e{+0}$ & $5.87e{-3}\!\pm\!2.05e{-3}$ & $6.62e{-2}\!\pm\!2.93e{-2}$ \\
\bottomrule
\end{tabular}
}
\label{tab:damped_p05_degree}
\end{table}

\subsubsection{Damped Harmonic Oscillator ($p=0.75$)}

\begin{table}[H]
\caption{Effect of B-spline degree on Damped-Harmonic performance for $p=0.75$.}
%\vspace{-0.05in}
\centering
\small
\setlength{\tabcolsep}{4pt}
\renewcommand{\arraystretch}{1.05}
\resizebox{1.0\textwidth}{!}{
\begin{tabular}{lccccc}
\toprule
\textbf{Degree} & \textbf{MSE$_\text{PODE}$} & \textbf{MSE$_\text{SDE}$} & \textbf{Wasserstein} & \textbf{MMD} & \textbf{Energy} \\
\midrule
1 & $9.38e{-1}\!\pm\!2.89e{-2}$ & $1.17e{+0}\!\pm\!1.17e{-2}$ & $2.86e{-1}\!\pm\!1.10e{-2}$ & $1.24e{-2}\!\pm\!7.65e{-4}$ & $5.28e{-2}\!\pm\!4.89e{-3}$ \\
2 & $2.24e{+0}\!\pm\!6.32e{-1}$ & $7.16e{+0}\!\pm\!5.03e{-1}$ & $1.56e{+0}\!\pm\!1.18e{-1}$ & $1.29e{-2}\!\pm\!4.25e{-3}$ & $1.02e{-1}\!\pm\!3.60e{-2}$ \\
3 & $3.34e{+0}\!\pm\!2.64e{+0}$ & $8.62e{+0}\!\pm\!5.79e{+0}$ & $1.43e{+0}\!\pm\!5.99e{-1}$ & $1.13e{-2}\!\pm\!1.98e{-3}$ & $9.04e{-2}\!\pm\!1.63e{-2}$ \\
4 & $7.48e{+0}\!\pm\!8.12e{+0}$ & $3.67e{+1}\!\pm\!2.28e{+1}$ & $3.83e{+0}\!\pm\!1.39e{+0}$ & $3.72e{-2}\!\pm\!3.14e{-2}$ & $2.98e{-1}\!\pm\!2.59e{-1}$ \\
5 & $8.79e{+1}\!\pm\!1.22e{+2}$ & $4.87e{+1}\!\pm\!5.88e{+1}$ & $3.11e{+0}\!\pm\!2.19e{+0}$ & $3.30e{-2}\!\pm\!3.02e{-2}$ & $3.06e{-1}\!\pm\!3.25e{-1}$ \\
\bottomrule
\end{tabular}
}
\label{tab:damped_p075_degree}
\end{table}

\subsection{Chaotic Lorenz Systems}
In this section, we conduct ablation studies on the Lorenz system, a highly chaotic system.
We observe that higher-degree interpolants are especially effective for modeling stochastic chaotic dynamics, whereas all degree $m>2$ variants perform better than the linear baseline in the deterministic cases.

\subsubsection{Lorenz ODE ($p=0$)}

\begin{table}[H]
\caption{Lorenz system results for regularly sampled observations $p=0$.}
%\vspace{-0.05in}
\centering
\small
\begin{tabular}{lc}
\toprule
\textbf{Degree} & \textbf{MSE} \\
\midrule
1 & $1.380 \pm 1.890\mathrm{e}{-02}$ \\
2 & $6.800\mathrm{e}{-01} \pm 1.520\mathrm{e}{-02}$ \\
3 & $6.400\mathrm{e}{-01} \pm 3.790\mathrm{e}{-03}$ \\
4 & $7.800\mathrm{e}{-01} \pm 1.320\mathrm{e}{-01}$ \\
5 & $9.200\mathrm{e}{-01} \pm 7.360\mathrm{e}{-02}$ \\

\bottomrule
\end{tabular}
\label{tab:lorenz_order0}
\end{table}

\subsubsection{Lorenz SDE ($p=0$)}

\begin{table}[H]
\caption{Effect of spline order on stochastic Lorenz system performance.}
%\vspace{-0.05in}
\centering
\small
\setlength{\tabcolsep}{4pt}
\renewcommand{\arraystretch}{1.05}
\resizebox{1.0\textwidth}{!}{
\begin{tabular}{lccccc}
\toprule
\textbf{Degree} & \textbf{MSE$_\text{PODE}$} & \textbf{MSE$_\text{SDE}$} & \textbf{Wasserstein} & \textbf{MMD} & \textbf{Energy} \\
\midrule
1 & $2.21e{+0}\!\pm\!9.71e{-3}$ & $2.21e{+0}\!\pm\!9.67e{-3}$ & $6.31e{-1}\!\pm\!2.03e{-3}$ & $1.30e{-1}\!\pm\!1.22e{-3}$ & $4.01e{-1}\!\pm\!3.50e{-3}$ \\
2 & $2.19e{+0}\!\pm\!1.38e{-2}$ & $2.19e{+0}\!\pm\!1.88e{-2}$ & $6.30e{-1}\!\pm\!1.90e{-3}$ & $1.29e{-1}\!\pm\!2.21e{-4}$ & $3.98e{-1}\!\pm\!1.52e{-3}$ \\
3 & $4.82e{+0}\!\pm\!4.96e{+0}$ & $3.71e{+0}\!\pm\!3.37e{+0}$ & $9.31e{-1}\!\pm\!9.85e{-1}$ & $7.44e{-3}\!\pm\!6.53e{-3}$ & $5.61e{-2}\!\pm\!3.44e{-2}$ \\
4 & $1.23e{+0}\!\pm\!4.01e{-2}$ & $1.23e{+0}\!\pm\!5.62e{-2}$ & $1.80e{-1}\!\pm\!1.47e{-2}$ & $4.51e{-3}\!\pm\!1.79e{-3}$ & $3.66e{-2}\!\pm\!7.65e{-3}$ \\
5 & $1.36e{+0}\!\pm\!6.41e{-2}$ & $1.50e{+0}\!\pm\!2.46e{-1}$ & $3.07e{-1}\!\pm\!1.49e{-1}$ & $6.44e{-3}\!\pm\!1.73e{-3}$ & $4.47e{-2}\!\pm\!1.08e{-2}$ \\
\bottomrule
\end{tabular}
}
\label{tab:lorenz_order_ablation}
\end{table}

\subsection{Experiments on Transcriptomics Datasets}
\label{apdx:PHATE cellular dynamics}

\subsubsection{Brain Regeneration}
From the ablation study, we observe that higher-degree variants perform especially well with PHATE embeddings. While the degree $m=2$ is often the optimal variant for PCA embedding. This aligns with PHATE embeddings, which are designed to capture the structural organization of datasets and thus benefit from B-splines. 

\subsubsection{PHATE Embedding: Interpolation}

\begin{table}[H]
\caption{Effect of B-spline degree on PHATE interpolation performance.}
%\vspace{-0.05in}
\centering
\small
\setlength{\tabcolsep}{4pt}
\renewcommand{\arraystretch}{1.05}
\begin{tabular}{lccc}
\toprule
\textbf{Degree} & \textbf{ Wasserstein} & \textbf{ MMD} & \textbf{ Energy} \\
\midrule
1 & $7.714\mathrm{e}{-01} \pm 1.250\mathrm{e}{-02}$ & $4.813\mathrm{e}{-01} \pm 1.740\mathrm{e}{-02}$ & $1.391\mathrm{e}{+00} \pm 7.250\mathrm{e}{-02}$ \\
2 & $6.421\mathrm{e}{-01} \pm 5.280\mathrm{e}{-02}$ & $3.088\mathrm{e}{-01} \pm 2.570\mathrm{e}{-02}$ & $9.172\mathrm{e}{-01} \pm 9.580\mathrm{e}{-02}$ \\
3 & $6.519\mathrm{e}{-01} \pm 2.100\mathrm{e}{-02}$ & $3.174\mathrm{e}{-01} \pm 5.200\mathrm{e}{-03}$ & $9.277\mathrm{e}{-01} \pm 2.030\mathrm{e}{-02}$ \\
4 & $6.548\mathrm{e}{-01} \pm 2.930\mathrm{e}{-02}$ & $3.120\mathrm{e}{-01} \pm 2.020\mathrm{e}{-02}$ & $9.189\mathrm{e}{-01} \pm 7.080\mathrm{e}{-02}$ \\
5 & $6.066\mathrm{e}{-01} \pm 2.710\mathrm{e}{-02}$ & $2.846\mathrm{e}{-01} \pm 6.600\mathrm{e}{-03}$ & $8.152\mathrm{e}{-01} \pm 2.140\mathrm{e}{-02}$ \\
\bottomrule
\end{tabular}
\label{tab:phate_interp_degree}
\end{table}

\subsubsection{PHATE Embedding: Extrapolation}

\begin{table}[H]
\caption{Effect of B-spline degree on PHATE extrapolation performance.}
%\vspace{-0.05in}
\centering
\small
\setlength{\tabcolsep}{4pt}
\renewcommand{\arraystretch}{1.05}
\begin{tabular}{lccc}
\toprule
\textbf{Degree} & \textbf{ Wasserstein} & \textbf{MMD} & \textbf{ Energy} \\
\midrule
1 & $1.588 \pm 2.270\mathrm{e}{-02}$ & $9.796\mathrm{e}{-01} \pm 4.000\mathrm{e}{-03}$ & $3.843 \pm 3.700\mathrm{e}{-02}$ \\
2 & $1.584 \pm 3.670\mathrm{e}{-02}$ & $9.654\mathrm{e}{-01} \pm 6.300\mathrm{e}{-03}$ & $3.787 \pm 6.170\mathrm{e}{-02}$ \\
3 & $1.587 \pm 1.700\mathrm{e}{-02}$ & $9.623\mathrm{e}{-01} \pm 2.700\mathrm{e}{-03}$ & $3.761 \pm 2.500\mathrm{e}{-02}$ \\
4 & $1.536 \pm 1.920\mathrm{e}{-02}$ & $9.550\mathrm{e}{-01} \pm 8.700\mathrm{e}{-03}$ & $3.740 \pm 4.240\mathrm{e}{-02}$ \\
5 & $1.545 \pm 2.720\mathrm{e}{-02}$ & $9.644\mathrm{e}{-01} \pm 3.200\mathrm{e}{-03}$ & $3.751 \pm 1.290\mathrm{e}{-02}$ \\
\bottomrule
\end{tabular}
\label{tab:phate_extra_degree}
\end{table}

\subsubsection{PCA Embedding: Interpolation}

\begin{table}[H]
\caption{Effect of B-spline degree on PCA interpolation performance.}
%\vspace{-0.05in}
\centering
\small
\setlength{\tabcolsep}{4pt}
\renewcommand{\arraystretch}{1.05}
\begin{tabular}{lccc}
\toprule
\textbf{Degree} & \textbf{ Wasserstein} & \textbf{ MMD} & \textbf{ Energy} \\
\midrule
1 & $8.719\mathrm{e}{-01} \pm 3.180\mathrm{e}{-02}$ & $6.320\mathrm{e}{-02} \pm 3.600\mathrm{e}{-03}$ & $1.120 \pm 4.510\mathrm{e}{-02}$ \\
2 & $8.380\mathrm{e}{-01} \pm 1.370\mathrm{e}{-02}$ & $5.820\mathrm{e}{-02} \pm 2.500\mathrm{e}{-03}$ & $1.019 \pm 3.930\mathrm{e}{-02}$ \\
3 & $8.346\mathrm{e}{-01} \pm 1.670\mathrm{e}{-02}$ & $6.030\mathrm{e}{-02} \pm 1.200\mathrm{e}{-03}$ & $1.035 \pm 1.660\mathrm{e}{-02}$ \\
4 & $8.502\mathrm{e}{-01} \pm 1.110\mathrm{e}{-02}$ & $6.140\mathrm{e}{-02} \pm 3.000\mathrm{e}{-03}$ & $1.063 \pm 3.660\mathrm{e}{-02}$ \\
5 & $8.512\mathrm{e}{-01} \pm 1.940\mathrm{e}{-02}$ & $6.150\mathrm{e}{-02} \pm 2.600\mathrm{e}{-03}$ & $1.063 \pm 3.940\mathrm{e}{-02}$ \\
\bottomrule
\end{tabular}
\label{tab:pca_interp_degree}
\end{table}

\subsubsection{PCA Embedding: Extrapolation}

\begin{table}[H]
\caption{Effect of B-spline degree on PCA extrapolation performance.}
%\vspace{-0.05in}
\centering
\small
\setlength{\tabcolsep}{4pt}
\renewcommand{\arraystretch}{1.05}
\begin{tabular}{lccc}
\toprule
\textbf{Degree} & \textbf{ Wasserstein} & \textbf{ MMD} & \textbf{ Energy} \\
\midrule
1 & $8.434\mathrm{e}{-01} \pm 2.500\mathrm{e}{-02}$ & $1.730\mathrm{e}{-01} \pm 8.800\mathrm{e}{-03}$ & $1.869 \pm 2.100\mathrm{e}{-02}$ \\
2 & $8.040\mathrm{e}{-01} \pm 3.720\mathrm{e}{-02}$ & $1.292\mathrm{e}{-01} \pm 6.800\mathrm{e}{-03}$ & $1.645 \pm 5.660\mathrm{e}{-02}$ \\
3 & $8.622\mathrm{e}{-01} \pm 4.030\mathrm{e}{-02}$ & $1.415\mathrm{e}{-01} \pm 7.900\mathrm{e}{-03}$ & $1.738 \pm 7.230\mathrm{e}{-02}$ \\
4 & $8.206\mathrm{e}{-01} \pm 4.100\mathrm{e}{-02}$ & $1.355\mathrm{e}{-01} \pm 1.180\mathrm{e}{-02}$ & $1.736 \pm 8.500\mathrm{e}{-02}$ \\
5 & $8.234\mathrm{e}{-01} \pm 1.070\mathrm{e}{-02}$ & $1.336\mathrm{e}{-01} \pm 1.310\mathrm{e}{-02}$ & $1.690 \pm 1.068\mathrm{e}{-01}$ \\
\bottomrule
\end{tabular}
\label{tab:pca_extra_degree}
\end{table}

\subsubsection{Embryoid Evolution}

From the ablation study, we observe that, in almost all cases, the better-performing variant degree is greater than linear for both PHATE and PCA embeddings. The higher-degree variants are especially suitable for extrapolation settings with PCA embeddings. 

\subsubsection{PHATE Embedding: Interpolation}

\begin{table}[H]
\caption{Effect of B-spline degree on PHATE interpolation performance.}
%\vspace{-0.05in}
\centering
\small
\setlength{\tabcolsep}{4pt}
\renewcommand{\arraystretch}{1.05}
\begin{tabular}{lccc}
\toprule
\textbf{Degree} & \textbf{ Wasserstein} & \textbf{ MMD} & \textbf{ Energy} \\
\midrule
1 & $2.355\mathrm{e}{-01} \pm 3.280\mathrm{e}{-02}$ & $2.783\mathrm{e}{-02} \pm 5.360\mathrm{e}{-03}$ & $7.954\mathrm{e}{-02} \pm 1.660\mathrm{e}{-02}$ \\
2 & $2.745\mathrm{e}{-01} \pm 3.700\mathrm{e}{-02}$ & $1.768\mathrm{e}{-02} \pm 3.220\mathrm{e}{-03}$ & $6.331\mathrm{e}{-02} \pm 1.540\mathrm{e}{-02}$ \\
3 & $2.071\mathrm{e}{-01} \pm 1.690\mathrm{e}{-02}$ & $1.260\mathrm{e}{-02} \pm 5.870\mathrm{e}{-04}$ & $3.963\mathrm{e}{-02} \pm 1.400\mathrm{e}{-03}$ \\
4 & $2.560\mathrm{e}{-01} \pm 6.300\mathrm{e}{-02}$ & $1.592\mathrm{e}{-02} \pm 4.450\mathrm{e}{-03}$ & $5.462\mathrm{e}{-02} \pm 1.620\mathrm{e}{-02}$ \\
5 & $2.217\mathrm{e}{-01} \pm 3.530\mathrm{e}{-02}$ & $1.408\mathrm{e}{-02} \pm 3.100\mathrm{e}{-03}$ & $4.467\mathrm{e}{-02} \pm 9.410\mathrm{e}{-03}$ \\
\bottomrule
\end{tabular}
\label{tab:phate_interp_degree2}
\end{table}

\subsubsection{PHATE Embedding: Extrapolation}

\begin{table}[H]
\caption{Effect of B-spline degree on PHATE extrapolation performance.}
%\vspace{-0.05in}
\centering
\small
\setlength{\tabcolsep}{4pt}
\renewcommand{\arraystretch}{1.05}
\begin{tabular}{lccc}
\toprule
\textbf{Degree} & \textbf{ Wasserstein} & \textbf{ MMD} & \textbf{ Energy} \\
\midrule
1 & $5.765\mathrm{e}{-01} \pm 5.860\mathrm{e}{-02}$ & $7.572\mathrm{e}{-02} \pm 1.110\mathrm{e}{-02}$ & $2.833\mathrm{e}{-01} \pm 4.370\mathrm{e}{-02}$ \\
2 & $4.483\mathrm{e}{-01} \pm 4.410\mathrm{e}{-02}$ & $6.902\mathrm{e}{-02} \pm 1.700\mathrm{e}{-03}$ & $2.073\mathrm{e}{-01} \pm 2.960\mathrm{e}{-02}$ \\
3 & $4.774\mathrm{e}{-01} \pm 6.720\mathrm{e}{-02}$ & $6.540\mathrm{e}{-02} \pm 9.270\mathrm{e}{-03}$ & $2.367\mathrm{e}{-01} \pm 5.200\mathrm{e}{-02}$ \\
4 & $5.502\mathrm{e}{-01} \pm 7.380\mathrm{e}{-02}$ & $7.318\mathrm{e}{-02} \pm 1.650\mathrm{e}{-02}$ & $2.916\mathrm{e}{-01} \pm 8.900\mathrm{e}{-02}$ \\
5 & $4.733\mathrm{e}{-01} \pm 6.450\mathrm{e}{-02}$ & $7.417\mathrm{e}{-02} \pm 3.690\mathrm{e}{-03}$ & $2.423\mathrm{e}{-01} \pm 2.410\mathrm{e}{-02}$ \\

\bottomrule
\end{tabular}
\label{tab:phate_extra_degree2}
\end{table}

\subsubsection{PCA Embedding: Interpolation}

\begin{table}[H]
\caption{Effect of B-spline degree on PCA interpolation performance.}
%\vspace{-0.05in}
\centering
\small
\setlength{\tabcolsep}{4pt}
\renewcommand{\arraystretch}{1.05}
\begin{tabular}{lccc}
\toprule
\textbf{Degree} & \textbf{ Wasserstein} & \textbf{ MMD} & \textbf{ Energy} \\
\midrule
1 & $3.008\mathrm{e}{-01} \pm 6.130\mathrm{e}{-03}$ & $1.341\mathrm{e}{-02} \pm 7.060\mathrm{e}{-04}$ & $1.913\mathrm{e}{-01} \pm 8.150\mathrm{e}{-03}$ \\
2 & $3.073\mathrm{e}{-01} \pm 1.260\mathrm{e}{-02}$ & $1.231\mathrm{e}{-02} \pm 5.310\mathrm{e}{-04}$ & $2.070\mathrm{e}{-01} \pm 9.860\mathrm{e}{-03}$ \\
3 & $3.080\mathrm{e}{-01} \pm 1.510\mathrm{e}{-02}$ & $1.328\mathrm{e}{-02} \pm 1.680\mathrm{e}{-03}$ & $2.134\mathrm{e}{-01} \pm 2.560\mathrm{e}{-02}$ \\
4 & $2.959\mathrm{e}{-01} \pm 1.460\mathrm{e}{-02}$ & $1.267\mathrm{e}{-02} \pm 3.110\mathrm{e}{-04}$ & $1.994\mathrm{e}{-01} \pm 9.340\mathrm{e}{-03}$ \\
5 & $3.248\mathrm{e}{-01} \pm 9.690\mathrm{e}{-03}$ & $1.358\mathrm{e}{-02} \pm 4.650\mathrm{e}{-04}$ & $2.230\mathrm{e}{-01} \pm 1.100\mathrm{e}{-02}$ \\

\bottomrule
\end{tabular}
\label{tab:pca_interp_degree2}
\end{table}

\subsubsection{PCA Embedding: Extrapolation}

\begin{table}[H]
\caption{Effect of B-spline degree on PCA extrapolation performance.}
%\vspace{-0.05in}
\centering
\small
\setlength{\tabcolsep}{4pt}
\renewcommand{\arraystretch}{1.05}
\begin{tabular}{lccc}
\toprule
\textbf{Degree} & \textbf{ Wasserstein} & \textbf{ MMD} & \textbf{ Energy} \\
\midrule
1 & $5.652\mathrm{e}{-01} \pm 2.770\mathrm{e}{-02}$ & $3.352\mathrm{e}{-02} \pm 1.930\mathrm{e}{-03}$ & $6.546\mathrm{e}{-01} \pm 5.310\mathrm{e}{-02}$ \\
2 & $5.861\mathrm{e}{-01} \pm 2.590\mathrm{e}{-02}$ & $3.790\mathrm{e}{-02} \pm 2.500\mathrm{e}{-03}$ & $7.244\mathrm{e}{-01} \pm 5.470\mathrm{e}{-02}$ \\
3 & $5.387\mathrm{e}{-01} \pm 3.350\mathrm{e}{-02}$ & $3.583\mathrm{e}{-02} \pm 4.420\mathrm{e}{-03}$ & $6.895\mathrm{e}{-01} \pm 1.010\mathrm{e}{-01}$ \\
4 & $5.389\mathrm{e}{-01} \pm 2.420\mathrm{e}{-02}$ & $3.469\mathrm{e}{-02} \pm 2.410\mathrm{e}{-03}$ & $6.588\mathrm{e}{-01} \pm 5.070\mathrm{e}{-02}$ \\
5 & $5.871\mathrm{e}{-01} \pm 1.240\mathrm{e}{-02}$ & $3.842\mathrm{e}{-02} \pm 3.320\mathrm{e}{-03}$ & $7.504\mathrm{e}{-01} \pm 5.620\mathrm{e}{-02}$ \\

\bottomrule
\end{tabular}
\label{tab:pca_extra_degree2}
\end{table}

\end{document}